\newtheorem{thm}{Theorem}
\newtheorem{defn}{Definition}
\newtheorem{lem}{Lemma}
\newcommand{\lpart}{y}
\newcommand{\Lpart}{Y}
\newcommand{\Lspace}{\mathcal{Y}}
\newcommand{\tphi}{\tilde{\phi}}
\newcommand{\E}[2]{\mathbb{E}_{#1}\left[#2\right]}
\newcommand{\bbE}{\mathbb{E}}
\newcommand{\mcal}[1]{\mathcal{#1}}
\newcommand{\argmax}{\operatornamewithlimits{argmax}}
\newcommand{\argmin}{\operatornamewithlimits{argmin}}
\newcommand{\ceil}[1]{\left \lceil #1 \right \rceil}
\newcommand{\ang}[1]{\left \langle #1 \right \rangle}
\newcommand{\cY}{\mcal{Y}}
\newcommand{\cX}{\mcal{X}}
\newcommand{\cC}{\mcal{C}}
\newcommand{\ncliques}{|\cC|}
\newcommand{\cS}{\mcal{S}}
\newcommand{\cL}{\mathcal{L}}
\newcommand{\cA}{\mathcal{A}}
\newcommand{\hG}{\hat{G}}
\newcommand{\hR}{\hat{R}}
\newcommand{\out}[1]{}
\newcommand{\reals}{\mathbb{R}}
\newcommand{\ind}[1]{\mathbf{1}\left[#1\right]}
\newcommand{\bh}{\mathbf{h}}
\newcommand{\bv}{\mathbf{v}}
\newcommand{\bz}{\mathbf{z}}
\newcommand{\hinge}[1]{\left[#1\right]_+}
\newcommand{\mmarg}[1]{\theta^\star(x,#1)}   
\newcommand{\score}[1]{\theta(x,#1)}         
\newcommand{\mmargit}[1]{\theta^{t\star}(x^i,#1)}   
\newcommand{\pmmarg}[1]{\theta_p^\star(x,#1)}   
\newcommand{\pscore}[1]{\theta_p(x,#1)}         
\newcommand{\pscoremax}[0]{\theta_p^\star(x)}   
\newcommand{\pscorei}[1]{\theta_p(x^i,#1)}         
\newcommand{\ssp}[0]{\theta_x}              
\newcommand{\sspRV}[0]{\theta_X}              
\newcommand{\wit}[1]{y^\star(x,#1;\theta)}          
\newcommand{\pwit}[1]{y^\star(x,#1;\theta_{p})}          
\newcommand{\pthr}{\tau(x;\theta_{p},\alpha)}
\newcommand{\thr}[0]{\tau(x;\theta,\alpha)}
\newcommand{\pthri}{\tau(x^{i};\theta_{p},\alpha)}
\newcommand{\thri}{\tau(x^{i};\theta,\alpha)}
\newcommand{\suchthat}{\quad \textrm{ s.t. } \quad}
\newcommand{\SPC}{\ensuremath{\textrm{SPC}}\xspace}
\newcommand{\bw}{\mathbf{w}}
\newcommand{\bft}{\mathbf{f}}
\newcommand{\todo}[1]{\textcolor{red}{{\bf TODO:} #1}}
\DeclareRobustCommand\onedot{\futurelet\@let@token\@onedot}
\def\@onedot{\ifx\@let@token.\else.\null\fi\xspace}
\newcommand{\st}{\quad\textrm{s.t.}\quad}                        
\begin{document}

\title{Structured Prediction Cascades}

\author{David Weiss, Benjamin Sapp, Ben Taskar \\
       {\tt djweiss@cis.upenn.edu}, {\tt bensapp@cis.upenn.edu}, {\tt taskar@cis.upenn.edu}\\
       \\Department of Computer and Information Science\\
       University of Pennsylvania\\
       Philadelphia, PA 19104-6309, USA
     }

\maketitle

\begin{abstract}
  Structured prediction tasks pose a fundamental trade-off between the need for model complexity to increase predictive power and the limited computational resources for inference in the exponentially-sized output spaces such models require. We formulate and develop the {\em Structured Prediction Cascade} architecture: a sequence of increasingly complex models that progressively filter the space of possible outputs. The key principle of our approach is that each model in the cascade is optimized to accurately filter and refine the structured output state space of the {\em next} model, speeding up both learning and inference in the next layer of the cascade. We learn cascades by optimizing a novel convex loss function that controls the trade-off between the filtering efficiency and the accuracy of the cascade, and provide generalization bounds for both accuracy and efficiency.  We also extend our approach to intractable models using tree-decomposition ensembles, and provide algorithms and theory for this setting. We evaluate our approach on several large-scale problems, achieving state-of-the-art performance in handwriting recognition and human pose recognition. We find that structured prediction cascades allow tremendous speedups and the use of previously intractable features and models in both settings.
\end{abstract}

\section{Introduction}

The classical trade-off between approximation and estimation error
(bias/variance) is fundamental in machine learning. In regression and
classification problems, the {\em approximation error} can be reduced
by increasing the complexity of the model at the cost of higher {\em
  estimation error}. Standard statistical model selection techniques 
 \citep{mallows73,vapnik1974theory, aic74,devroye1996probabilistic, barron1999risk,bartlett02selection} explore a hierarchy of models of increasing complexity primarily to minimize expected error,
 without much concern for the
computational cost of using the model at test time.

However, in structured prediction tasks, 
 such as
machine translation, speech recognition, articulated human pose
estimation and many other complex prediction problems,  {\em test-time computational constraints} play a critical role as
models with increasing inference complexity are considered. In
these tasks, there is an exponential number of possible predictions
for every input. Breaking these joint predictions up into independent
decisions (e.g., translate each word independently, recognize a
phoneme at a time, detect arms separately) ignores critical
correlations and leads to poor performance. On the other hand,
structured models used for these tasks, such as grammars and graphical
models, can capture strong dependencies but at considerable cost of
inference. For example, a first order conditional random field
(CRF) \citep{lafferty01crf} is fast to evaluate but may not be an
accurate model for phoneme recognition, while a fifth order model is
more accurate, but prohibitively expensive for both learning and
prediction. Model complexity can of course also lead to over-fitting problems 
due to the sparseness of the training data, but this
aspect of the error is fairly well understood and controlled using
standard regularization and feature selection methods.

In practice, model complexity is limited by computational constraints
at prediction time, either explicitly by the user or implicitly
because of the limits of available computation power. We therefore
need to balance {\em inference error} with {\em inference
  efficiency}. A common solution is to use heuristic pruning
techniques or approximate search methods in order to make more complex
models feasible. For example, in statistical machine translation,
syntactic models are combined with n-gram language models to produce
impractically large inference problems, which are heavily and
heuristically pruned in order to fit into memory and any reasonable
time budget \citep{hiero,venugopal2007efficient,
  petrov-haghighi-klein:2008:EMNLP}. However,
previous work remains unsatisfactory in several respects: (1) model
parameters are not learned specifically to balance the
accuracy/efficiency trade-off, but instead using remotely related criteria,
and (2) no optimality or generalization guarantees exist.

In this paper, we address the accuracy/efficiency trade-off for
structured problems by learning a {\em cascade} of structured
prediction models, in which the input is passed through a sequence of
models of increasing computational complexity before a final
prediction is produced. The key principle of our approach is that each
model in the cascade is optimized to accurately filter and refine the
structured output state space of the {\em next} model, speeding up
both learning and inference in the next layer of the cascade. Although
complexity of inference increases (perhaps exponentially) from one
layer to the next, the {\em state-space sparsity} of inference increases
exponentially as well, and the entire cascade procedure remains highly
efficient. We call our approach {\em Structured Prediction Cascades}
(\SPC).

The contributions of this paper are organized as follows.\footnote{Preliminary analysis and applications of structured prediction cascade was developed in
  \citep{weiss10,sapp10cascades,weiss10ensemble}.}
\begin{itemize}
\item In Section \ref{sec:sc-tree}, we describe the \SPC inference
  framework for tree-structured problems where sparse exact inference
  is tractable.  We also propose a tree-decomposition method for applying cascades
  to loopy graphical models in Section \ref{sec:sc-loopy}.
\item In Section \ref{sec:learning}, we describe how cascades can be
  learned to achieve a desired accuracy/efficiency trade-off on
  training data. We introduce a novel convex loss function
  specifically geared for learning to filter accurately and
  effectively, and describe a simple stochastic subgradient algorithm for
  learning a cascade one layer at a time.
\item In Section \ref{sec:theory}, we provide a theoretical analysis
  of the accuracy/efficiency trade-off of the cascade 
  We develop novel generalization bounds for both \emph{accuracy and efficiency} of a
  structured prediction model.
\item In Section \ref{sec:applications}, we explore in depth two
  applications of the \SPC framework in which the cascades achieve
  best-known performance. In Section \ref{sec:sequences}, we show how
  \SPC can be applied to linear-chain models for handwriting
  recognition.
  In Section \ref{sec:pose}, we demonstrate the use of
  \SPC for single-frame human pose estimation using a pictorial structures tree
  model cascade. Finally, in Section \ref{sec:videopose}, we show how
  \SPC can be applied to the estimating pose in
  video, using the framework for loopy graphical models introduced in section \ref{sec:sc-loopy}.
\end{itemize}



\section{Related Work}

The trade-off between computation time and model complexity, which is
the central focus of this work, has  been studied before in
several settings we outline in this section.

\paragraph{Training-time Computation Trade-Offs.} Several recent works
have considered the trade-off between estimation error and computation (number of examples processed) at
\textit{training} time in large-scale classification using
stochastic/online optimization methods, notably
\citep{shalevshwartz2008soi,bottou-bousquet-2008}. We use such
stochastic sub-gradient methods in our learning procedure (Section
\ref{sec:learning}). 
In more recent theoretical work, \citet{agarwal11} also address the
issue of estimation time by incorporating computational constraints
into the classical empirical risk minimization framework. However, as
above, \citet{agarwal11} assume that model selection requires choosing
between different methods with fixed test-time computational cost for all
examples. In this paper, we instead analyze adaptive computational
trade-offs in structured inference at test-time, and analyze the trade-offs in
terms of novel loss functions measuring efficiency and accuracy. 

\out{
In the structured prediction setting
considered in this work, stochastic estimation methods require
inference as part of learning. If inference is infeasible, as is the
case for sufficiently complex structured models, then approximate
inference methods such as LP relaxations
\citep{komodakis2007dualdecomp} can be used during learning, but these
do not explicitly address the trade-off and may still be
insufficient. An alternative approach is to replace the exponentially many 
constraints in the estimation problem with far fewer constraints
\citep{sontag10:psuedomax}, but this is often insufficient as it is
guaranteed to succeed only in the limit of infinite data. In contrast
to these previous approaches, our method addresses structured prediction problems
and learns to address the
tradeoff between error and computation directly, allowing for
learning and inference even with very high order models.
}

\paragraph{Test-time Computation Trade-Offs.} The issue of controlling computation at test-time also comes up in
kernelized classifiers, where prediction speed depends on the number
of ``support vectors''.  Several algorithms,
including the Forgetron and Randomized Budget Perceptron \citep{Crammer03onlineclassification,dekel2008forgetron,cavallanti2007tbh}, are designed
to maintain a limited active set of support vectors in an online
fashion while minimizing error. However, unlike our approach, these
algorithms learn a model that has a fixed running time for each test
example. In contrast, our approach addresses structured prediction problems and 
has a {\em example-adaptive} computational
cost that allows for more computation time on more difficult examples,
and greater efficiency gains on examples where simpler models suffice.

\paragraph{Cascades/Coarse-to-fine reasoning.} For binary classification, cascades of classifiers have been quite
successful for reducing computation.  \citet{geman2001} propose a
coarse-to-fine sequence of binary tests to detect the presence and
pose of objects in an image.  The learned sequence of tests is trained
to minimize expected computational cost.  The extremely popular
classifier of \citet{viola02} implements a cascade of
boosting ensembles, with earlier stages using fewer features to
quickly reject large portions of the state space. More recent work on
binary classification cascades has focused on further increasing
efficiency, e.g. through joint optimization \citep{Lefakis10} or
selecting features at test time \citep{GaoK11}.  Our cascade framework
is inspired by these binary classification cascades, but poses new
objectives, inference, and learning algorithms, to deal with the
structured inference setting.

In natural language parsing, several works
\citep{charniak2000maximum,carreras2008tag,petrov:PhD} use a
coarse-to-fine idea closely related to ours and \citet{geman2001}: the
marginals of a simple context free grammar or dependency model are
used to prune the parse chart for a more complex grammar. We compare
to this idea in our experiments.  The key difference with our work is
that we explicitly learn a sequence of models tuned specifically to
filter the space accurately and effectively.  Unlike the work of
\citet{petrov:PhD}, however, we do not learn the structure of the
hierarchy of models but assume it is given by the designer.  \citet{rush12} apply the ideas developed in our preliminary
work to the problem of dependency parsing in natural language
processing. \citet{rush12} learn a cascade of simplified parsing
models using the objective presented in section \ref{sec:learning} to
achieve state-of-the-art performance in dependency parsing
across several languages at about two orders of magnitude less time. 

\citet{pff-cascade} proposed a cascade
for a structured parts-based object detection model.  Their cascade
works by early stopping while evaluating individual parts, if the
combined part scores are less than fixed thresholds.  While the form
of this cascade can be posed in our more general framework (a cascade
of models with an increasing number of parts), we differ from
\citet{pff-cascade} in that our pruning is based on thresholds that
adapt based on inference in each test example, and we explicitly learn
parameters in order to prune safely and efficiently. In
\citet{geman2001,viola02,pff-cascade}, the focus is on preserving
established levels of accuracy while increasing speed.

\out{Heuristic methods for pruning the search space of outputs have been
exploited in many natural language processing and computer vision
tasks.  For part-of-speech tagging, perhaps the simplest method is to
limit the possible tags for each word to those only seen as its labels
in the training data. For example, the MXPOST tagger
\citep{ratnaparkhi1996maximum} and many others use this technique.  In
our experiments, we compare to this simple trick and show that our
method is much more accurate and effective in reducing the output
space. In parsing, several works
\citep{charniak2000maximum} \citep{carreras2008tag,petrov:PhD} use a
``coarse-to-fine'' idea closely related to ours: the marginals of a
simple context free grammar or dependency model are used to prune the
parse chart for a more complex grammar.  

It is important to distinguish the approach proposed here, in which we use {\em exact} inference in a {\em reduced} output space, with other
{\em approximate} inference techniques that operate in the full output space 
(e.g., \citet{druck_learning_2007,pal_sparse_2006}). Because our approach is 
orthogonal to such approximate inference techniques, it is likely that the 
structured pruning cascades we propose could be combined with existing methods 
to perform approximate inference in a reduced output space.

Our inspiration comes partly from the cascade classifier model 
of~\citet{viola2002robust}, widely used for real-time detection of
faces in images. In their work, a window is
scanned over an image in search of faces and a cascade of
very simple binary classifiers is trained to weed out easy and
frequent negative examples early on.  In the same spirit, we propose
to learn a cascade of structured models of increasing order that weed
out easy incorrect assignments early on.
}


\section{Structured Prediction Cascades (\SPC)}
\label{sec:sc-tree}

Given an input space $\cX$, output space $\cY$, and a training set
$\{(x^1,y^1),\dots,(x^n,y^n)\}$ of $n$ samples from a joint
distribution $D(X,Y)$, the standard supervised learning task is to
learn a hypothesis $h: \cX \mapsto \cY$ that minimizes the expected
loss $\E{D}{ \cL\left(h(X) , Y\right)}$ for some non-negative loss
function $\cL : \cY \times \cY \rightarrow \reals^+$. In {\em
  structured prediction problems}, $Y$ is a $\ell$-vector of variables
and $\cY = \cY_1 \times \dots \times \cY_\ell$, and $\cY_i = \{1,
\dots, K\}$. In many settings, the number of random variables, $\ell$,
differs depending on input $X$, but for simplicity of notation, we
assume a fixed $\ell$ here. Note that for the rest of this paper, we
will use capital letters $X$ and $Y$ to denote random variables drawn
from $D(X,Y)$ and lower-case letters $x$ and $y$ to denote specific
values of $X$ and $Y$.  We use subscripts to index elements of $y$,
where $y_{i}$ is the ith component of $y$.

The linear hypothesis class we consider is 
\begin{equation}
  \label{eq:structured-inference}
  h(x) = \argmax_{y\in\cY} \theta^\top \bft(x,y),
\end{equation}
where the scoring function is the inner product of a
vector of parameters $\theta \in \reals^{d}$ and a feature function $\bft:
\cX\times\cY\mapsto\reals^d$ mapping $(x,y)$ pairs to a set of
features.  We further make the standard assumption that
$\bft$ decomposes over a set of cliques $\cC$ over output variables,
so that 
\begin{equation}
  \label{eq:structured-inference-cliques}
\theta^\top\bft(x,y) = \sum_{c \in \cC} \theta^\top\bft_c(x,y_c) . 
\end{equation}
We use the notation $y_c$ to denote the subset of variables involved in
clique $c$, $y_c \triangleq \{y_i \mid i \in c\}$. Similarly, we use
$\cY_c \triangleq  \cY_{i_{1}}\times \ldots \times \cY_{i_{|c|} }$ where $c = \{i_{1},\ldots,i_{|c|}\}$, to refer to the set of all assignments to
$y_c$. 
By considering different cliques over $X$ and $Y$, $\bft$ can
represent arbitrary interactions between the components of $x$ and
$y$. Computing the $\argmax$ in $h(x)$ is tractable for low-treewidth (hyper)graphs
but is NP-hard in general, and approximate inference is
typically used when graphs are not low-treewidth.
We will abbreviate $\theta^\top \bft(x,y)$ as $\score{y}$ below, and similarly
 $\theta^\top \bft_{c}(x,y)$ as $\score{y_{c}}$.

In this section, we introduce the framework of Structured Prediction
Cascades (\SPC) to handle problems for which the inference problem
in Eq.~\ref{eq:structured-inference} is prohibitively expensive. For
example, in a 5-th order linear chain model for handwriting
recognition or part-of-speech tagging, $K$ is about $50$ characters or 
parts-of-speech, and exact
inference is on the order $50^{6} \approx 15$ billion times the length
the sequence.  In tree-structured models we have used for human
pose estimation \citep{sapp10cascades}, typical $K$ for each part
includes image location and orientation and is on the order of
$250,000$, so computing $K^{2}$ pairwise features is
prohibitive. Rather than learning a single monolithic model, {\em a
  structured prediction cascade} is a coarse-to-fine sequence of
increasingly complex models $\theta^0, \dots, \theta^T$ with
corresponding features $\bft^0, \dots, \bft^T$. For example, inference
complexity scales exponentially with Markov order in sequence models, and 
quadratically with
spatial/angular resolution in pose models. The goal of each model is
to filter out a large subset of possible values for $y$ without
eliminating the correct one, so that the next level only has to
consider a much reduced state-space.  The filtering process is
feed-forward, and each stage runs inference to compute {\em
  max-marginals} which are used to eliminate low-scoring node or
clique assignments.

In summary, a high-level overview of the \SPC inference framework is
as follows.  Below, $\cS^{i}$ denotes a sparse (filtered) version of the output space
$\cY$: 

\begin{itemize}
\item Given an input $x$, initialize the cascade with $\cS^0 = \cY$.
\item Repeat for each level $i = 0, \dots, T-1$ of the cascade:
  \begin{itemize}
  \item Run sparse inference over $\cS^i$ using model $\theta^{i}(x,y')$ and eliminate a subset of
    low-scoring outputs.
  \item Output $\cS^{i+1}$ for the next model.
  \end{itemize}
\item Predict using the final level:  $y = \argmax_{y' \in \cS^T} \theta^{T}(x,y')$.
\end{itemize}
The process is illustrated in Figure \ref{fig:cascade-flowchart}.  See Figure 
\ref{fig:cascade-output-example} for a concrete example of
the output of a the first two stages of a cascade for handwriting
recognition (Figure \ref{fig:cascade-output-example}) and human pose estimation 
(Figure \ref{fig:pose_overview}).
We will discuss how to represent and choose $\cS^{i}$ in the
next section.  The key challenge is that $\cS^{i}$ are exponential in the number
of output variables, which rules out explicit representations.  The representation
we propose is implicit and concise.  It is also tightly integrated with parameter estimation algorithm for $\theta^{i}$ that optimizes the overall accuracy and efficiency of the cascade.
\begin{figure}[t]
  \centering
  \includegraphics[width=.8\textwidth]{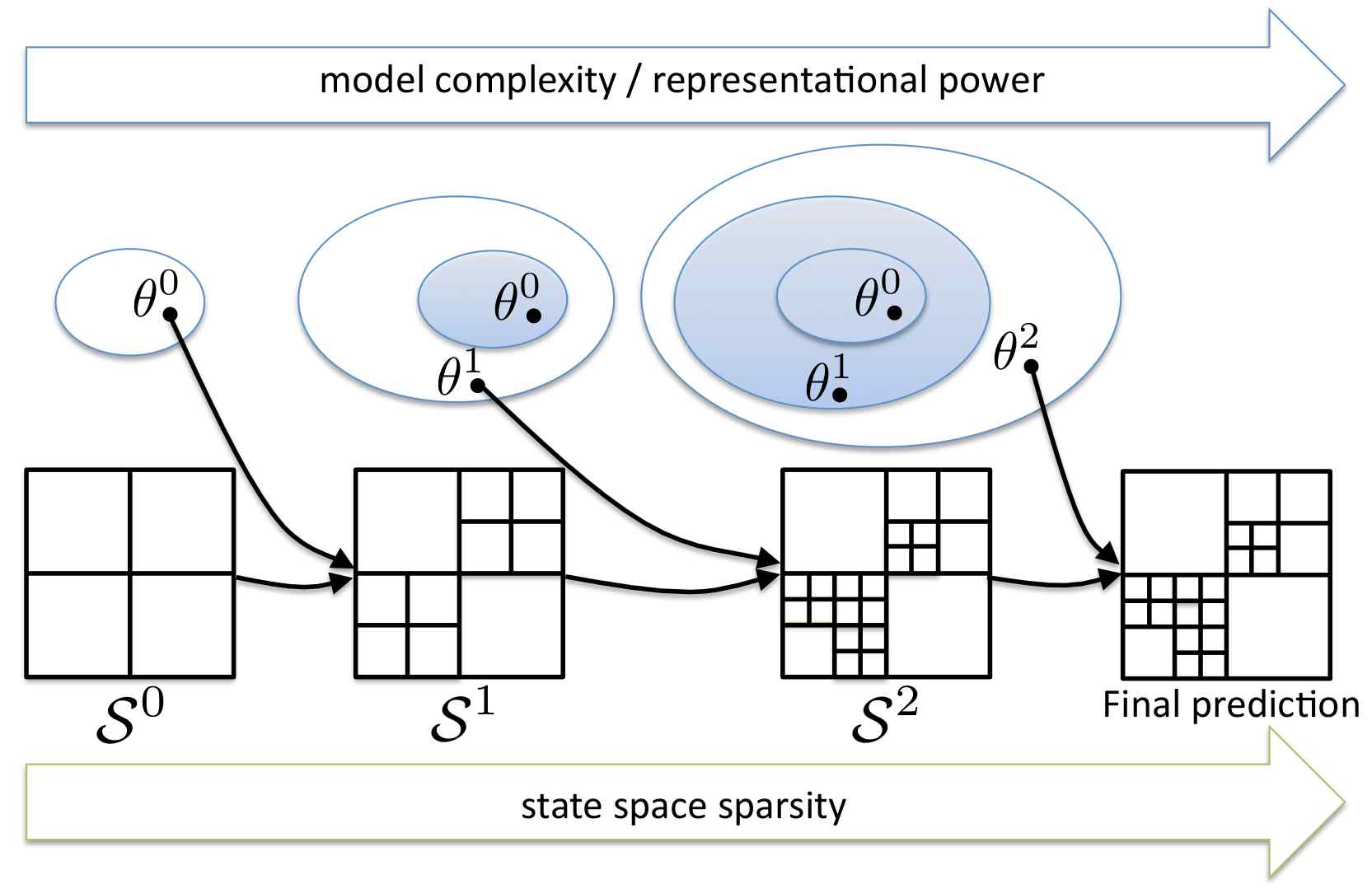}
  \caption{\small A high-level overview of the \SPC inference framework.  As 
the cascade progresses, the representational power of the models increases, yet 
tractability is maintained by sufficient filtering of the state space. }
  \label{fig:cascade-flowchart}
\end{figure}

\begin{table}
  \centering
  \begin{tabular}{|cl|}
\hline
    Symbol & Meaning \\
\hline
    $\cX, X, x$  & input space, variable and value \\
    $\cY, Y, y$  & output space, variables and value \\
    $\cC, c, y_{c}$ & set of cliques, individual clique, clique assignment\\
    $\bft(x,y)$  & features of input/output pair\\
    $\bft_{c}(x,y_{c})$ & features of a clique assignment\\
    $\score{y} \triangleq \theta^{\top}\bft(x,y)$ & score of input/output pair\\
    $\score{y_{c}} \triangleq \theta^{\top}\bft_{c}(x,y_{c}) $ & score of a clique assignment\\
    $\mmarg{y_{c}} \triangleq \max_{y':y'_{c}=y_{c}}\score{y'}$ & max-marginal of a clique assignment $y_{c}$\\
  $\wit{y_c} \triangleq \argmax_{y': y'_c = y_c}\score{y'}$ & best scoring output consistent with clique assignment $y_{c}$\\
    \hline
  \end{tabular}
  \caption{Summary of key notation.}
  \label{tab:notation}
\end{table}


\begin{figure}[t]
  \centering
  \includegraphics[width=.7\textwidth]{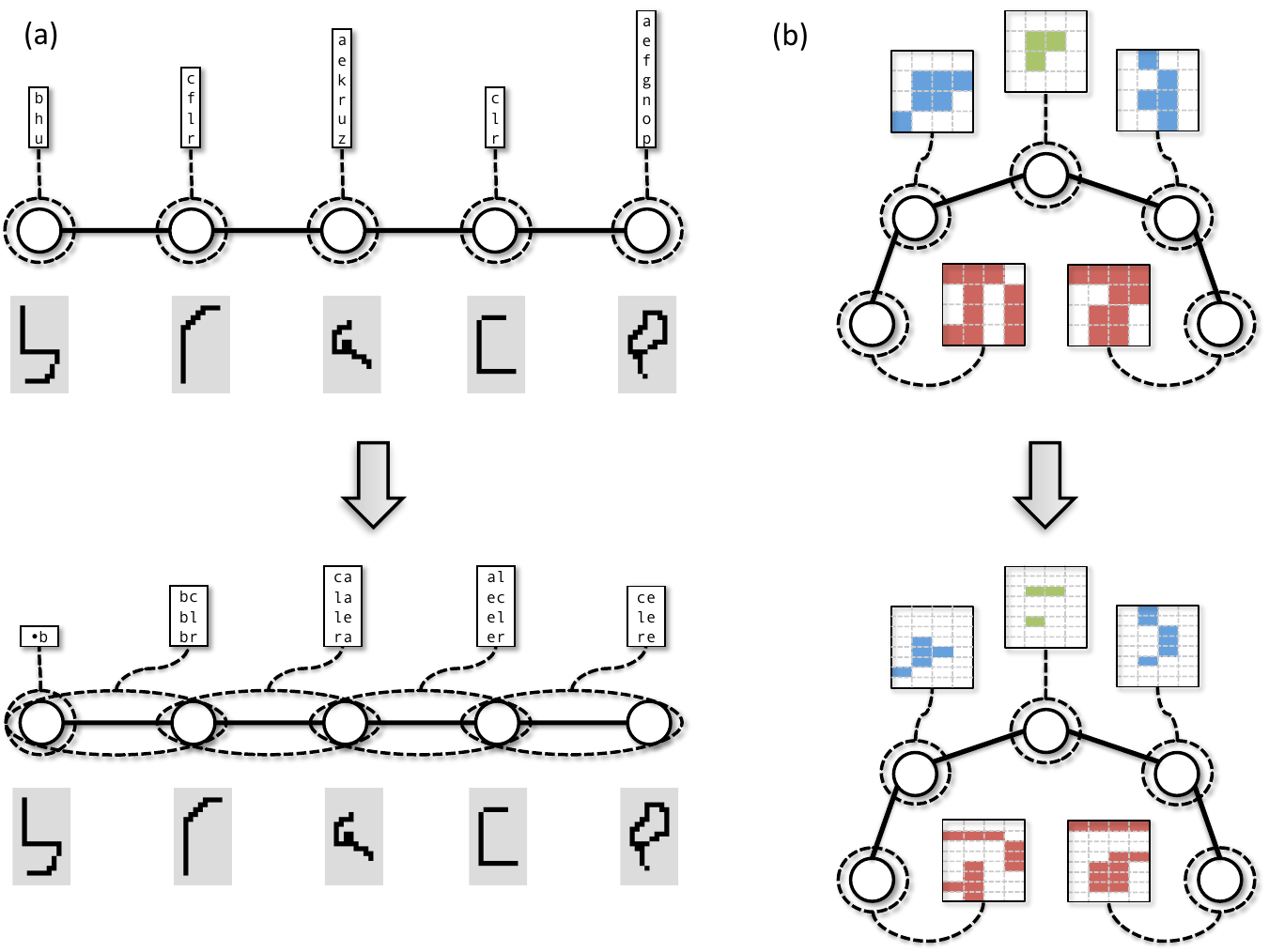}
  \caption{\small Sample output from the first two layers of a
    cascade. Circles represent output variables, and the dashed lines
    indicate cliques that are being filtered at a given level of the
    cascade, with the attached tables representing the sparse state
    space. The solid lines indicate the graph used for inference and
    features. {\bf (a)} 
    Output from a handwriting recognition cascade
    (Section \ref{sec:sequences}) of increasing Markov order. The first
    level outputs a sparse set of possible letters for each
    image. The second level takes as input the sparse set of letters,
    and further refines this to a very sparse set of {\em bigrams} at
    each position. {\bf (b)} Output from a coarse-to-fine human pose cascade
    (Section \ref{sec:pose}). The colored areas indicate valid 2D
    locations for each joint. Unlike the sequence cascade (a), the
    cliques stay the same from one layer to another. Instead, the
    resolution of the state space doubles with each additional layer.}
  \label{fig:cascade-output-example}
\end{figure}

\subsection{Cascaded inference with max-marginals}

In order to filter low-scoring outputs, we use {\em max-marginals}, for reasons
that we detail below.
For any value of $y_c$, we define the max-marginal $\mmarg{y_c}$ to be
the maximum score of any output $y$ that is consistent with the assignment 
$y_c$:
\begin{equation}
  \label{eq:mmarg_def}
  \mmarg{y_c} \triangleq \max_{y': y'_c = y_c} \score{y'}.
\end{equation}
Max-marginals can be computed exactly and efficiently for any clique
$c$ in low-treewidth graphs, although the computational
cost  is exponential in $|c|$ (the number of variables in the
clique) when the state-space is not filtered. 
Note that max-marginals can be computed over {\em any} clique
$c$, not just the cliques used in the feature function $\bft$; for
example, in Section \ref{sec:pose}, we compute max-marginals over
single variables (i.e., $y_c = y_j$ when performing human pose estimation 
(Figure \ref{fig:pose_overview}), but at
increasingly higher resolutions. On the other hand, in Section
\ref{sec:sequences} we compute max-marginals over increasingly large
cliques for sequence models (e.g. bigram, trigrams, and quadgrams).

\begin{figure}[t]
  \centering
  \includegraphics[width=0.8\textwidth]{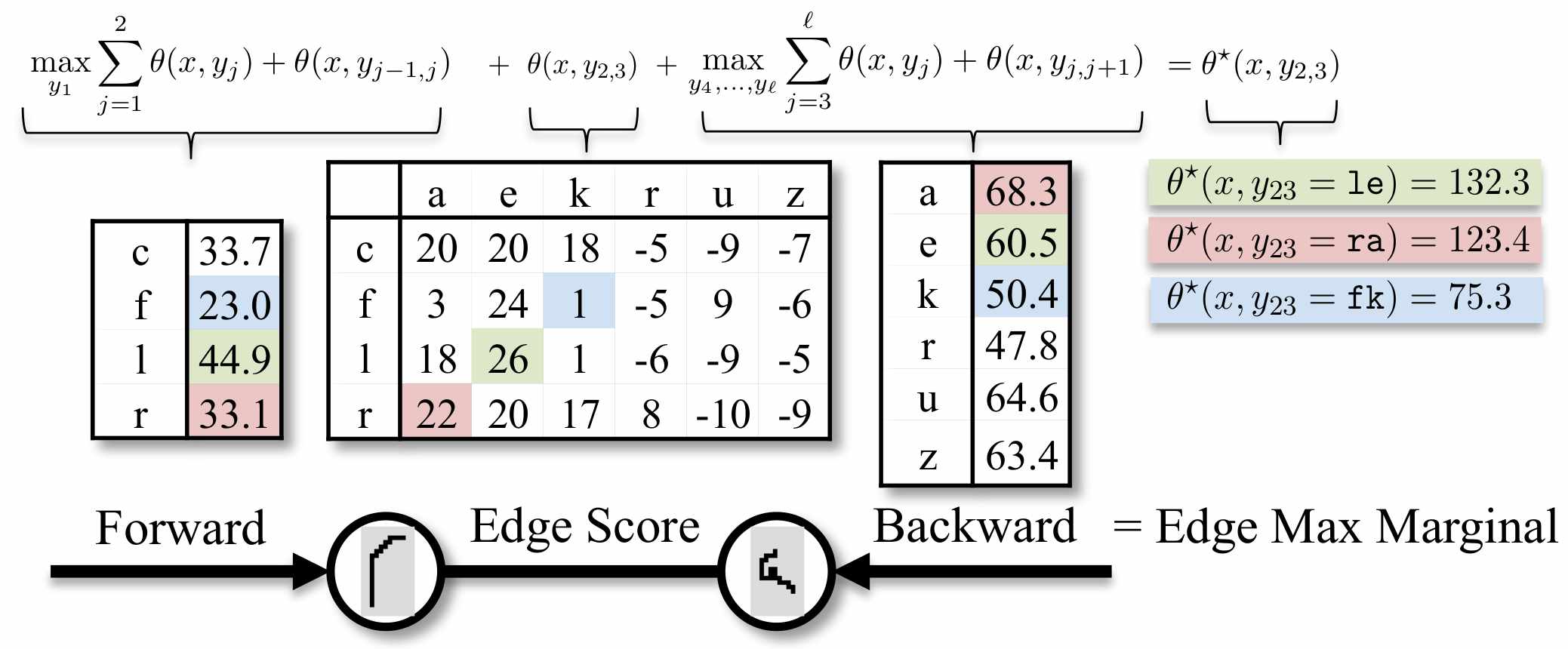}  
  \caption{\small Computing max-marginals over bigrams via message
    passing. The input is the same as in Figure
    \ref{fig:cascade-output-example}. Once forward and backward
    messages have been computed, the max-marginal is simply the sum of
    incoming messages and the score of the clique over bigrams.}
  \label{fig:computing-max-marginals}
\end{figure}

Exact computation of max-marginals for a clique $c$ requires the same amount of 
time to run as standard exact MAP inference.
This process is visualized in Figure \ref{fig:computing-max-marginals}: once forward and backward max-sum messages
have been computed for MAP inference, the max-marginal for a
given value $y_c$ is simply the sum of the score $\score{y_c}$ plus
the incoming messages to the variables in $c$. Note that in practice,
both stages of computation become faster as the output space becomes
increasingly sparse as the input proceeds through the cascade.  This
algorithm can also compute the maximizing assignment for each $y_c$,
\begin{equation}
  \label{eq:witness_def}
  \wit{y_c} \triangleq \argmax_{y': y'_c = y_c}\score{y'}.
\end{equation}
We call $\wit{y_c}$ the {\em argmax-marginal} or {\em witness} for
$y_c$ (it might not be unique, so we break ties in an arbitrary but deterministic way).

\begin{figure}[h]
  \centering
  \includegraphics[width=.6\textwidth]{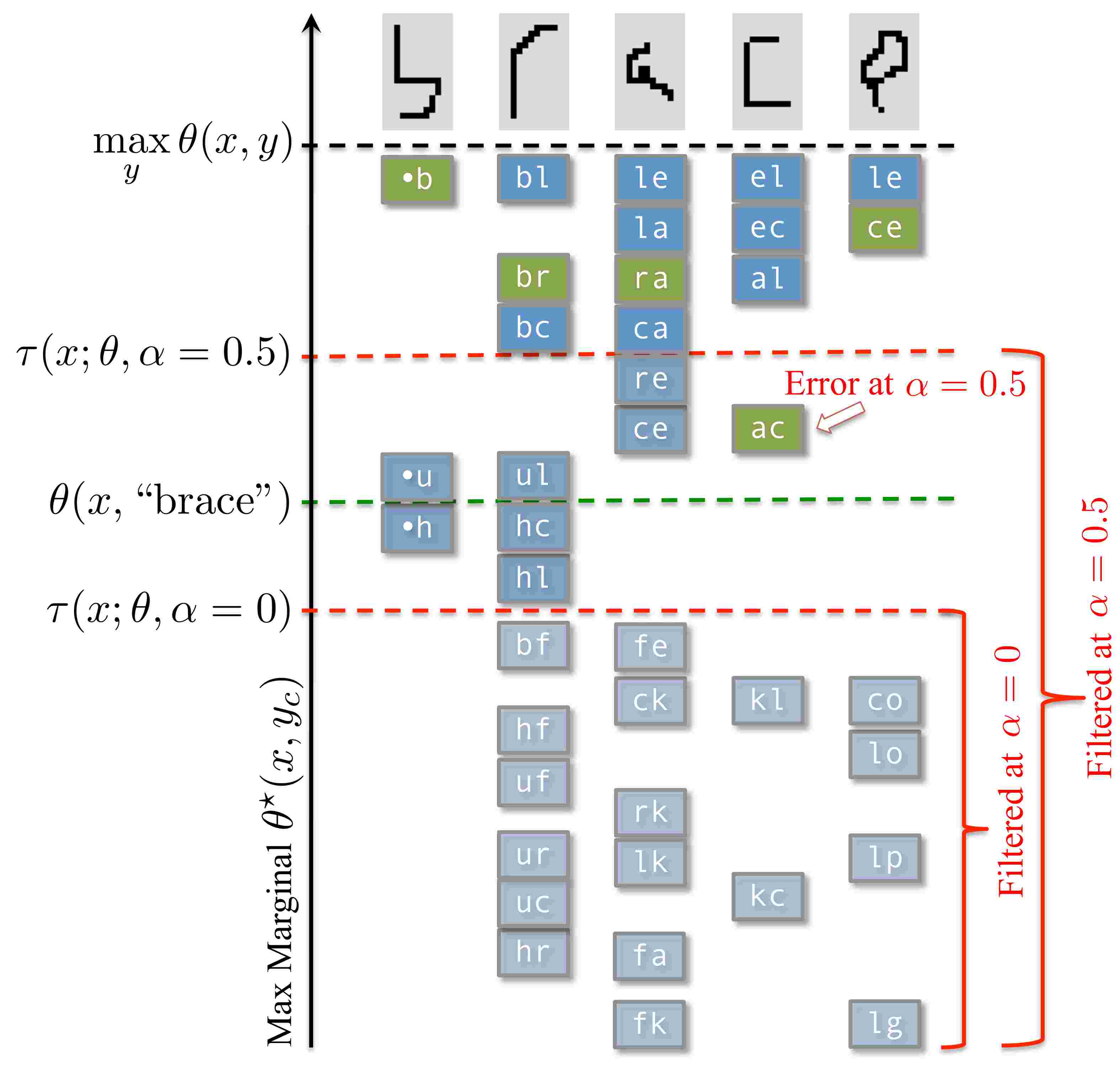}
  \caption{\small Thresholding bigrams using max-marginals. The input
    is the same as in Figure \ref{fig:cascade-output-example}. The
    sparse set of unfiltered bigrams is shown at each position
    according to the max-marginal score. The bigrams corresponding to
    the correct label sequence, {\tt brace}, are highlighted in green. The green
    dashed line indicates the score of the correct label sequence. Note
    that the max-marginals of the correct sequence are at least the score
    of the correct sequence. The black dashed line indicates the maximum
    score of any sequence, which is the maximum filtering
    threshold. The largest max-marginal values are all exactly equal
    to this score.  The red dashed lines indicate two candidate
    filtering thresholds $\thr=0$ and $\thr=0.5$ and corresponding sets of
    filtered bigrams are highlighted.
 Note that a filtering error occurs at the more 
    aggressive level of $\alpha=0.5$.
\out{    The threshold is exactly equal to the mean of the max
    marginals, filtering away roughly 50\% of possible
    bigrams. Because the true score (green line) is above the
    threshold, is it not possible for a mistake to occur. \textbf{b:}
    Filtering with $\alpha=0.5$. Because the threshold is above the
    score of the true sequence, filtering errors are now possible, and
    the bigram {\tt ac} is improperly pruned.}
    }
  \label{fig:threshold-example}
\end{figure}

Once max-marginals have been computed, we filter the output space by
discarding any clique assignments $y_c$ for which $\mmarg{y_j} \le t$
for a threshold $t$ (Figure \ref{fig:threshold-example}). This filtering rule has two desirable properties
for the cascade that follow immediately from the definition of
max-marginals:
\begin{lem} [Safe Filtering]
  \label{prop:pruning}
  If $\score{y} > t$, then $\forall c \;\; \mmarg{y_c} > t$.
\end{lem}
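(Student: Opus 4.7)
The plan is to observe that this lemma is essentially immediate from the definition of max-marginals in Equation~\ref{eq:mmarg_def}, so the proof will be a single short chain of inequalities. The key insight is that $y$ itself is always a valid witness for its own clique assignments: for any clique $c$, the restriction $y_c$ is such that $y$ trivially lies in the set $\{y' : y'_c = y_c\}$ over which the max-marginal is taken.

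Concretely, I would fix an arbitrary clique $c \in \cC$ and let $y_c$ denote the restriction of the given $y$ to that clique. By definition,
\begin{equation*}
\mmarg{y_c} \;=\; \max_{y'\,:\,y'_c = y_c}\, \score{y'} \;\geq\; \score{y},
\end{equation*}
where the inequality holds because $y$ is a feasible choice in the maximization (it agrees with itself on $c$). Combining this with the hypothesis $\score{y} > t$ yields $\mmarg{y_c} > t$. Since $c$ was arbitrary, the conclusion holds for all cliques.

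There is no real obstacle here — the result is a direct unpacking of the max-marginal definition, and I would present it as a two-line argument. The only thing to be careful about is to make clear which $y_c$ is meant in the statement: it is the restriction of the specific $y$ in the hypothesis, not an arbitrary clique assignment. This justifies why $y$ is admissible in the $\max$ and thus why the lower bound $\mmarg{y_c} \geq \score{y}$ holds.
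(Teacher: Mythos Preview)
Your proof is correct and matches the paper's approach exactly: the paper does not give a separate proof but simply states that the lemma ``follow[s] immediately from the definition of max-marginals,'' which is precisely the one-line argument $\mmarg{y_c} \ge \score{y} > t$ that you spell out.
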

\begin{lem} [Safe Lattices]
  \label{prop:lattices}
  If $\max_{y'} \score{y'} > t$, then $\exists y 
  \;\forall c \;\; \mmarg{y_c} > t$.
\end{lem}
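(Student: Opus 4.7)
The plan is to derive Lemma 2 as an almost immediate consequence of Lemma 1 by exhibiting an explicit witness $y$, namely the MAP assignment itself.

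First I would let $y = \arg\max_{y' \in \cY} \score{y'}$ (breaking ties arbitrarily but deterministically, as is done for witnesses elsewhere in the paper). The hypothesis of the lemma says precisely that $\score{y} > t$. Then I would invoke Lemma 1 (Safe Filtering) with this particular $y$: since $\score{y} > t$, it follows that for every clique $c$, $\mmarg{y_c} > t$. That is exactly the existential statement to be proved.

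If one prefers a self-contained argument rather than appealing to Lemma 1, the same $y$ still works directly from the definition of the max-marginal. For any clique $c$, the set $\{y' : y'_c = y_c\}$ contains $y$ itself, so
\begin{equation*}
\mmarg{y_c} \;=\; \max_{y': y'_c = y_c} \score{y'} \;\geq\; \score{y} \;>\; t,
\end{equation*}
which gives the claim. There is really no obstacle here: the only content is recognizing that the MAP (or indeed any output scoring above $t$) serves as a universal witness whose per-clique max-marginals inherit its score by monotonicity of max over a subset. The lemma is essentially a restatement of Lemma 1 together with the tautology that the $\arg\max$ attains the $\max$.
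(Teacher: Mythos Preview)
Your proposal is correct and matches the paper's own reasoning exactly: the paper states that Lemma~\ref{prop:lattices} ``follows from Lemma~\ref{prop:pruning}'' by taking the maximizing assignment as the witness, which is precisely what you do. The self-contained version you add is also fine and just unwinds the one-line appeal to Lemma~\ref{prop:pruning}.
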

By Lemma \ref{prop:pruning}, ensuring that the score of the true label
$\score{y}$ is greater than the threshold is sufficient (although not
necessary) to guarantee that no marginal assignment $y_{c}$ consistent with the true
global assignment $y$ will be filtered. This condition will allow us to define a
max-marginal based loss function that we propose to optimize in Section \ref{sec:learning}
and will analyze in Section \ref{sec:theory}. Lemma
\ref{prop:lattices} follows from Lemma \ref{prop:pruning}, which
states that so long as the threshold is less than the maximizing
score, there always exists a global assignment $y$ with no pruned cliques
(i.e., a valid assignment always exists after pruning). Thus, Lemma
\ref{prop:lattices} guarantees that $|\cS^{i+1}| \ge 1$ in the \SPC
algorithm introduced above, and therefore  the cascade will
always produce a valid output. Note that neither property generally
holds for standard sum-product marginals $p(y_c|x)$ of a log-linear CRF (where
$p(y|x)\propto e^{\score{y}}$), which motivates our use of
max-marginals.

The next component of the inference procedure is choosing a threshold
$t$ for a given input $x$ (Figure \ref{fig:threshold-example}).  Note
that the threshold cannot be defined as a single global value but
should instead depend strongly on the input $x$ and $\score{\cdot}$
since scores are on different scales for different $x$. We also have
the constraint that computing a threshold function must be fast enough
such that sequentially computing scores and thresholds for multiple
models in the cascade does not adversely effect the efficiency of the
whole procedure. One might choose a quantile function to consistently
eliminate a desired proportion of the max-marginals for each
example. However, quantile functions are discontinuous in $\theta$
function, and we instead approximate a quantile threshold with a
threshold function that is continuous and convex in $\theta$.  We call this
the  {\em max-mean-max} threshold function (Figure
\ref{fig:threshold-example}), and define it as a convex combination of  the maximum score
and the mean
of the max-marginals:
 \begin{equation}
   \label{eq:meanmax}
   \thr = \alpha \max_y \score{y} + (1-\alpha)\frac{1}{ \sum_{c \in C}|\cY_c| }\sum_{c \in C}\sum_{y_c \in \cY_c} \mmarg{y_c}.
 \end{equation}
 Choosing a threshold using \eqref{eq:meanmax} is therefore equivalent
 to picking a $\alpha \in [0,1)$. Note that $\thr$ is a convex
 function of $\theta$ (in fact, piece-wise linear), which
 combined with Lemma \ref{prop:pruning} will be important for learning
 the filtering models and analyzing their generalization. In our
 experiments, we found that the distribution of max-marginals was well
 centered around the mean, so that choosing $\alpha \approx 0$
 resulted in $\approx50\%$ of max-marginals being eliminated on
 average. As $\alpha$ approaches $1$, the number of max-marginals
 eliminated rapidly approaches $100\%$.\footnote{We use
   cross-validation to determine the optimal $\alpha$ in our
   experiments (see Section \ref{sec:applications}).}

 In summary, the inner loop of the \SPC algorithm can be detailed
 as follows. The sparse output space $\cS^i$ is a list of valid
 assignments $y_c$ for each clique $c$ in the model $\bft_i$ (e.g.,
 Figure \ref{fig:cascade-output-example}):
 \begin{equation}
   \cS^i = \left\{ \cY_c \mid \forall c \in \cC \right\}\qquad \textrm{(list of valid clique values for all cliques)}
 \end{equation}
 Next, sparse max-sum message passing is used to compute max-marginals
$\mmarg{y_c}$  \eqref{eq:mmarg_def} for each value $y_c \in \cY_c$ of each
 clique $c$ of interest. Finally, for a given $\alpha$, a threshold is
 computed and low-scoring values of $\mmarg{y_c}$ are
 eliminated. Depending on the model in the next layer of the cascade,
 further transformation of the states may be necessary: For example,
 in the coarse-to-fine pose cascade (Section \ref{sec:pose}), valid
 2-D locations for each limb are halved either vertically or
 horizontally to produce finer-resolution states for the next model
 (Figure \ref{fig:cascade-output-example}b).


\subsection{Cascaded Inference in Loopy Graphs}
\label{sec:sc-loopy}

Thus far, we have assumed that (sparse) inference is feasible, so that
max marginals can be computed. In this section, we describe how to
apply \SPC when exact max-sum message passing is computationally
infeasible due to loops in the graph structure of the model. In order
to simplify the presentation in this section, we will assume that the
structured cascade under consideration operates in a ``node-centric''
coarse-to-fine manner as follows: For each variable $y_j$ in the
model, each level of the cascade filters a current set of possible
states $\cY_j$, and any surviving states are passed forward to the
next level of the cascade by substituting each state with its set of
descendents in a hierarchy. For example, such hierarchies arise in
pose estimation (Section \ref{sec:pose}) by discretizing the
articulation of joints at multiple resolutions, or in image
segmentation due to the semantic relationship between class labels
(e.g., ``grass'' and ``tree'' can be grouped as ``plants,'' ``horse''
and ``cow'' can be grouped as ``animal.'') Thus, in the pose
estimation problem, surviving states are subdivided into multiple
finer-resolution states; in the image segmentation problem, broader
object classes are split into their constituent classes for the next
level.

The key idea of this section is that we decompose the loopy model into
a collection of equivalent tractable sub-models for which inference is
tractable. What distinguishes this approach from other decomposition
based methods (e.g., \citet{komodakis2007dualdecomp,bertsekas99}) is
that, because the cascade's objective is filtering and not decoding,
our approach does not require enforcing the constraint that the
sub-models agree on which output has maximum score. In preliminary
work \citep{weiss10ensemble}, this approach was called {\em structured
  ensemble cascades}, here we simply refer to it as Ensemble-\SPC.

Given a loopy (intractable) graphical model, it is always possible to
express the score of a given output $\score{y}$ as the sum of $P$
scores $\pscore{y}$ under sub-models that collectively cover every
edge in the loopy model: $\score{y} = \sum_p \pscore{y}$ (Figure
\ref{fig:comb-example}). However, it is {\em not} the case that
optimizing each individual sub-model separately will yield the single
globally optimum solution. Instead, care must be taken to enforce
agreement between sub-models. For example, in the method of dual
decomposition \citep{komodakis2007dualdecomp}, it is possible to solve
a relaxed MAP problem in the (intractable) full model by running
inference in the (tractable) sub-models under the constraint that {\em
  all sub-models agree on the argmax solution.}  Enforcing this
constraint requires iteratively re-weighting unary potentials of the
sub-models and repeatedly re-running inference until each sub-model
convergences to the same argmax solution.

\begin{figure}[t]
  \centering
  \includegraphics[width=0.7\textwidth]{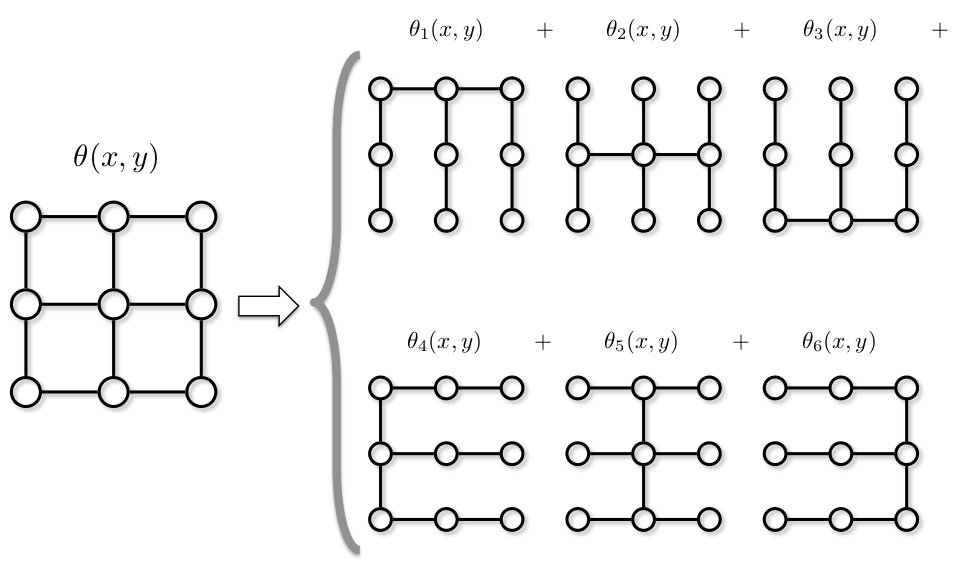}
  \caption{Example decomposition of a $3 \times 3$ fully connected
    grid into all six constituent ``comb'' trees. In general, a $n
    \times n$ grid yields $2n$ such trees.}
  \label{fig:comb-example}
\end{figure}

However, for the purposes of \SPC, we are only interested in computing
the max-marginals $\mmarg{y_j}$. In other words, we are only
interested in knowing whether or not a configuration $y$ consistent
with $y_j$ that scores highly in each sub-model $\pscore{y}$ {\em
  exists.} We show in the remainder of this section that the
requirement that a {\em single} $y$ consistent with $y_j$ optimizes
the score of each submodel (i.e, that all sub-models {\em agree}) is
not necessary for the purposes of filtering. Thus, because we do not
have to enforce agreement between sub-models, we can apply \SPC to
intractable (loopy) models, but pay only a linear (factor of $P$)
increase in inference time over the tractable sub-models.

Formally, we define a single level of the Ensemble-\SPC as a set of
$P$ models such that $\score{y} = \sum_p \pscore{y}$.  We let
$\pmmarg{y_{c}}$, $\pscoremax$ and $\pthr$ denote the
max-marginals, max score, and threshold of the $p$'th model,
respectively. Recall that the {\em argmax-marginal} or {\em witness}
$\pwit{y_j}$ is defined as the maximizing complete assignment of the
corresponding max-marginal $\pmmarg{y_j}$. Then we have that
\begin{eqnarray}
  \label{eq:mmarg-equality}
  \mmarg{y_j} & = & \sum_p \pmmarg{y_j}  \qquad \textrm{(with agreement: $y = \pwit{y_j}$, $\forall p$)} \\
  \mmarg{y_j} & \le & \sum_p \pmmarg{y_j}  \qquad \textrm{(in general)}
\end{eqnarray}
Note that if we do not require the sub-models to agree, then
$\mmarg{y_j}$ is strictly less than $\sum_p \pmmarg{y_j}$. Nonetheless,
as we show next, the approximation $\mmarg{y_j} \approx \sum_p
\pmmarg{y_j}$ is still useful and sufficient for filtering in a
structured cascade.

We now show that if a given label $y$ has a high score in the full
model, it must also have a large ensemble max-marginal score, even if
the sub-models do not agree on the argmax. This extends Lemma
\ref{prop:pruning} for the ensemble case, as follows:
\begin{lem}[Joint Safe Filtering]
  \label{lem:joint_safe}
  If $\sum_p \pscore{y} > t$, then $\sum_p \pmmarg{y_j} > t$ for all $j$.
\end{lem}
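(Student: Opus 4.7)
The plan is to observe that the result follows almost immediately from the definition of max-marginals, by applying the lower bound inherent to each sub-model's max-marginal and then summing over $p$. The work in the unified (non-ensemble) case was done by Lemma \ref{prop:pruning}; here we just need to check that the argument survives the decomposition into $P$ sub-models even though the sub-models need not agree on a witness.

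First, I would fix an arbitrary output index $j$ and recall that for each sub-model $p$,
\begin{equation*}
\pmmarg{y_j} = \max_{y': y'_j = y_j} \pscore{y'}.
\end{equation*}
In particular, the specific $y$ in the hypothesis is a feasible candidate in the $\max$ on the right-hand side (since trivially $y_j = y_j$), so $\pmmarg{y_j} \ge \pscore{y}$. This is the only place where the definition is used, and it is done sub-model by sub-model, which is precisely why agreement between witnesses is not needed: each sub-model independently gets to use $y$ as a lower-bounding candidate without caring what the other sub-models' optimal witnesses look like.

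Second, I would sum this inequality over $p = 1, \dots, P$, obtaining
\begin{equation*}
\sum_p \pmmarg{y_j} \;\ge\; \sum_p \pscore{y} \;>\; t,
\end{equation*}
where the final strict inequality is the hypothesis of the lemma. Since $j$ was arbitrary, this holds for all $j$, completing the argument.

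There is essentially no obstacle: the proof is two lines, with the only subtlety being the conceptual one already highlighted in the discussion preceding the lemma, namely that the witnesses $\pwit{y_j}$ may disagree across $p$ yet the filtering bound still goes through because each $\pmmarg{y_j}$ individually dominates $\pscore{y}$. The harder ensemble-specific work (namely, the \emph{lack} of an equality in \eqref{eq:mmarg-equality} and the question of how much filtering strength is lost) is not part of this statement and would be deferred to subsequent results.
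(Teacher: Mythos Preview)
Your proof is correct and essentially identical to the paper's own argument: observe $\pmmarg{y_j} \ge \pscore{y}$ for each $p$ (since $y$ is feasible in the max), then sum over $p$ and apply the hypothesis. The paper's proof is the same two lines, with the surrounding discussion likewise noting that agreement among the sub-models' witnesses is not required.
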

\begin{proof}
  In English, this lemma states that if the global score is above a
  given threshold, then the sum of sub-model max-marginals is also
  above threshold (with no agreement constraint). The proof is
  straightforward.  For any $y_j$ consistent with $y$, we have
  $\pmmarg{y_j} \ge \pscore{y}$. Therefore $\sum_p \pmmarg{y_j} \ge
  \sum_p \pscore{y} > t$.
\end{proof}
Therefore, we see that an agreement constraint is not necessary in
order to filter safely: if we ensure that the combined score $\sum_p
\pscore{y}$ of the true label $y$ is above threshold, then we can
filter without making a mistake if we compute max-marginals by running
inference separately for each sub-model. However, there is still
potentially a price to pay for disagreement. If the sub-models do not
agree, {\em and} the truth is not above threshold, then the threshold
may filter {\em all} of the states for a given variable $y_j$ and
therefore ``break'' the cascade. This results from the fact that
without agreement, there is no single argmax output $y^\star$ that is
always above threshold for any $\alpha$; therefore, we do not have an
equivalent to Lemma \ref{prop:lattices} for the ensemble
case. However, we note that in our experiments (Section
\ref{sec:videopose}), we never experienced such breakdown of the
cascades.



\section{Learning Structured Prediction Cascades}
\label{sec:learning}

When learning a cascade, we have two competing objectives that we must trade off:
\begin{itemize}
\item {\bf Accuracy:} Minimize the number of errors incurred by each
  level of the cascade to ensure an accurate inference process in
  subsequent models.
\item {\bf Efficiency:} Maximize the number of filtered max-marginals at each
level in the cascade to ensure an efficient inference process in
subsequent models.
\end{itemize}
Given a training set, we can measure the accuracy and efficiency of
our cascade, but what is unknown is the performance of the cascade on
test data. In section \ref{sec:theory}, we provide a
guarantee that our estimates of accuracy and efficiency will be
reasonably close to the true performance measures with high
probability. This suggests that optimizing parameters to achieve a
desired trade-off on training data is a good idea.

We begin by quantifying accuracy and efficiency in terms of
max-marginals, as used by \SPC.  We define the {\em filtering loss}
$\cL_f$ to be a 0-1 loss indicating a mistakenly eliminated correct
assignment. As discussed in the previous section, Lemma
\ref{prop:pruning} states that an error can only occur if $\score{y}
\le \thr$. We also define the {\em efficiency loss} $\cL_e$ to simply
be the proportion of unfiltered clique assignments.

\begin{defn}[Filtering loss] A filtering error occurs when a
  max-marginal of a clique assignment of the correct output $y$ is pruned. We 
define filtering loss as \begin{equation}
    \label{eq:filter-loss}
    \cL_f(x,y;\theta,\alpha) = \ind{\score{y}\le \thr}.  
  \end{equation}
  
\end{defn}
\begin{defn}[Efficiency loss] The efficiency loss is the proportion of
  unpruned clique assignments:
  \begin{equation}
    \label{eq:eff-loss}
    \cL_e(x,y;\theta,\alpha) =
    \frac{1}{\sum_{c \in \cC} |\cY_c|} \sum_{c\in C, y_c \in \cY_c} \ind{\mmarg{y_c} >
      \thr}.
  \end{equation}
\end{defn}

We now turn to the problem of learning parameters $\theta$ and
tuning of the threshold parameter $\alpha$ from training data. We have two
competing objectives, accuracy ($\cL_f$) and efficiency ($\cL_e$),
that we must trade off. Note that we can trivially minimize either of
these at the expense of maximizing the other. If we set
($\theta,\alpha$) to achieve a minimal threshold such that no
assignments are ever filtered, then $\cL_f = 0$ and $\cL_e =
1$. Alternatively, if we choose a threshold to filter every
assignment, then $\cL_f = 1$ while $\cL_e = 0$. To learn a cascade of
practical value, we can minimize one loss while constraining the other
below a fixed level $\epsilon$. Since the ultimate goal of the cascade
is accurate classification, we focus on the problem of minimizing
efficiency loss while constraining the filtering loss to be below a
desired tolerance.

We express the cascade learning objective for a {\em single level} of
the cascade as a joint optimization over $\theta$ and $\alpha$:
\begin{equation}
  \label{eq:cascade-learning}
  \min_{\theta,\alpha}\;\; \bbE_{X,Y}\left[ \cL_e(X,Y;\theta,\alpha) \right] \textrm{ s.t. } 
  \bbE_{X,Y}\left[ \cL_f(X,Y;\theta,\alpha)\right] \le \epsilon.
\end{equation}

We solve this problem with for a single level of the cascade as
follows. First, we define a convex upper-bound \eqref{eq:convex-opt}
on the filter error $\cL_f$, making the problem of minimizing $\cL_f$
convex in $\theta$ (given $\alpha$). We learn $\theta$ to minimize
filter error for several settings of $\alpha$ (thus controlling
filtering efficiency). Given several possible values for $\theta$, we
optimize the objective \eqref{eq:cascade-learning} over $\alpha$
directly, using estimates of $\cL_f$ and $\cL_e$ computed on a
held-out development set, and choose the best $\theta$. Note that in
Section \ref{sec:theory}, we present a theorem bounding the deviation
of our estimates of the efficiency and filtering loss from the
expectation of these losses.

\begin{algorithm}[tb]
  \caption{Forward Batch Learning of Structured Prediction
    Cascades.}
  \label{alg:learning}
  \begin{algorithmic}
    \REQUIRE Data $\{(x^i, y^i)\}_1^n$, structured feature generators
    $\bft^0,\dots, \bft^T$ and parameters $\alpha^0, \dots, \alpha^{T-1}$.

    \ENSURE Cascade parameters $\theta^0, \dots, \theta^T$.

    \STATE {\bf Initialize} $\cS^0(x^i) = \cY(x^i)$ for
    each example.  

    \FOR{$t = 0$ {\bfseries to} $T-1$} 

    \STATE $\bullet$ Optimize \eqref{eq:convex-opt} with sparse
    inference over the valid set $\cS^t$ to find $\theta^t$. 
    
    \STATE $\bullet$ Generate $\cS^{t+1}(x^i)$ from $\cS^t(x^i)$ by
    filtering low-scoring clique assignments $y_c$ where
    $$\mmargit{y_c} \le \tau(x^i; \theta^t,  \alpha^t)$$
    \ENDFOR

    $\bullet$ Learn $\theta^T$ using structured predictor over sparse state spaces $\cS^T(x^i)$.
  \end{algorithmic}
\end{algorithm}

For the first step of learning a single level of the cascade, we learn
the parameters $\theta$ for a fixed $\alpha$ using the following
convex margin optimization problem:
\begin{equation}
  \label{eq:convex-opt}
SPC:\;\;\;\;  \min_{\theta} \;\;\;\ \frac{\lambda}{2}||\theta||^2 + \frac{1}{n}\sum_i H(x^i,y^i;\theta,\alpha),
\end{equation}
where $H$ is a convex upper bound on the filter loss $\cL_f$,
\begin{equation*}
  H( x^i,y^i;\theta,\alpha) = \max\{0, \ell + \thri - \theta(x^i,y^i)\}.
\end{equation*}
The upper-bound $H$ is a hinge loss measuring the margin between the
filter threshold $\thri$ and the score of the truth
$\theta^\top\bft(x^i,y^i)$; the loss is zero if the truth scores above
the threshold by margin $\ell$ (in practice, the length $\ell$ can
vary by example).  We solve \eqref{eq:convex-opt} using stochastic
sub-gradient descent. Given a sample $(x,y)$, we apply the following
update if $H(\theta, x,y)$ (i.e., the sub-gradient) is non-zero:
\begin{equation}
    \label{eq:spf_update}
    \theta' \leftarrow (1-\eta\lambda)\theta + \eta \bft(x,y) - \eta \alpha 
\bft(x,y^\star)
    - \eta(1-\alpha) \frac{1}{\sum_c |\cY_c| } \sum_{c \in \cC, y_c \in \cY} \bft(x,\wit{y_c}).
\end{equation}
Above, $\eta$ is a learning rate parameter. The key distinguishing
feature of this update compared to the structured perceptron update is
that it subtracts features included in all max-marginal assignments
$\wit{y_c}$.


Note that because \eqref{eq:convex-opt} is $\lambda$-strongly convex,
we can choose $\eta_t = 1/(\lambda t)$ and add a projection step to keep 
$\theta$ in a fixed norm-ball. The update then corresponds to the Pegasos
update with convergence guarantees of $\tilde{O}(1/\epsilon)$ iterations for
$\epsilon$-accurate solutions~\citep{shalev07}.

An overview of the entire learning process for the whole cascade is
given in Algorithm \ref{alg:learning}. Levels of the cascade are
learned incrementally using the output of the previous level of the
cascade as input. Note that Algorithm \ref{alg:learning} trades memory
efficiency for time efficiency by storing the sparse data structures
$\cS^t$ for each example. A more memory-efficient (but less time
efficient) algorithm would instead run all previous layers of the
cascade for each example during sub-gradient descent optimization of
\eqref{eq:convex-opt}.

Finally, in our implementation, we can sometimes achieve better
results by further tuning the threshold parameters $\alpha^t$ using a
development set. We first learn $\theta^{t}$ using some fixed
$\alpha^t$ as before. However, we then choose an improved
$\bar{\alpha}^t$ by maximizing efficiency subject to the constraint
that filter loss on the development set is less than a {\em tolerance}
$\epsilon_t$:
$$\bar{\alpha}^t \leftarrow \argmin_{0 \le \alpha' < 1} \sum_{i=1}^n \cL_e(x^i,y^i;\theta^{t},\alpha') 
\st \frac{1}{n} \sum_{i=1}^n \cL_f(x^i,y^i;\theta^{t},\alpha') \le
\epsilon_t.$$ 
Furthermore, we can repeat this tuning process for
several different starting values of $\alpha^t$ and pick the
$(\theta^t, \bar{\alpha}^t)$ pair with the optimal trade-off, to
further improve performance. In practice, we find that this procedure
can substantially improve the efficiency of the cascade while keeping
accuracy within range of the given tolerance. 

\out{\todo{Should we also
  mention that we use partitions of the data like in Stacked
  Generalization (Wolpert, 1992) to make sure that data is not re-used
  in the training process?}}

It is straightforward to adapt Algorithm \ref{alg:learning} for the
Ensemble-\SPC case. As in the previous section, we first define the
natural loss function for sums of max-marginals, as suggested by Lemma
\ref{lem:joint_safe}. We define the {\em joint filtering loss} as
follows,
\begin{defn}[Joint Filtering Loss]
  \begin{equation}
    \label{eq:joint_loss}
    \cL_{joint}(x,y;\theta,\alpha) =  \ind{\sum_p\pscore{y} \le \sum_p \pthr}.
  \end{equation}
\end{defn}
We now discuss how to minimize the {\em joint} filter
loss \eqref{eq:joint_loss} given a dataset. We rephrase the \SPC
optimization problem \eqref{eq:convex-opt} using the ensemble
max-marginals to form the ensemble cascade margin problem,
\begin{equation}
  \label{eq:joint_slack}
  \min_{\theta_1, \dots, \theta_P, \xi \ge 0} \frac{\lambda}{2} \sum_p ||\theta_p||^2 + \frac{1}{n}\sum_i \xi^i
  \suchthat \sum_p \pscorei{y^i} \ge \sum_p \pthri + \ell^i - \xi^i.
\end{equation}
Seeing that the constraints can be ordered to show $\xi^i \le \sum_p
\pthri -\sum_p \pscorei{y^i} +\ell^i$, we can form an equivalent
unconstrained minimization problem,
\begin{equation}
  \label{eq:joint_hinge}
  \min_{\theta_1, \dots, \theta_P} \frac{\lambda}{2}\sum_p ||\theta_p||^2 + \frac{1}{n}\sum_{i} \hinge{\sum_p \pthri - \pscorei{y^i} + \ell^i},
\end{equation}
where $\hinge{z} = \max\{z, 0\}$. Finally, we
take the subgradient of the objective in \eqref{eq:joint_hinge} with respect to each
parameter $\theta_p$. This yields the following update rule for the
$p$'th model:
\begin{equation}
  \label{eq:joint_update}
  \theta_p \leftarrow (1-\lambda)\theta_p +
  \begin{cases}
    0 & \textrm{ if } \sum_p \pscorei{y^i} \ge \sum_p \pthri + \ell^i, \\
    \nabla\pscorei{y^i} - \nabla\pthri & \textrm{ otherwise.}
  \end{cases}
\end{equation}
This update is identical to the original \SPC update with the
exception that we update each model individually only when the
ensemble has made a mistake {\em jointly}. Thus, learning to filter
with the ensemble requires only $P$ times as many resources as
learning to filter with any of the models individually. We simply
replace the optimization over \eqref{eq:convex-opt} step in Algorithm
\ref{alg:learning} with an optimization over \eqref{eq:joint_hinge}.


\section{Generalization Analysis}
\label{sec:theory}

We now present generalization bounds on the filtering and efficiency loss
functions for a single level of a cascade.  To achieve bounds on the entire
cascade, these  can be combined provided that a fresh sample is
used for each level.  To prove the following bounds, we
make use of  Gaussian complexity results from
\cite{bartlettM02}, which requires vectorizing scoring and loss functions in
a novel structured manner (details in Appendix \ref{sec:proofs}).
The main theorem in this section
depends on Lipschitz dominating cost functions $\cL_f^{\gamma}$ and
$\cL_e^{\gamma}$ that upper bound $\cL_f$ and $\cL_e$.
Note that as $\gamma \rightarrow 0$, we recover $ \cL_f$ and $ \cL_e$.

\begin{defn}[Margin-augmented losses]  We define  margin-augmented filtering and efficiency losses using the usual $\gamma$-margin function:
 \begin{eqnarray}
    r_{\gamma}(z) &=& 
    \begin{cases}
       1 & \textrm{if } z < 0 \\
       1-z/\gamma & \textrm{if } 0 \le z \le \gamma \\
        0 & \textrm{if } z > \gamma.
    \end{cases}\\
    \cL_f^{\gamma}(x,y;\theta,\alpha) &=& r_{\gamma}(\score{y}- \thr)\\  
    \cL_e^{\gamma}(x,y;\theta,\alpha) &=&  \frac{1}{\sum_{c \in \cC} |\cY_c|} \sum_{c\in C, y_c \in \cY_c} r_{\gamma}( \thr-\mmarg{y_c}).
 \end{eqnarray}
\end{defn}

\begin{thm}
  \label{thm:gencascade}
Fix $\alpha \in
  [0,1]$ and let
  $\Theta$ be the class of all scoring functions $\theta$ with
  $||\theta||_2 \le B$, let $|\cC|$ be the total number of cliques,
  $m=\sum_{c \in \cC} |\cY_c|$ be the total number of clique assignments,
  $||\bft_{c}(x,y_c)||_2 \le 1$ for all $x \in \cX,c\in\cC$ and $y_c\in\cY_{c}$. Then 
  there exists a constant $c$ such that for
  any integer $n$ and any $0 < \delta < 1$ with probability $1-\delta$
  over samples of size $n$, every $\theta \in \Theta$ satisfies:
  \begin{eqnarray}
    \label{eq:gencascade}
      \bbE \left[\cL_f(X,Y;\theta,\alpha)\right] &\le& \hat{\bbE}\left[\cL_f^{\gamma}(X,Y;\theta,\alpha)\right] +
      \frac{cmB\sqrt{\ncliques}}{\gamma\sqrt{n}} 
      + \sqrt{\frac{8 \ln (2/\delta)}{n}},\\
    \label{eq:gencascade-eff}
      \bbE \left[\cL_e(X,Y;\theta,\alpha)\right] &\le& \hat{\bbE}\left[\cL_e^{\gamma}(X,Y;\theta,\alpha)\right] +
      \frac{cmB\sqrt{\ncliques}}{\gamma\sqrt{n}} 
      + \sqrt{\frac{8 \ln (2/\delta)}{n}},
  \end{eqnarray}
  where $\hat{\bbE}$ is the empirical expectation with respect to
  the sample. 
\end{thm}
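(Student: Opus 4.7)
The plan is to apply the Gaussian complexity generalization framework of \cite{bartlettM02}. That framework gives, for any $L$-Lipschitz function $\phi$ and any real-valued function class $F$ with Gaussian complexity $G_n(F)$, a uniform bound of the form $\bbE[\phi(f)] \le \hat{\bbE}[\phi(f)] + 2L\, G_n(F) + \sqrt{8\ln(2/\delta)/n}$ with probability $\ge 1-\delta$. Since the margin function $r_\gamma$ is $(1/\gamma)$-Lipschitz and dominates the $0/1$ indicator, we have $\cL_f \le \cL_f^\gamma$ and $\cL_e \le \cL_e^\gamma$ pointwise, so it suffices to prove the bound with the empirical margin-augmented losses on the right, which is exactly the form of the theorem.

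The main step is to control the Gaussian complexity of the composed loss classes $\{(x,y) \mapsto \cL_f^\gamma(x,y;\theta,\alpha) : \|\theta\|_2 \le B\}$ and its efficiency analogue. I would introduce a vector-valued function class whose coordinates are $\score{y}$, the global maximum $\max_{y'} \score{y'}$, and the max-marginals $\{\mmarg{y_c}\}_{c\in\cC,\, y_c\in\cY_c}$. The threshold $\thr$ is a fixed (in $\theta$) convex combination of the latter two groups, and both $\cL_f^\gamma$ and $\cL_e^\gamma$ are Lipschitz functions of this vector. A contraction / structural Gaussian-complexity bound for Lipschitz compositions (from \cite{bartlettM02}) then reduces the problem to controlling $G_n$ of the underlying vector-valued class.

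Each coordinate of this vector class is either linear in $\theta$ or the supremum of a collection of linear functions of $\theta$, so its Gaussian complexity is controlled by that of the base linear class $\{(x,y) \mapsto \theta^\top \bft(x,y) : \|\theta\|_2 \le B\}$. Using the clique decomposition $\bft = \sum_{c\in\cC} \bft_c$ with $\|\bft_c(x,y_c)\|_2 \le 1$, a Cauchy--Schwarz argument over cliques gives $\|\bft(x,y)\|_2 \le \sqrt{\ncliques}$, yielding a base Gaussian complexity of order $B\sqrt{\ncliques}/\sqrt{n}$. The vector class carries $m+1$ coordinates of this type, and the standard structural result for vector-valued classes pays at most a factor of $m$. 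Combining with the $1/\gamma$ Lipschitz constant of $r_\gamma$ produces the claimed complexity term $cmB\sqrt{\ncliques}/(\gamma\sqrt{n})$; the $\sqrt{8\ln(2/\delta)/n}$ term comes from the McDiarmid concentration step inside the Bartlett--Mendelson derivation.

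The main obstacle is the structural vectorization itself: one must choose the representation so that the $m$ dependence enters multiplicatively and not super-linearly, despite each max-marginal being a supremum over an exponentially large set of consistent outputs and despite $\cL_e^\gamma$ being a sum of $m$ margin terms. The key is that $\cL_e^\gamma$ is normalized by $1/m$, so each summand contributes only $1/(m\gamma)$ of effective Lipschitz constant, while $\cL_f^\gamma$ depends on the $m$ max-marginals only through the single convex combination defining $\thr$. Consequently both losses are $(1/\gamma)$-Lipschitz in the appropriate norm on the vector, and the dimension $m$ enters only through the number of coordinates whose complexities we sum, not through the Lipschitz constant. This ``novel structured'' vectorization, deferred to Appendix~\ref{sec:proofs}, is the technical heart of the argument.
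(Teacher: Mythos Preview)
Your high-level plan---apply the Bartlett--Mendelson Gaussian-complexity framework, dominate the indicator losses by the margin losses, and control the complexity of the composed class via a Lipschitz contraction over a structured vectorization---is exactly the paper's. The gap is in the specific vectorization you choose and in where the $\sqrt{\ncliques}$ factor comes from.

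You propose coordinates $\score{y}$, $\max_{y'}\score{y'}$, and the $m$ max-marginals $\mmarg{y_c}$. These are \emph{suprema} of linear functions of $\theta$, not linear functions, so the coordinate classes $H_i$ are not the base linear class; asserting that their Gaussian complexity ``is controlled by that of the base linear class'' is a hand-wave that would itself require a contraction argument. Relatedly, your claim that Cauchy--Schwarz gives $\|\bft(x,y)\|_2\le\sqrt{\ncliques}$ is not justified: with $\bft(x,y)=\sum_{c}\bft_c(x,y_c)$ and $\|\bft_c\|_2\le 1$, the triangle inequality gives only $\ncliques$; getting $\sqrt{\ncliques}$ would require the $\bft_c$ to lie in orthogonal subspaces, which is not assumed.

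The paper avoids both issues by vectorizing one level lower: the $m$ coordinates are the raw clique scores $\bh_{y_c}(x)=\theta^\top\bft_c(x,y_c)$, which are genuinely linear with $\|\bft_c\|_2\le 1$, so each coordinate class satisfies $\hat G_n(H_i)\le 2B/\sqrt{n}$ directly. All the maxes (max-marginals, global max, threshold) are then pushed into the cost function $\phi(y,\cdot):\reals^m\to\reals$, and the technical work is to show $\phi_f$ and $\phi_e$ are $\sqrt{2\ncliques}/\gamma$-Lipschitz in this vector. The $\sqrt{\ncliques}$ appears here, because any output's indicator vector $\bv(y)\in\{0,1\}^m$ has exactly $\ncliques$ ones, so $\|\bv(y)-\bv(y')\|_2\le\sqrt{2\ncliques}$. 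The key device is an argmax-swapping inequality: for $\bv_i=\bv(\argmax_{y'}\bz_i^\top\bv(y'))$, one has $\bz_1^\top\bv_1-\bz_2^\top\bv_2\ge (\bz_1-\bz_2)^\top\bv_1$ (since $\bv_2$ maximizes for $\bz_2$), which linearizes the difference of maxes and yields the Lipschitz bound via Cauchy--Schwarz on $\|\bv_1-\bv\|_2$.

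In short: right framework, wrong layer of vectorization. Move the maxes out of the coordinate classes and into the Lipschitz analysis of $\phi$; the $\sqrt{\ncliques}$ then comes from the $\ell_2$ norm of the clique-indicator vectors, not from the feature norm.
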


Theorem \ref{thm:gencascade} provides theoretical justification for
the definitions of the loss functions $\cL_e$ and $\cL_f$ and the
structured cascade objective; if we observe a highly accurate and
efficient filtering model $(\theta,\alpha)$ on a finite sample of
training data, it is likely that the performance of the model on
unseen test data will not be too much worse as $n$ gets large. Theorem
\ref{thm:gencascade} is the first theoretical guarantee on the
generalization of {\em accuracy} and {\em efficiency} of a structured
filtering model.

We now turn to ensemble setting and define an appropriate margin-augmented loss:

\begin{defn}[Ensemble margin-augmented loss]  
 \begin{eqnarray}
    \cL_{joint}^{\gamma}(x,y;\theta,\alpha) &=& r_{\gamma}\left(\sum_{p} \pscore{y}- \pthr\right)
 \end{eqnarray}
\end{defn}

\begin{thm}
  \label{thm:gencascade-ensemble}
  Fix $\alpha \in
  [0,1]$ and let $||\theta_p||_2 \le B/P$ for all $p$, and
  $||\bft_{c}(x,y_c)||_2 \le 1$ for all $x$ and $y_c$. Then there exists a
  constant $c$ such that for any integer $n$ and any $0 < \delta < 1$
  with probability $1-\delta$ over samples of size $n$, every $\theta
  = \{\theta_1, \dots, \theta_P\}$  satisfies:
  \begin{equation}
    \label{eq:gencascade-ens}
      \bbE \left[\cL_{joint}(X,Y;\theta,\alpha)\right] \le
      \hat{\bbE}\left[\cL_{joint}^{\gamma}(X,Y;\theta,\alpha)\right] +
      \frac{cmBP\sqrt{\ncliques}}{\gamma\sqrt{n}}
      + \sqrt{\frac{8 \ln (2/\delta)}{n}},
    \end{equation}
  where $\hat{\bbE}$ is the empirical expectation with respect to
  the sample.
\end{thm}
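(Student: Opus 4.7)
The plan is to reduce Theorem \ref{thm:gencascade-ensemble} to Theorem \ref{thm:gencascade} by exploiting that the argument of $r_\gamma$ in $\cL_{joint}^{\gamma}$ decomposes as a sum of $P$ single-model contributions. I would begin from the standard data-dependent generalization bound of \citet{bartlettM02}: since $r_\gamma$ is $(1/\gamma)$-Lipschitz and bounded in $[0,1]$, with probability at least $1-\delta$ over a sample of size $n$,
\begin{equation*}
\bbE[\cL_{joint}(X,Y;\theta,\alpha)] \le \hat{\bbE}[\cL_{joint}^{\gamma}(X,Y;\theta,\alpha)] + \tfrac{2}{\gamma}\, G_n(\mathcal{F}) + \sqrt{\tfrac{8\ln(2/\delta)}{n}},
\end{equation*}
where $G_n$ denotes empirical Gaussian complexity and $\mathcal{F}$ is the class of functions $(x,y)\mapsto \sum_p\bigl(\pscore{y}-\pthr\bigr)$ as each $\theta_p$ ranges over its $B/P$-ball. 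The proof then reduces to bounding $G_n(\mathcal{F})$.

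For this step I would decompose $\mathcal{F}$ as the Minkowski sum $\mathcal{F}_1+\cdots+\mathcal{F}_P$, where $\mathcal{F}_p = \{(x,y)\mapsto \pscore{y}-\pthr : \|\theta_p\|_2 \le B/P\}$ depends only on the single parameter block $\theta_p$. By the subadditivity of Gaussian complexity under Minkowski sums, $G_n(\mathcal{F}) \le \sum_p G_n(\mathcal{F}_p)$. Each $\mathcal{F}_p$ has precisely the structure of the single-model filtering class controlled in the proof of Theorem \ref{thm:gencascade} (Appendix \ref{sec:proofs}), but with parameter radius $B/P$ in place of $B$. Invoking that analysis block by block yields $G_n(\mathcal{F}_p) \le c' m (B/P)\sqrt{\ncliques}/\sqrt{n}$ for a universal constant $c'$; summing over the $P$ blocks and absorbing constants delivers the claimed rate. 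The extra factor of $P$ in the displayed constant $cmBP$ arises from a union-bound-style accounting of the $P$ separate max-and-mean-of-max-marginal contractions inside the threshold terms $\pthr$---slightly looser than viewing the ensemble as a single concatenated linear predictor, but simpler to state.

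The principal obstacle is verifying that the single-model vectorization argument used for Theorem \ref{thm:gencascade}---which handles the $\max_y \pscore{y}$ and the mean of max-marginals via Lipschitz contractions applied to an $m$-dimensional vectorized scoring function---goes through block by block. This is guaranteed because $\pscore{y}$ and $\pthr$ depend on only the block $\theta_p$, so the sum does not couple the blocks: all Lipschitz and convexity structure needed for the per-block analysis is preserved. Equivalently, one may view $(\bft^{(1)},\dots,\bft^{(P)})$ and $(\theta_1,\dots,\theta_P)$ as a single concatenated linear predictor, in which case $\|\theta_p\|_2 \le B/P$ together with $\|\bft_c(x,y_c)\|_2 \le 1$ uniformly control the operator norm of the ensemble predictor and Theorem \ref{thm:gencascade}'s argument applies directly. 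Finally, adding back the McDiarmid term $\sqrt{8\ln(2/\delta)/n}$ and the $1/\gamma$ contraction factor yields the stated bound.
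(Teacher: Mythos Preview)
Your proposal is correct and broadly parallels the paper, but the organization differs in a meaningful way. The paper's proof concatenates the $P$ sub-models' clique-score vectors into a single $mP$-dimensional $\bh(x)$, bounds the Lipschitz constant of $\phi_{joint}(y,\cdot)=r_\gamma\bigl(\sum_p\phi_3(\bz_p)\bigr)$ on $\reals^{mP}$ as $\sqrt{2|\cC|}\,P/\gamma$ (applying Lemma~\ref{lem:phidiff} block by block and summing), and then invokes \eqref{eq:multi_gaus} with $mP$ linear coordinate classes of radius $B/P$. You instead peel off the scalar $r_\gamma$ contraction first, write the argument class as a Minkowski sum $\mathcal F_1+\cdots+\mathcal F_P$, use subadditivity of Gaussian complexity, and bound each $G_n(\mathcal F_p)$ by the single-model vectorized analysis with radius $B/P$.

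Both routes are valid. One comment, however: your Minkowski-sum route actually does \emph{not} produce the extra factor $P$ at all---summing $P$ blocks of size $c'm(B/P)\sqrt{|\cC|}/\sqrt n$ gives $c'mB\sqrt{|\cC|}/\sqrt n$, so after the $1/\gamma$ contraction you get the stronger rate $cmB\sqrt{|\cC|}/(\gamma\sqrt n)$. Your sentence attributing the displayed $P$ to ``union-bound-style accounting of the $P$ separate contractions'' is therefore backwards: the $P$ in the theorem as stated comes from the paper's (loose) Lipschitz estimate $\sqrt{2|\cC|}\,P$ on the concatenated space, not from your decomposition. The ``single concatenated linear predictor'' alternative you sketch at the end is precisely the paper's argument, and it is that route---not yours---which incurs the extra $P$.
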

The proof of Theorem \ref{thm:gencascade-ensemble} is given in
Appendix \ref{sec:proofs}.

\out{ 

\subsection{old stuff}
The main theorem in this section
depends on Lipschitz dominating cost functions $\phi_f$ and
$\phi_e$ that upper bound $\cL_f$ and $\cL_e$. To formulate these
functions, we use the above vectorization: define $\ssp: \mcal{X}
\mapsto \reals^m$, where $m=\sum_{c\in\cC} |\cY_c|$ to be the set of
scores for all possible clique assignments, with $\ssp =
\{\theta^\top\bft_c(x,y_c) \mid c \in \cC, y_c \in \cY_c \}.$ Finally,
we define the auxiliary function $\phi$ to be the difference between
the score of output $y$ and the threshold as $\phi(\theta,x,y) =
\score{y} - \thr$. We now state the main result of this section.

\begin{thm}
  \label{thm:gencascade}
  Let $\ssp$, $\cL_e$, $\cL_f$, and $\phi$ be defined as above. Let
  $\Theta$ be the class of all scoring functions $\theta$ with
  $||\theta||_2 \le B$, the total number of cliques $\ncliques$, and
  $||\bft(x,y_c)||_2 \le 1$ for all $x$ and $y_c$. Define the
  dominating cost functions $\phi_f(\theta, x,y) =
  r_\gamma(\phi(\theta,x,y)$ and $\phi_e(y,\sspRV) = \frac{1}{m}\sum_{c
    \in \cC, y_c} r_\gamma(\phi(v(\wit{y_c}), -\sspRV))$, where
  $r_\gamma(\cdot)$ is the ramp function with slope $\gamma$. Then for
  any integer $n$ and any $0 < \delta < 1$ with probability $1-\delta$
  over samples of size $n$, every $\sspRV \in \Theta$ and $\alpha \in
  [0,1]$ satisfies:
  \begin{equation}
    \label{eq:gencascade}
      \bbE \left[\cL_f(\theta,X,Y)\right] \le \hat{\bbE}\left[\phi_f(Y, \sspRV)\right] +
      O\left(\frac{m\sqrt{\ell}B}{\gamma\sqrt{n}}\right) 
      + \sqrt{\frac{8 \ln (2/\delta)}{n}},
  \end{equation}
  where $\hat{\bbE}$ is the empirical expectation with respect to
  training data. Furthermore, \eqref{eq:gencascade} holds with
  $\cL_f$ and $\phi_f$ replaced by $\cL_e$ and $\phi_e$.
\end{thm}
This theorem relies on the general bound given in \cite{bartlettM02},
the properties of Rademacher and Gaussian complexities (also in
\cite{bartlettM02}), and the following lemma:
\begin{lem}
  \label{lem:lipschitz}
  $\phi_f(y,\cdot)$ and $\phi_e(y,\cdot)$ are Lipschitz (with respect
  to Euclidean distance on $\reals^m$) with constant
  $\sqrt{2\ell}/\gamma$.
\end{lem}
A detailed proof of Theorem \ref{thm:gencascade} and Lemma
\ref{lem:lipschitz} is given in Appendix \ref{sec:proofs}.

Theorem \ref{thm:gencascade} provides theoretical justification for
the definitions of the loss functions $\cL_e$ and $\cL_f$ and the
structured cascade objective; if we observe a highly accurate and
efficient filtering model $(\theta,\alpha)$ on a finite sample of
training data, it is likely that the performance of the model on
unseen test data will not be too much worse as $n$ gets large. Theorem
\ref{thm:gencascade} is the first theoretical guarantee on the
generalization of {\em accuracy} and {\em efficiency} of a structured
filtering model.


To prove a bound for the ensemble setting, we re-use the vectorization
trick from the previous section, this time vectorizing each function
$\pscore{\cdot}$ together as $\ssp$ so $\ssp$ is now a single
$mP$-dimensional real vector. We then provide a bound on the
generalization of the joint filtering loss using $\ssp$:

\begin{thm}
  \label{thm:gencascade-ensemble}
  For any fixed $\alpha \in [0,1)$, define the dominating cost
  function $\phi(y, \ssp) = r_\gamma(\frac{1}{P} \sum_p \pscore{y} -
  \pthr)$, where $r_\gamma(\cdot)$ is the ramp function with slope
  $\gamma$. Let $||\theta_p||_2 \le F$ for all $p$, and
  $||\bft(x,y_j)||_2 \le 1$ for all $x$ and $y_j$. Then there exists a
  constant $C$ such that for any integer $n$ and any $0 < \delta < 1$
  with probability $1-\delta$ over samples of size $n$, every $\theta
  = \{\theta_1, \dots, \theta_P\}$ satisfies:
  \begin{equation}
    \label{eq:gencascade}
      \bbE \left[\cL_{joint}(Y, \ssp)\right] \le
      \hat{\bbE}\left[\phi(Y,\ssp)\right] +
      \frac{Cm\sqrt{\ell}FP}{\gamma\sqrt{n}}
      + \sqrt{\frac{8 \ln (2/\delta)}{n}},
    \end{equation}
  where $\hat{\bbE}$ is the empirical expectation with respect to
  training data.
\end{thm}
The proof of Theorem \ref{thm:gencascade-ensemble} is given in
Appendix \ref{sec:proofs}.
}


\section{Applications}
\label{sec:applications}

In this section, we explore in detail several evaluations of structured
prediction cascades. In Section \ref{sec:sequences} we describe a
structured prediction cascade for sequential data and apply this model
handwriting recognition. In Section \ref{sec:pose}, we
describe \SPC for articulated human pose estimation from single frame
images. Finally, in Section \ref{sec:videopose}, we evaluate
Ensemble-\SPC on a synthetic image segmentation task and the problem
of {\em detection and tracking} articulated poses in video.

\subsection{Linear-chain Cascade}
\label{sec:sequences}

In this section, we apply the structured prediction cascades framework
to sequence prediction tasks with increasingly high order linear-chain
models. The state space of a linear chain model is $\forall i \::
\cY_i = \{1, \dots, K\}$, where $K$ is the number of possible states.
Thus, the size of the state space is $K$. A $d$-order linear-chain
model has maximal cliques $\{x, y_i, y_{i-1}, \dots, y_{i-d}\}$. Thus,
for an order $d$ clique, there are $K^d$ possible clique assignments,
although we find that in practice very few high-order clique
assignments survive the first few levels of the cascade (see Table
\ref{tab:ocr}.)

For a $d$-order linear-chain model, the score of an output
$y$ is given by a combination of unary features and transition
features,
\begin{equation}
  \label{eq:linear-chain}
  \score{y} = \sum_{i=1}^\ell \theta_0^\top \bft_0(x, y_i)  + \sum_{i=1}^\ell \sum_{j=1}^d \theta_j^\top\bft_j(y_i, \; \dots \;, y_{i-j})
\end{equation}
where $\theta_0$ is a set of parameters for unary features
$\bft(x,y_i)$ that depend on a single output variable and $\theta_j$
is a set of parameters scoring $j$-order transition features
$\bft_j(y_i, \dots, y_{i-j})$. 


In general, any $d$-order linear-chain model can be equivalently
represented as a bigram (2-order) model with $K^{d-1}$ states. Thus,
it is simplest to implement a cascade of sequence models of increasing
order as a set of bigram models where the state space is increasing
exponentially by a factor of $K$ from one model to the next.  Given a
list of valid assignments $\cS^t$ in a $d$-order model, we can
generate an expanded list of valid assignments $\cS^{t+1}$ for a
$(d+1)$-order model by concatenating the valid $d$-grams with all
possible additional states.


\begin{figure}[t]
  \centering
  \includegraphics[width=\textwidth]{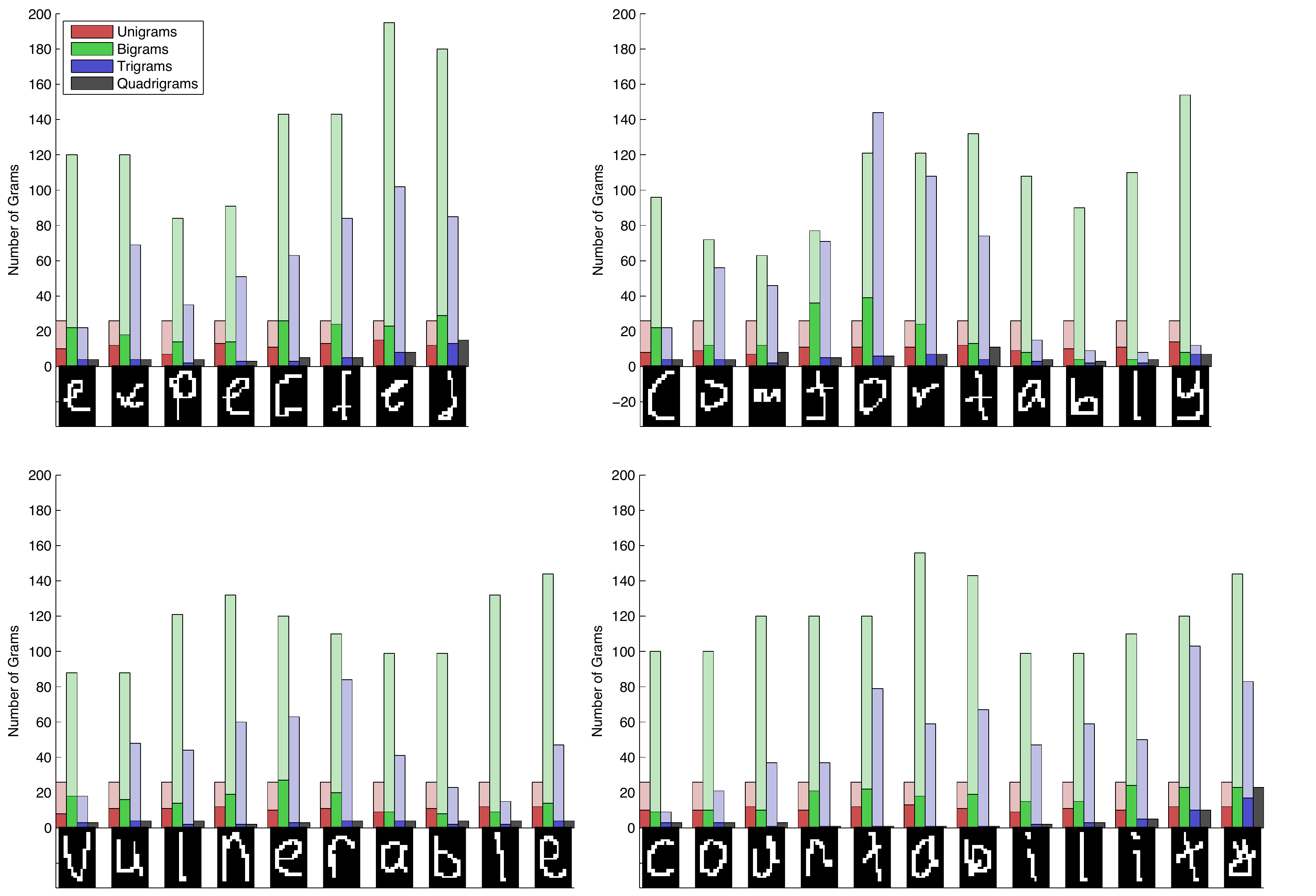}  
  \caption{Sparsity of inference during an example sequence
    cascade. Each panel shows the complexity of inference on a
    different example from the OCR dataset at each position in the
    sequence. The total height of each bar represents the size of the
    valid assignments $\cS^t$, while the shaded portion represents the
    remaining assignments after thresholding. Although complexity
    rises as unigrams are expanded into bigrams, filtering with
    bigrams and trigrams quickly reduces complexity to a few possible
    assignments at each position.}
  \label{fig:ocr-output}
\end{figure}

\subsubsection{Handwriting Recognition}

We first evaluated the accuracy of the cascade using the handwriting
recognition dataset from \citet{taskar03max}. This dataset consists of
6877 handwritten words, with average length of $\sim$8 characters,
from 150 human subjects, from the data set collected by
\citet{Kassel_Thesis}. Each word was segmented into characters, each
character was rasterized into an image of 16 by 8 binary pixels. The
dataset is divided into 10 folds; we used 9 folds for training and a
single withheld for testing (note that \citet{taskar03max} used 9
folds for testing and 1 for training due to computational limitations,
so our results are not directly comparable). Results are averaged
across all 10 folds.

Our objective was to measure the improvement in predictive accuracy as
higher order models were incorporated into the cascade. We trained six
cascades, up to a sixth-order (sexagram) linear-chain model. This is
significantly higher order than the typical third-order (trigram)
models typically used in sequence classification tasks. Note that in
practice, the additional accuracy gained by increasing the order of
the model might be offset by the additional filtering errors incurred
due to lengthening the cascade. Thus, each level of each cascade was
tuned to achieve maximum efficiency subject to a maximum error {\em
  tolerance} $\epsilon$, whereby $\alpha$ was set such that no more
than $\epsilon$ filtering error was incurred by each level of the
cascade.

Results are summarized in Table \ref{tab:ocr}. We found that using
higher order models led to a dramatic gain in accuracy on this
dataset, increasing character accuracy from 77.35\% to 98.54\% and
increasing word accuracy from 26.74\% to 96.16\%. It is interesting to
note that the word level accuracy of the sixth-order model is roughly
equivalent to the character-level accuracy of the trigram
model. Furthermore, using a development set, we found that a stricter
tolerance was required to gain accuracy from fifth- and sixth-order
models, as reflected in Table \ref{tab:ocr}. Finally, compared to
previous approaches on this dataset, our accuracies are much higher;
the best previously reported result on this dataset was 90.19\%
\citep{searn}.

In fact, the extremely high accuracies of our approach on this dataset
highlight the particular features of this data.  Due to the high
number of subjects used, there are only 55 unique words in the
handwriting recognition dataset. In fact, if just the first three
letters of each word are given exactly, one can guess the identity of
the word with 94.5\% accuracy. Given more letters, it is possible to
uniquely identify the word with 100\% accuracy. However, due to
inter-subject variance, previous approaches have not been able
approach this theoretical performance. By being able to utilize very
high order cliques, \SPC overcomes this limitation.

To gain intuition about the inference process of \SPC, a detailed
picture of the complexity of inference for a few representative
examples is presented in Figure \ref{fig:ocr-output} for the
fourth-order cascade model. This figure also demonstrates the
flexibility of the cascade: although a single threshold is chosen, the
max marginals around unambiguous portions of the input are eliminated
first.

\begin{table}
  \centering
  {\small
    \begin{tabular}{|r|cccccc|}
      \hline {\bf Model Order} & {\bf 1} & {\bf 2} & {\bf 3} & {\bf 4} &  {\bf 5}  & {\bf 6} \\
      \hline 
    Accuracy, Char. (\%) & 77.35  & 85.02 & 96.20 & 97.21 & 98.27 & 98.54 \\
    Accuracy, Word (\%) & 26.74 & 45.67 & 88.25 & 91.35 & 93.74 & 96.16 \\
    Filter Loss (\%) & --- & 0.50 & 0.73 & 1.00 & 0.75 & 0.57 \\
    Tolerance (\%) & 1.00 & 1.00 & 1.00 & 1.00 & 0.50 & 0.25 \\
    Avg. Num $n$-grams & 26.0 & 127.97 & 101.84 & 18.80 & 82.12 & 73.36 \\
    \hline
  \end{tabular}}
\caption{\small Summary of handwriting recognition results. For each level of the
  cascade, we computed prediction accuracy (at character and word
  levels) using a standard voting
  perceptron algorithm as well as the filtering loss and average
  number of unfiltered $n$-grams per position for the \SPC on the test set.}
  \label{tab:ocr}
\end{table}


\subsection{Pictorial Structure Cascade}
\label{sec:pose}

\begin{figure}[t]
  \centering
  \includegraphics[width=0.9\columnwidth]{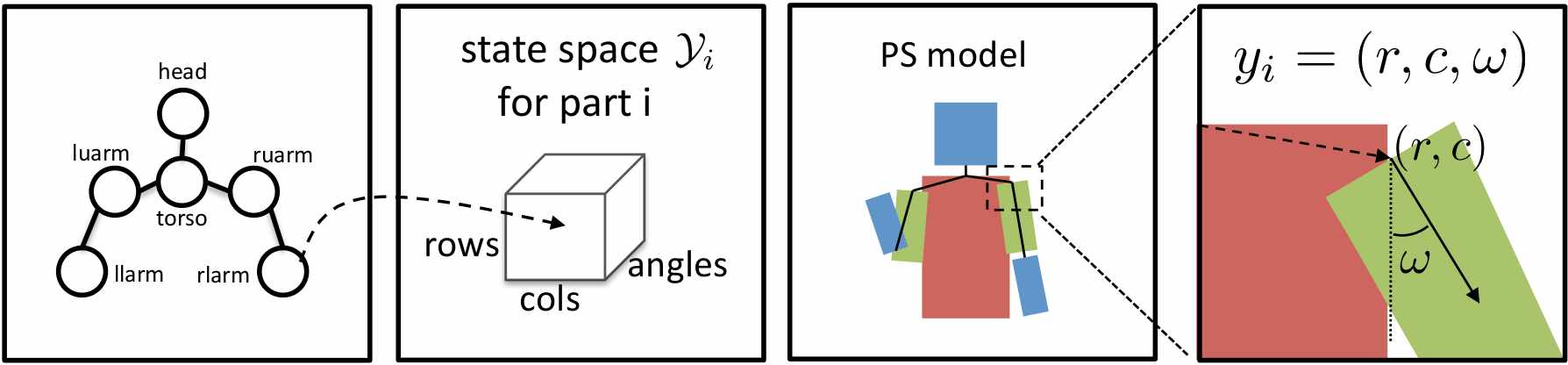}
  \caption{Basic PS model with state $\lpart_i$ for a part
    $\Lpart_i$. Left: graphical model representation.  Second: state space representation as a tensor.  Rightmost two panels: Illustration of state space laid out as a stick figure representation in image coordinates.}
  \label{fig:ps}
\end{figure}

Classical pictorial structures (PS) are a class of graphical models
where the nodes of the graph represents object parts, and edges
between parts encode pairwise geometric relationships. For modeling
human pose, the standard PS model is a tree structure with unary
potentials (also referred to as appearance terms) for each part and
pairwise terms between pairs of physically connected parts.
Figure~\ref{fig:ps} shows a PS model for 6 upper body parts, with
lower arms connected to upper arms, and upper arms and head connected
to torso.  Note that in previous
work~\citep{deva2006,felzps,ferrari08,andriluka09} (unlike
the approach described in this section), the pairwise terms do not
depend on data and are hence referred to as a ``spatial'' or
``structural'' prior.

The state of part $i$, denoted as $\lpart_i \in \Lspace_i$, encodes
the joint location of the part in image coordinates and the direction
of the limb as a unit vector: $\lpart_i = [\lpart_{ix} \; \lpart_{iy}
\; \lpart_{iu} \; \lpart_{iv}]^T$. The state of the model is the
collection of states of $\ell$ parts: $y = [\lpart_1, \ldots,
\lpart_\ell]$.  The size of the state space for each part,
$|\Lspace_i|$, is the number of possible locations in the image times
the number of pre-defined discretized angles. For example, we model
the state space of each part in a $80 \times 80$ grid for $\lpart_{ix}
\times \lpart_{iy}$, with 24 different possible values of angles,
yielding $|\Lspace_i| = 80 \times 80 \times 24 = 153,600$ possible placements.



Given a part configuration, we define cliques over pairwise and unary terms:
\begin{align}
\score{y} = \sum_{ij}  \theta_{ij}^T \bft_{ij}(x, \lpart_i,\lpart_j) +
\sum_i \theta_i^T\bft_i(x, \lpart_i)
\label{eqn:our_ps}
\end{align}
Thus, the parameters of the model are the pairwise and unary weight vectors
$\theta_{ij}$ and $\theta_i$ corresponding to the pairwise and unary
feature vectors $\bft_{ij}(x, \lpart_i,\lpart_j)$ and $\bft_i(x, \lpart_i)$.


One of the reasons pictorial structures models have been so popular in the 
literature is that~\citet{felzps} proposed a way to perform max inference on 
\eqref{eqn:our_ps} in linear time using distance transforms, which is only 
possible if the pairwise term is a quadratic function of the displacement 
between neighbors $y_i$ and $y_j$.  We wish to go beyond such a simple 
geometric prior for the pairwise term of~\eqref{eqn:our_ps}, and thus rely on 
standard $O(|\Lspace_i|^2)$ dynamic programming techniques to compute the MAP 
assignment or part posteriors, as was the case for linear-chain models in the 
previous section. However, unlike linear-chain models, many highly-effective
pairwise features one might design would be intractable to compute in
this manner for a reasonably-sized state space---for example an
$80\times80$ image with a part angle discretization of $24$ bins
yields $|\Lspace_i|^2 = 57.6$ billion part-part hypotheses, far too
many to store in a dynamic programming table (e.g., Figure
\ref{fig:computing-max-marginals}).


\subsubsection{Coarse-to-Fine Resolution Cascade}

\begin{figure}[t]
  \centering
\includegraphics[width=0.8\textwidth]{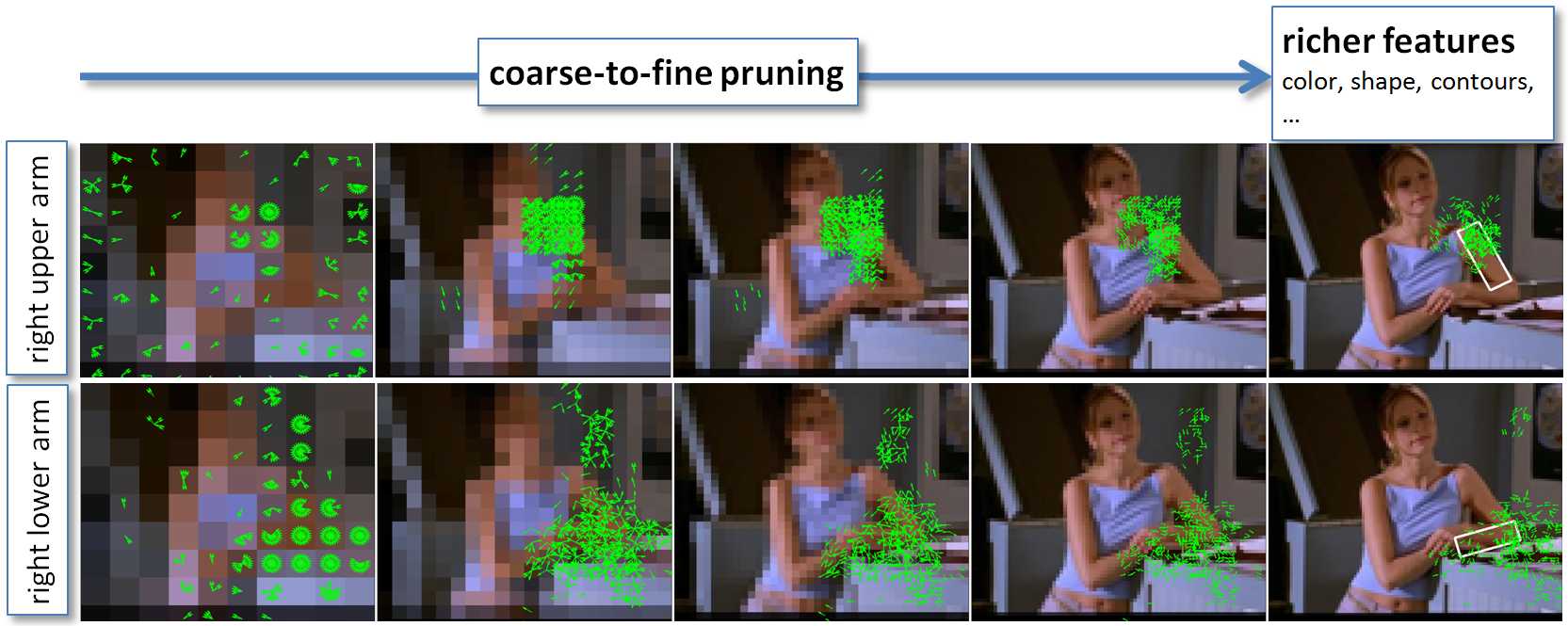}
\caption{\label{fig:pose_overview} Overview: A coarse-to-fine cascade
  of pictorial structures filters the pose space so that expressive
  and computationally expensive features can be used in the final
  pictorial structure.  Shown are 5 levels of the coarse-to-fine
  cascade for the right upper and lower arm parts.  Green vectors
  represent position and angle of unpruned states, the downsampled
  images correspond to the dimensions of the respective state space,
  and the white rectangles represent classification using our final
  model.}
\end{figure}

To overcome the issue of feature intractability, we define a {\em
  coarse-to-fine} structured prediction cascade over the resolution of the 
state space
$\Lspace_i$ (Figure \ref{fig:cascade-output-example}b). Note that
unlike the linear-chain cascade, the cliques do not change from one
level to the next. Instead, the state space $\Lspace_i$ of each part
in one model is subdivided to form the state space of the next
model. Once again, we learn parameters $\theta$ and $\alpha$ for the
cascade using Algorithm \ref{alg:learning}. The coarse-to-fine cascade
is outlined in Figure \ref{fig:pose_overview}.


Max-marginals for the pose model can be visualized to provide some
intuition for max marginals. In general, the max-marginal for
location/angle $\mmarg{\lpart_i}$ is the score of the best global body
pose which constrains $\Lpart_i = \lpart_i$. In a pictorial structure
model, this corresponds to fixing limb $i$ at (location, angle) $\lpart_i$, and
determining the highest scoring configuration of other part locations
and angles under this constraint. Thus, a part could
have weak individual image evidence of being at location $\lpart_i$
but still have a high max-marginal score if the rest of the model
believes this is a likely location.


While the fine-level, target state space has size $80 \times 80 \times 24$,
the first level cascade coarsens the state-space down to $10 \times 10
\times 12 = 1200$ states per part, which allows for efficient
exhaustive inference. In our experiments, we always set $\alpha = 0$,
effectively throwing away half of the states at each stage.  After
pruning we double one of the dimensions (first angle, then the minimum
of width or height) and continue (see Table~\ref{tab:pruning}).


\begin{figure}[t]
  \centering
  \includegraphics[width=0.65\textwidth]{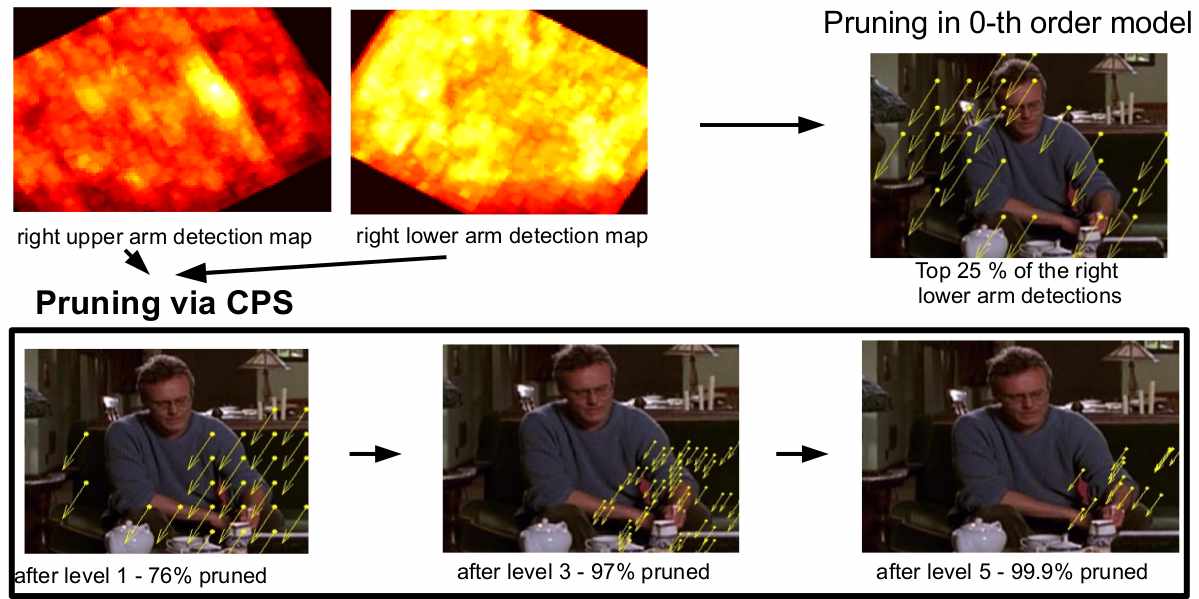}
  \includegraphics[width=0.3\textwidth]{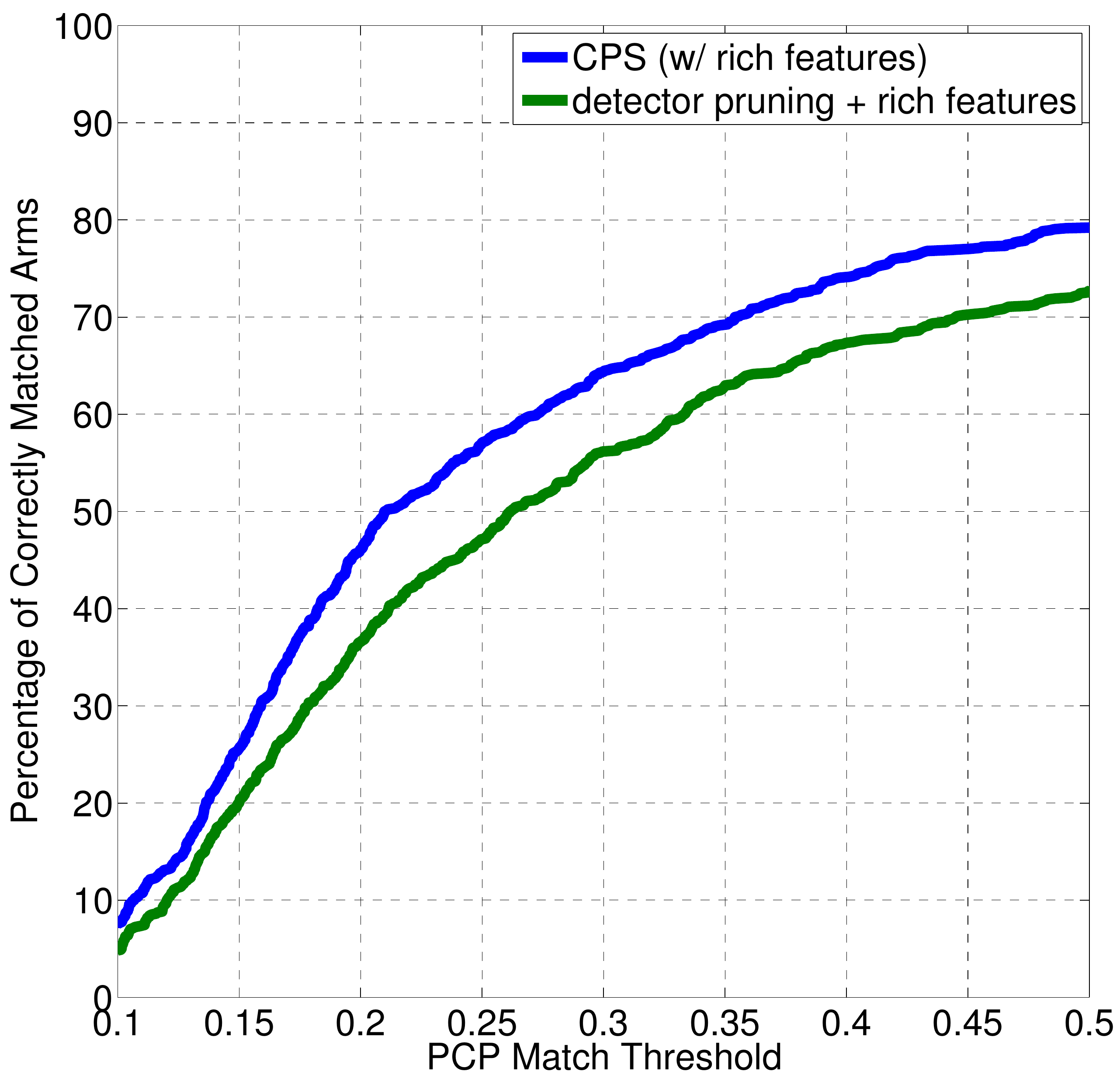}
  \caption{ {\bf Left:} Detector-based pruning (0th order model) by
    thresholding yields many hypotheses far way from the true one for
    the lower right arm. The CPS (bottom row), however, exploits
    global information to perform better pruning. {\bf Right:} PCP
    curves of our cascade method (blue) show increased accuracy versus
    a detection pruning approach (green), evaluated using PCP on arm
    parts.}
  \label{fig:cascade_pruning}
\end{figure}

The coarse-to-fine stages use standard PS features. HoG part detectors
are run once over the original state space, and the outputs are
resized for features in the coarser state spaces.  For pairwise
features, we use the standard relative geometric cues of angle and
displacement. The features are discretized uniformly, and thus
multi-modal pairwise costs can be learned. 


Once the cascade has reduced the fine-level state space to a manageable
size, we apply a boosted model with many expensive, powerful features.
As can be seen in Table~\ref{tab:pruning}, the coarse-to-fine cascade
leaves us with roughly 500 valid assignments per part; for each
possible part location and valid part pairs, we compute features
using image contours, moments of the shape and regions underlying each
part, color and texture appearance models, $\chi^2$ color similarity
between parts, and geometry.

One practical detail differentiates this cascade from others discussed
in this section. Rather than learn a standard structured perceptron
for prediction in the final stage, we concatenate all unary and
pairwise features for part-pairs into a feature vector and learn
boosting ensembles which give us our pairwise clique scores\footnote{
  We use OpenCV's implementation of Gentleboost and boost on trees of
  depth 3, setting the optimal number of rounds via a hold-out set.}.
This method of learning clique scores has several advantages over
stochastic subgradient learning: it is faster to train, can determine
better thresholds on features than uniform binning, and can combine
different features in a tree to learn complex, non-linear
interactions. In general, we can use any method of learning with
sparse inference for the final stage of Algorithm \ref{alg:learning}.



\subsubsection{Buffy and PASCAL Dataset Results}

We evaluated the pose cascade on the publicly available Buffy The
Vampire Slayer v2.1 and PASCAL Stickmen datasets~\citep{eichner09}. We
used the upper body detection windows provided with the dataset as
input to localize and scale normalize the images before running our
experiments as in~\citet{eichner09,ferrari08,andriluka09}.  The standard 235 
Buffy test images were used for testing, as well as the 360
detected people from PASCAL stickmen.  We used the remaining 513
images from Buffy for training and validation.

The typical measure of performance on this dataset is a matching
criteria based on both endpoints of each part (e.g., matching the
elbow and the wrist correctly): A limb guess is correct if the limb
endpoints are on average within $r$ of the corresponding groundtruth
segments, where $r$ is a fraction of the groundtruth part length.  By
varying $r$, a performance curve is produced where the performance is
measured in the percentage of correct parts (PCP) matched with respect
to $r$. We define PCP$_r$ as the value of the curve at $r$.  


\begin{table}[t]
\begin{center}
{\scriptsize

\begin{tabular}{|c|c|c|c|c|c|}
\hline
\multirow{3}{*}{level}	 & state	& \multicolumn{2}{|c|}{\# states in the}	&  \multirow{3}{*}{ } state space	& $\text{PCP}_{0.2}$ \\\cline{3-4}
 & 	 dimensions &original	& pruned	& reduction	&  arms \\
 & 	 & space	& space	& \%	& oracle  \\

\hline
0	 & 10x10x12	 & 153600	 & 1200 & 	 00.00 & ---\\
\hline
1	 & 10x10x24	 & 72968 & 	1140  &  52.50	 & 54 \\
\hline
3	 & 20x20x24	 & 6704	 & 642	&  95.64	 & 51 \\
\hline
5	 & 40x40x24	 & 2682	 & 671	 & 98.25	 & 50 \\
\hline 
7	 & 80x80x24	 & 492	 & 492& 	 99.67	 & 50 \\
\hline
detection pruning	 & 80x80x24	 & 492	 & 492& 	 99.67	 & 44 \\
\hline
\end{tabular} }
\caption{\label{tab:pruning} For each level of the cascade we present
  the reduction of the size of the state space after pruning each
  stage and the quality of the retained hypotheses measured using
  PCP$_{0.2}$. As a baseline, we compare to pruning the same number of
  states in the HoG detection map (see text).}
\end{center}
\end{table}

As shown in Table~\ref{tab:overall}, the cascade performs comparably
with the state-of-the-art on all parts, significantly outperforming earlier
work.  We also compared to a much simpler approach, inspired
by~\citet{pff-cascade} (detector pruning + rich features): We prune by
thresholding each unary detection map individually to obtain the same
number of states as in our final cascade level, and then apply our
final model with rich features on these states. As can be seen in
Figure~\ref{fig:cascade_pruning}, this baseline performs significantly
worse than our method (performing about as well as a standard PS model
as reported in~\citet{sapp2010}).  This makes a strong case for using
max-marginals (e.g., a global image-dependent quantity) for pruning,
as well as learning how to prune safely and efficiently, rather than
using static thresholds on individual part scores as
in~\citet{pff-cascade}.

\out{
Our previous method~\cite{sapp2010} is the only other PS method which
incorporates image information into the pairwise term of the model.
However it is still an exhaustive inference method.  Assuming all
features have been pre-computed, inference in~\cite{sapp2010} takes an
average of 3.2 seconds, whereas inference using the sparse set of
states in the final stage of the cascade takes on average 0.285
seconds---a speedup of 11.2x\footnote{Run on an Intel Xeon E5450
  3.00GHz CPU with an $80\times80\times24$ state space averaged over
  20 trials.~\cite{sapp2010} uses MATLAB's optimized fft function for
  message passing.}.
}
In Table~\ref{tab:pruning}, we evaluate the test time efficiency and accuracy 
of our system after each successive stage of pruning.  In the early stages,
the state space is too coarse for the MAP state sequence of one of the
pruning models to be meaningfully compared to the fine-resolution
groundtruth, so we report PCP scores of the best possible as-yet
unpruned state left in the original space.  We choose a tight
PCP$_{0.2}$ threshold to get an accurate understanding of whether or
not we have lost well-localized limbs.  As seen in
Table~\ref{tab:pruning}, the drop in PCP$_{0.2}$ is small and linear,
whereas the pruning of the state space is exponential---half of the
states are pruned in the first
stage. As a baseline, we evaluate the simple detector-based pruning described
above.  This leads to a significant loss of correct hypotheses, to
which we attribute the poor end-system performance of this baseline
(in Figure~\ref{fig:cascade_pruning} and Table~\ref{tab:overall}), even after
adding richer features.

\begin{table}[t]
\begin{center}
{\scriptsize
\begin{tabular}{|c|c|c|c|c|c|}
\hline
Method &	Torso &	Head	 &	Upper &	Lower	 &	Total \\
 &	&		 &	Arms	 &	Arms	 &	\\
\hline\hline
\textbf{Buffy} & & & & &\\
\hline
\citet{andriluka09}	 &	90.7	 &	95.5	 &	79.3	 &	41.2	 &		73.5 \\
\hline
\citet{eichner09}		 &	98.7 	 &		97.9		 &	82.8	& 59.8	 &	80.1 \\
\hline
\citet{sapp2010}	&	100		 &	100		 &	91.1 &		65.7	 &	85.9 \\
\hline
CPS (ours)	&	99.6 &	98.7	&	91.9	&	64.5	& 	85.2 \\
\hline
Detector pruning &		99.6	&	87.3&		90.0	&	55.3 	&		79.6 \\
\hline
\hline						
\textbf{PASCAL stickmen} & & & & & \\
\hline
\citet{eichner09}		&	97.22	&	88.60	&	73.75	&	41.53		&	69.31 \\
\hline
\citet{sapp2010}	&	100	&	98.0	&	83.9	&	54.0		&	79.0 \\
\hline
CPS (ours)	&	100	&	99.2	&	81.5	&	53.9		&	78.3 \\
\hline
\end{tabular}
}
\caption{\label{tab:overall} Comparison to other methods at
  PCP$_{0.5}$.  See text for details.  We perform comparably to
  state-of-the-art on all parts, improving on upper arms.
 (\textbf{NOTE}: the numbers included here are slightly different from
  the published version---what is seen here exactly matches the
  publicly available reference implementation at
  \url{http://vision.grasp.upenn.edu/video/}).}
\end{center}
\end{table}



\subsection{Loopy Graphs with Ensemble-\SPC}
\label{sec:videopose}

We evaluated Ensemble-\SPC in two experiments. First, we analyzed the
``best-case'' filtering performance of the summed max-marginal
approximation to the true marginals on a synthetic image segmentation
task, assuming the true scoring function $\score{y}$ is available for
inference. Second, we evaluated the real-world accuracy of our
approach on a difficult, real-world human pose dataset (VideoPose). In
both experiments, the max-marginal ensemble outperforms
state-of-the-art baselines.

\subsubsection{Asymptotic Filtering Accuracy on Synthetic Data}

We first evaluated the filtering accuracy of the max-marginal ensemble
on a synthetic 8-class segmentation task. For this experiment, we
removed variability due to parameter estimation and focused our
analysis on accuracy of inference. We compared our approach to Loopy
Belief Propagation (Loopy BP) \citep{pearl1988probabilistic,mceliece1998turbo,murphy1999loopy}, on a $11 \times 11$ two-dimensional grid
MRF.\footnote{We used the UGM Matlab Toolbox by Mark Schmidt for
  the Loopy BP and Gibbs MCMC comparisons, see: \url{http://people.cs.ubc.ca/~schmidtm/Software/UGM.html}.} For the
ensemble, we used 22 unique ``comb'' tree structures to approximate the
full grid model. To generate a
synthetic instance, we generated unary potentials $\omega_i(k)$
uniformly on $[0,1]$ and pairwise potentials log-uniformly:
$\omega_{ij}(k,k') = e^{-v}$, where $v \sim \mcal{U}[-25,25]$ was
sampled independently for every edge and every pair of classes. (Note
that for the ensemble, we normalized unary and edge potentials by
dividing by the number of times that each potential was included in
any model.) It is well known that inference for such grid MRFs is
generally difficult \citep{koller}, and we observed that Loopy BP
failed to converge for at least a few variables on most examples we
generated.

\begin{figure}[t]
\centering
\includegraphics[width=0.98\textwidth]{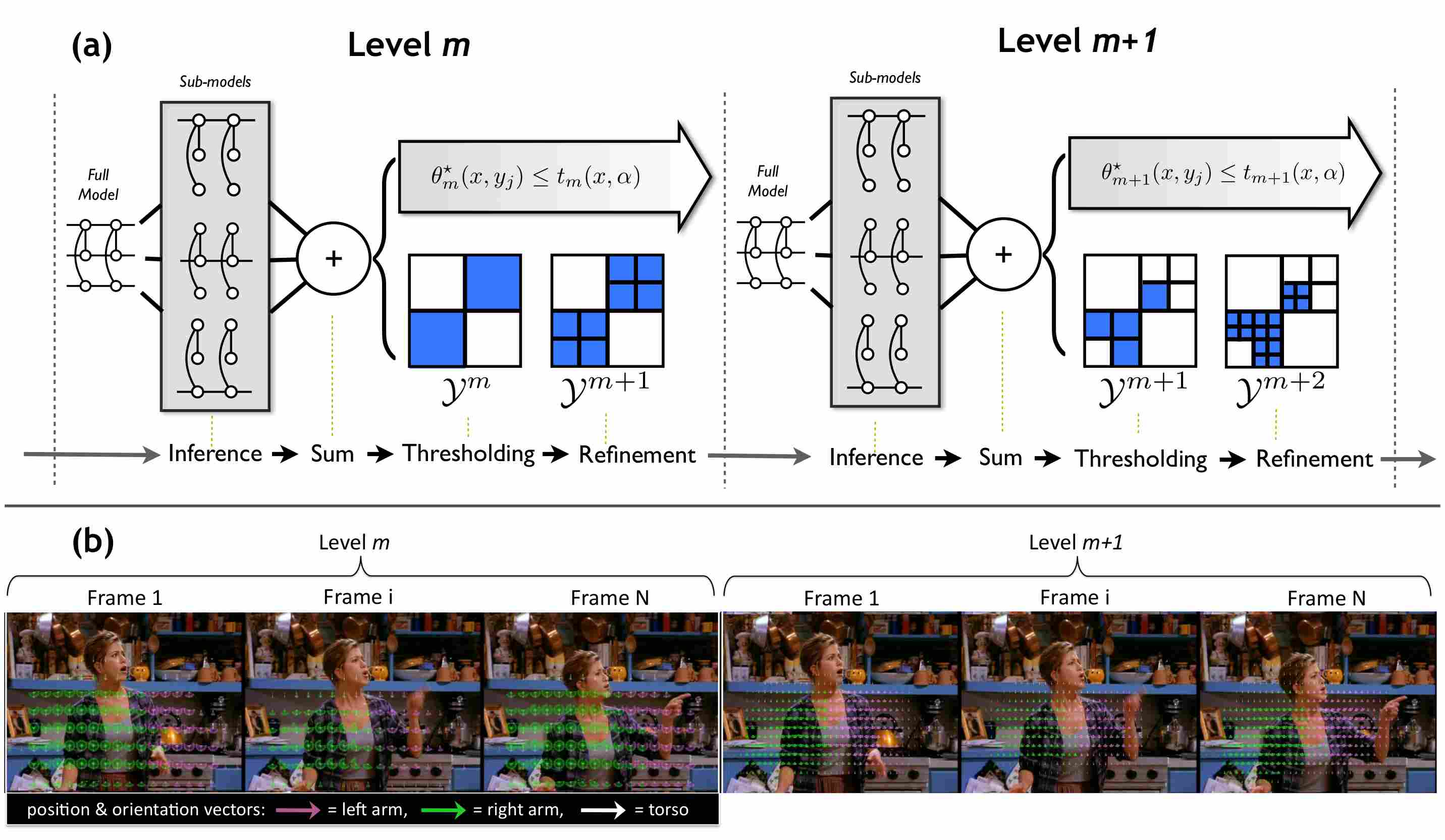}
\caption{\small \label{fig:overview} \textbf{(a)} Schematic overview
  Ensemble-\SPC for human pose tracking. The $m$'th level of the
  cascade takes as input a sparse set of states $\mcal{Y}^m$ for each
  variable $y_j$. The full model is decomposed into constituent
  sub-models (above, the three tree models used in the pose tracking
  experiment) and sparse inference is run. Next, the max marginals of
  the sub-models are summed to produce a single max marginal for each
  variable assignment: $\mmarg{y_j} = \sum_p \pmmarg{y_j}$. Note that
  each level and each constituent model will have different parameters
  as a result of the learning process. Finally, the state spaces are
  thresholded based on the max-marginal scores and low-scoring states
  are filtered. Each state is then refined according to a state
  hierarchy (e.g., spatial resolution, or semantic categories) and
  passed to the next level of the cascade. This process can be
  repeated as many times as desired. In \textbf{(b)}, we illustrate
  two consecutive levels of the ensemble cascade on real data, showing
  the filtered hypotheses left for a single video example.}
\end{figure}

We evaluated our approach on 100 synthetic grid MRF instances. For each
instance, we computed the accuracy of filtering using marginals from
Loopy BP, the ensemble, and each individual sub-model. We determined
error rates by counting the number of times ``ground truth'' was
incorrectly filtered if the top $K$ states were kept for each
variable, where we sampled 1000 ``ground truth'' examples from the
true joint distribution using Gibbs sampling. To obtain a good
estimate of the true marginals, we restarted the chain for each sample
and allowed 1000 iterations of mixing time.  The result is presented
in Figure \ref{fig:synth} for all possible values of $K$ (filter
aggressiveness.) We found that the ensemble outperformed Loopy BP and
the individual sub-models by a significant margin for all $K$.

\begin{figure}[t]
  \centering
  \includegraphics[width=0.7\textwidth]{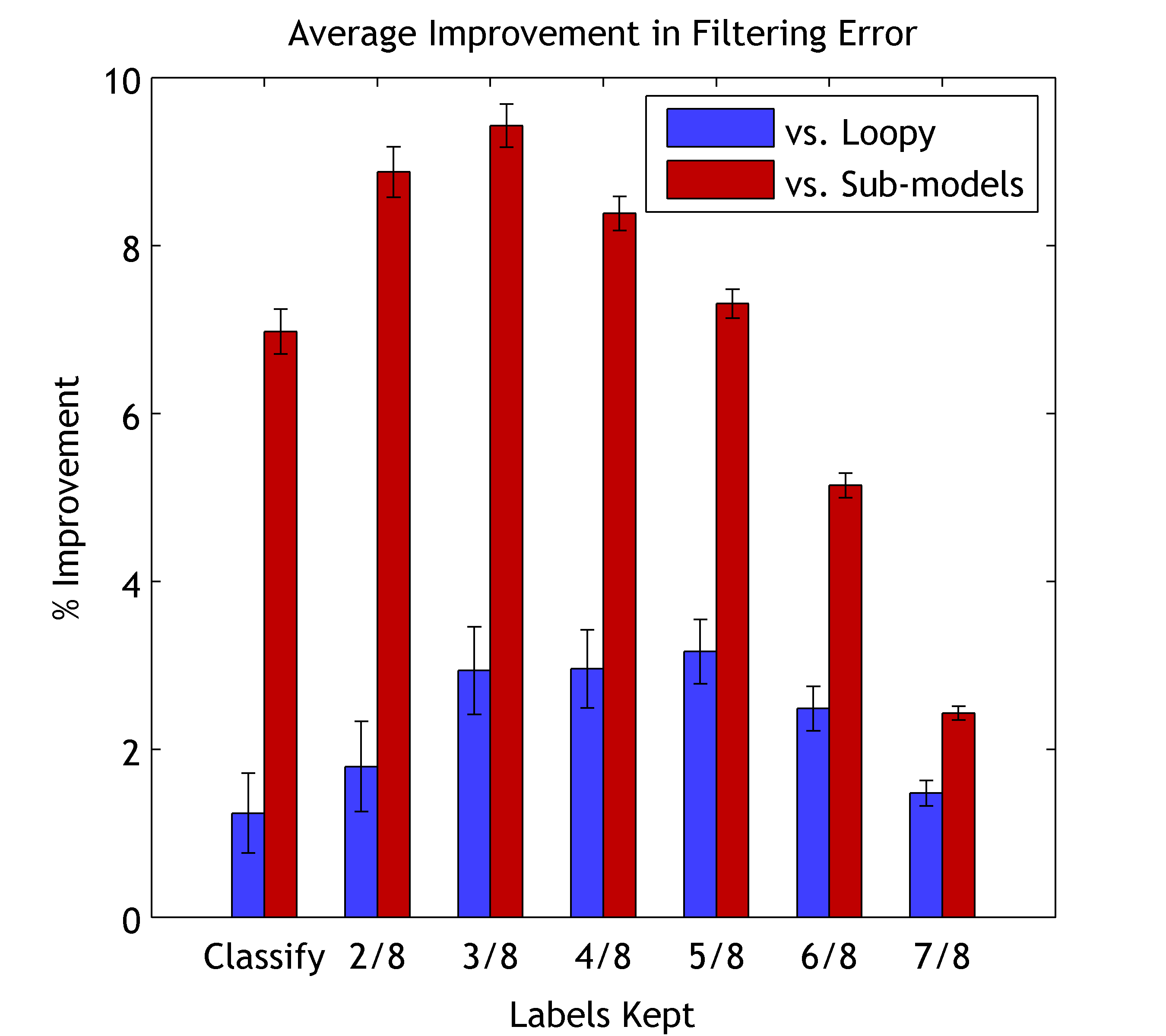}
  \caption{\small Improvement over Loopy BP and constituent
  tree-models on the synthetic segmentation task. Error bars show
  standard error.} 
  \label{fig:synth}  
\end{figure}

We next investigated the question of whether or not the ensembles were
most accurate on variables for which the sub-models tended to
agree. For each variable $y_{ij}$ in each instance, we computed the
mean pairwise Spearman correlation between the ranking of the 8
classes induced by the max marginals of each of the 22 sub-models. We
found that complete agreement between all sub-models never occurred
(the median correlation was 0.38). We found that sub-model agreement
was significantly correlated ($p < 10^{-15}$) with the error of the
ensemble for all values of $K$, peaking at $\rho = -0.143$ at
$K=5$. Thus, increased agreement predicted a decrease in error of the
ensemble. We then asked the question: Does the effect of model
agreement explain the {\em improvement} of the ensemble over Loopy BP?
In fact, the improvement in error compared to Loopy BP was {\em not}
correlated with sub-model agreement for any $K$ (maximum $\rho =
0.0185$, $p < 0.05$). Thus, sub-model agreement does {\em not} explain
the improvement over Loopy BP, indicating that sub-model disagreement
is not related to the difficulty in inference problems that causes
Loopy BP to underperform relative to the ensembles (e.g., due to
convergence failure.)

\out{\todo{Use max-sum loopy BP instead of sum-product.}}

\subsubsection{Articulated Pose Tracking Cascade}

The VideoPose dataset\footnote{The VideoPose dataset is available
  online at {\tt http://vision.grasp.upenn.edu/video/.}} consists of
34 video clips of approximately 50 frames each. The clips were
harvested from three popular TV shows: 3 from {\em Buffy the Vampire
  Slayer}, 27 from {\em Friends}, and 4 from {\em LOST}. Clips were
chosen to highlight a variety of situations and and movements when the
camera is largely focused on a single actor.
In our experiments, we use the {\em Buffy} and half of the {\em
  Friends} clips as training (17 clips), and the remaining {\em
  Friends} and {\em LOST} clips for testing.  In total we test on 901
individual frames. The {\em Friends} are split so no clips from the
same episode are used for both training and testing.  We further set
aside 4 of the {\em Friends} test clips to use as a development
set. Each frame of each clip is hand-annotated with locations of
joints of a full pose model; for simplicity, we use only the torso and
upper arm annotations in this work, as these have the strongest
continuity across frames and strong geometric relationships.

All of the models we evaluated on this dataset share the same basic
structure: a variable for each limb's $(x,y)$ location and angle
rotation (torso, left arm, and right arm) with edges between torso and
arms to model pose geometry. We refer to this basic model, evaluated
independently on each frame, as the ``Single Frame'' approach. For the
VideoPose dataset, we augmented this model by adding edges between
limb states in adjacent frames (Figure \ref{fig:overview}), forming an
intractable, loopy model. Our features in a single frame are the same
as in the beginning levels of the pictorial structure cascade (Section
\ref{sec:pose}): unary features are discretized Histogram of Gradient
(HoG) part detectors scores, and pairwise terms measure relative
displacement in location and angle between neighboring parts.
Pairwise features connecting limbs across time also express geometric
displacement, allowing our model to capture the fact that human limbs
move smoothly over time.

\begin{figure}[t]
  \centering
  \includegraphics[width=0.98\textwidth]{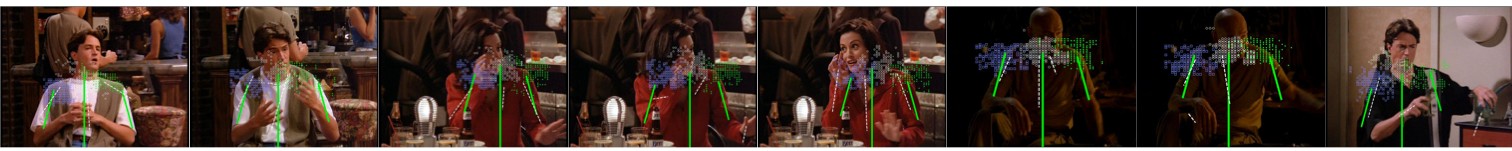}
  \caption{\small \label{fig:results-qualitative} Qualitative test
    results.  Points shown are the position of left/right shoulders and
    torsos at the last level of the ensemble SC (blue square, green dot,
    white circle resp.).  Also shown (green line segments) are the
    best-fitting hypotheses to groundtruth joints, selected from within
    the top 4 max-marginal values. Shown as dotted gray lines is the best
    guess pose returned by the \citep{ferrari08}.}
\end{figure}
    
We learned a coarse-to-fine structured cascade with six levels for
tracking as follows. The six levels use increasingly finer state
spaces for joint locations, discretized into bins of resolution $10
\times 10$ up to $80 \times 80$, with each stage doubling one of the
state space dimensions in the refinement step.  All levels use an
angular discretization of 24 bins.  For the ensemble cascade, we
learned three sub-models simultaneously (Figure \ref{fig:overview}),
with each sub-model accounting for temporal consistency for a
different limb by adding edges connecting the same limb in consecutive
frames.

A summary of results are presented in Figure \ref{fig:tracking}. We
compared the single-frame cascade and the ensemble cascade to a
state-of-the-art single-frame pose detector (Ferrari et
al. \citep{ferrari08}) and to one of the individual sub-models,
modeling torso consistency only (``Torso Only'').  We evaluated the
method from \citep{ferrari08} on only the first half of the test data
due to computation time (taking approximately 7 minutes/frame).  We
found that the ensemble cascade was the most accurate for every joint
in the model, that all cascades outperformed the state-of-the-art
baseline, and, interestingly, that the single-frame cascade
outperformed the torso-only cascade. We suspect that the poor
performance of the torso-only model may arise because propagating only
torso states through time leads to an over-reliance on the relatively
weak torso signal to determine the location of all the limbs. Sample
qualitative output from the ensemble is presented in Figure
\ref{fig:results-qualitative}.

\begin{figure}
  \centering
  \subfigure[\small Decoding Error.]{
    \includegraphics[width=0.48\textwidth]{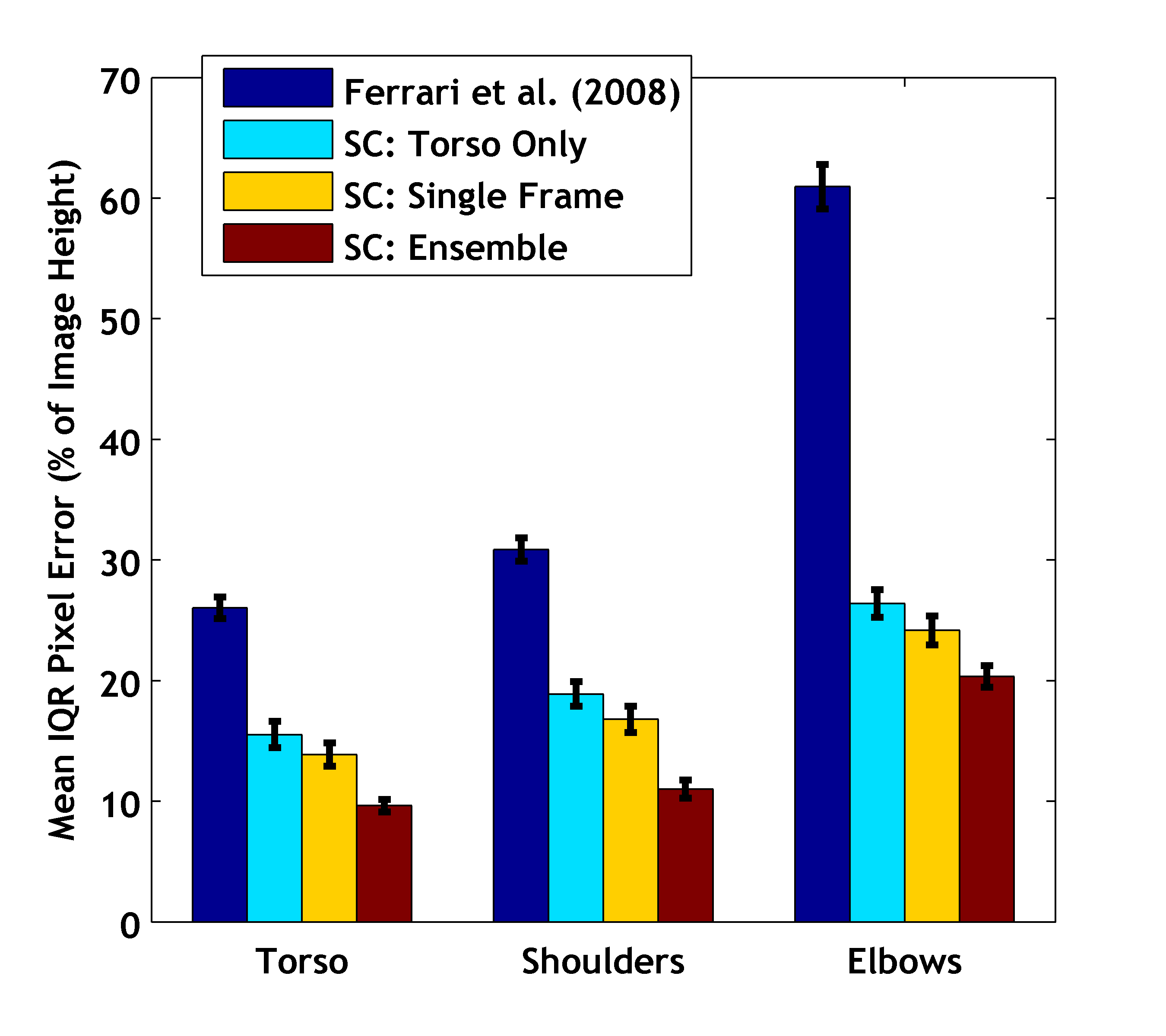}
  }
  \hspace{-0.1in}
  \subfigure[\small Top $K=4$ Error.]{
    \includegraphics[width=0.48\textwidth]{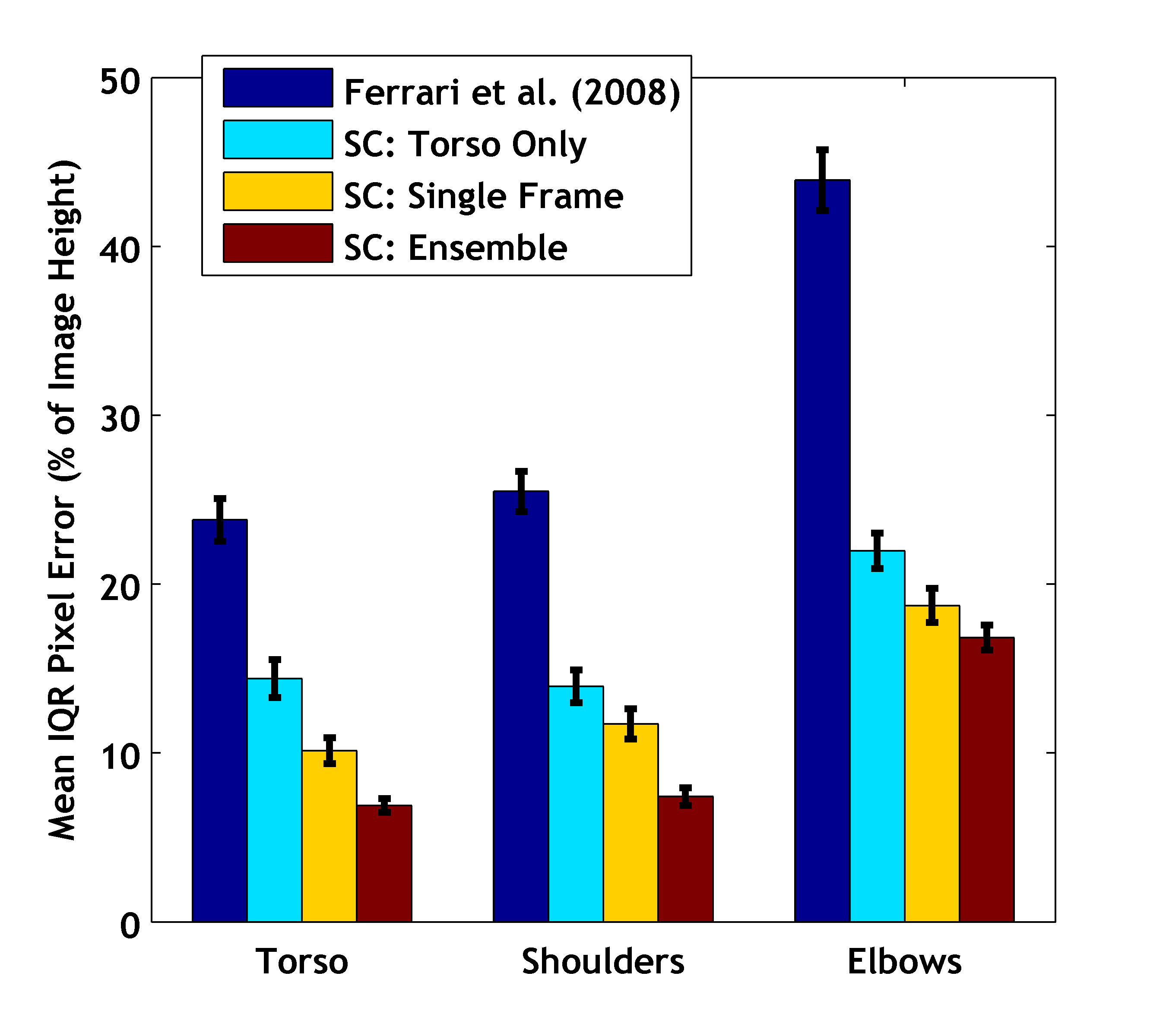}    
  } \\
  \begin{minipage}{1\linewidth}
\centering
    \scalebox{1}{
      \begin{tabular}{cccc}
        \hline \\[-0.5em]
        & {\em State } & $PCP_{0.25}$& {\em Efficiency } \\[0.5em]
        {\em Level} & {\em Dimensions } & {\em in top  K=4} & (\%)  \\[0.5em]
\hline \\[-0.5em]
        0 & $10 \times 10 \times 24$ & -- & -- \\ 
        2 & $20 \times 20 \times 24$ & 98.8& 87.5 \\     
        4 & $40 \times 40 \times 24$ & 93.8& 96.9 \\
        6 & $80 \times 80 \times 24$ & 84.6& 99.2 \\[0.5em]
        \hline
      \end{tabular}
    }
    \vspace{0.1in}
    \begin{center}
      {\small (c) Ensemble efficiency.}
    \end{center}
  \end{minipage}

  \caption{\small \label{fig:tracking} {\bf (a)},{\bf (b)}: Prediction
    error for VideoPose dataset. Reported errors are the average
    distance from a predicted joint location to the true joint for
    frames that lie in the [25,75] inter-quartile range (IQR) of
    errors. Error bars show standard errors computed with respect to
    clips. All SC models outperform \citep{ferrari08}; the ``torso
    only'' persistence cascade introduces additional error compared to
    a single-frame cascade, but adding arm dependencies in the
    ensemble yields the best performance. {\bf (c)}: Summary of test
    set filtering efficiency and accuracy for the ensemble
    cascade. $PCP_{0.25}$ measures Oracle \% of correctly matched limb
    locations given unfiltered states; see \citep{sapp10cascades} for
    more details.}
\end{figure}


\section{Conclusion}

We presented Structured Prediction Cascades, a framework for
adaptively increasing the complexity of structured models on a
per-example basis while maintaining efficiency of inference. 
This allows for the construction and training of
structured models of far greater complexity than was previously
possible. We proposed two novel loss functions, filtering loss and
efficiency loss, that measure the two objectives balanced by the
cascade, and provided generalization bounds for these loss
functions. We proposed a simple sub-gradient based learning algorithm
to minimize these losses, and presented a stage-wise learning
algorithm for the entire cascade in Algorithm \ref{alg:learning}. We
also show how to extend the previous algorithm and theoretical results
to the setting in which exact inference is intractable, using
Ensemble-\SPC. Finally, we showed experimentally state-of-the-art performance 
across multiple domains.



\newpage
\appendix

\section{Proofs of Theorems~\ref{thm:gencascade} and~\ref{thm:gencascade-ensemble} }
\label{sec:proofs}
We first summarize the Rademacher and Gaussian complexity definitions and results from
\citet{bartlettM02} required to prove the theorems. 

\begin{defn}[Rademacher and Gaussian complexities]
Let $H: \cX \mapsto \reals$ be a function class and $x^{1},\ldots,x^{n}$ be $n$ independent 
samples from a fixed distribution. Define the random variables:
\begin{eqnarray}
\hR(H) &=& \bbE_{\sigma}\left[\sup_{h\in H} \left. \left| \frac{2}{n}\sum_{i=1}^{n} \sigma_{i}h(x^{i}) \right| \right|  x^{1},\ldots,x^{n} \right],\\
\hG(H) &=& \bbE_{g}\left[\sup_{h\in H} \left. \left| \frac{2}{n}\sum_{i=1}^{n} g_{i}h(x^{i}) \right| \right|  x^{1},\ldots,x^{n} \right],
\end{eqnarray}
where $\sigma_{i}\in\pm 1$ are independent uniform and $g_{i}\in\reals$ are independent standard Gaussian.  Then $R(H) = \bbE[\hR(H)]$ and $G(H) = \bbE[\hG(H)]$ are
the Rademacher and Gaussian complexities of $H$.
\end{defn}
Consider a general loss function ${\Phi}(y, \bh(x))$ where
 $\bh(x) \in \reals^{m}$ represents the prediction function.  In our case,
 $\bh(x)$ is vector of clique assignment scores  $\theta^{\top}\bft_{c}(x,y_{c})$ of dimension
 $\sum_{c\in\cC}|\cY_{c}|$, indexed by $y_{c}$ (a clique  and its assignment).
 This vector $\bh(x)$ contains all the information needed to compute the max-marginals and threshold for a given example $x$.  Both $\cL_{e}$ and $\cL_{f}$ can be written in this general form, as we detail below.

\begin{defn}[Lipschitz continuity with respect to Euclidean norm]

Let $\phi : \reals^{m} \mapsto \reals$, then $\phi$ is Lipschitz continuous
with constant $L(\phi)$ with respect to Euclidean norm if for any $\bz_{1},\bz_{2}\in\reals^{m}$:
\begin{equation} |\phi(\bz_{1}) -\phi(\bz_{2})| \le L(\phi) ||\bz_{1}-\bz_{2}||_{2}.\end{equation}
\end{defn}
 
We recall the relevant results in the following theorem:
\begin{thm}[Bartlett and Mendelson, 2002]
\label{thm:genbound}
  Consider a loss function $\Phi: \cY\times\reals^{m}\mapsto \reals$ and a dominating cost
  function $\phi: \cY\times\reals^{m}\mapsto \reals$ such that $\Phi(y,\bz) \le \phi(y,\bz)$. Let $H:
  \mathcal{X} \mapsto \reals^{m}$ be a vector-valued class of functions. Then for
  any integer $n$ and any $0 < \delta < 1$, with probability $1 -
  \delta$ over samples of length $n$, every $\bh$ in $H$ satisfies
  \begin{equation}
    \label{eq:genbound}
    \bbE[\Phi(Y,\bh(X))] \le  \hat{\bbE} [\phi(Y,\bh(X))] + 
    R_n(\tilde{\phi} \;\circ\; H) + \sqrt{ \frac{8 \ln(2/\delta)}{n} },
  \end{equation}
  where $\tilde{\phi} \circ H$ is a class of functions defined by centered composition of $\phi$
  with $\bh \in H$, $\tilde{\phi} \circ \bh = \phi(y,\bh(x)) - \phi(y,0)$.

  Furthermore, Rademacher complexity can be bounded using Gaussian complexity: there are absolute constants $c$ and $C$ such that for
  every class $H$ and every integer $n$,
  \begin{equation}
    \label{eq:rad_to_gaus}
    cR_n(H) \le G_n(H) \le (C \ln  n) R_n(H).
  \end{equation}
  Let  $H: \cX \to \reals^{m} $ be a class of functions
  that is the direct sum of real-valued classes $H_1, \dots,
  H_m$. Then, for every integer $n$ and every sample $(x^{1}, \dots,x^{n})$,
  \begin{equation}
    \label{eq:multi_gaus}
    \hG_n(\phi \circ H) \le 2L(\phi) \sum_{i=1}^m \hG_n(H_i),
  \end{equation}
  where $L(\phi)$ is the Lipschitz constant of $\phi$ with respect to Euclidean distance.  
 Finally, for the 2-norm-bounded linear class of functions,  $H = \{x \mapsto \theta^\top\bft(x) \mid ||\theta||_2 \le B,
  ||\bft(x)||_2 \le 1\}$,
  \begin{equation}
    \label{eq:lingaus}
    \hat{G}_n(H) \le \frac{2B}{\sqrt{n}}.
  \end{equation}
\end{thm}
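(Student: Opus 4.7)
The plan is to apply Theorem \ref{thm:genbound} to the ensemble setting, using the same vectorization trick outlined for the single-model case but enlarging the vector-valued hypothesis class to hold all $P$ sub-models simultaneously. Concretely, I would define $\bh : \cX \to \reals^{Pm}$ by stacking, for each $p = 1,\dots,P$, each clique $c \in \cC$, and each $y_c \in \cY_c$, the scalar entry $\theta_p^\top \bft_c(x, y_c)$. This vector contains all the raw numbers needed to recompute every $\pscore{y}$, every $\pmmarg{y_c}$, every $\pscoremax$, and therefore every $\pthr$. Consequently, I can write $\cL_{joint}^{\gamma}(x,y;\theta,\alpha) = \tphi(y, \bh(x))$ and $\cL_{joint}(x,y;\theta,\alpha) \le \tphi(y,\bh(x))$, putting us in the framework of Theorem \ref{thm:genbound} with dominating cost $\tphi$.

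The core step is to verify that $\tphi(y, \cdot)$ is Lipschitz with respect to the Euclidean norm on $\reals^{Pm}$, and to quantify its constant in terms of $P$, $|\cC|$, and $\gamma$. I would proceed in stages. For a fixed $p$, viewed as a function of $\bh_p \in \reals^m$: (i) $\pscore{y}$ is a sum of $|\cC|$ coordinates of $\bh_p$, hence $\sqrt{|\cC|}$-Lipschitz; (ii) $\pscoremax$ and each $\pmmarg{y_c}$ are maxima of such sums, hence also $\sqrt{|\cC|}$-Lipschitz (a max of $L$-Lipschitz functions is $L$-Lipschitz); (iii) $\pthr$, being a convex combination of $\pscoremax$ and an average of max-marginals, inherits $\sqrt{|\cC|}$-Lipschitzness. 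So $\pscore{y} - \pthr$ is $2\sqrt{|\cC|}$-Lipschitz in $\bh_p$. Summing over $p$ and using $\sum_p \|\bh_p^{(1)}-\bh_p^{(2)}\|_2 \le P \|\bh^{(1)}-\bh^{(2)}\|_2$, the inner sum is $2P\sqrt{|\cC|}$-Lipschitz in $\bh$; composing with $r_\gamma$ (which is $1/\gamma$-Lipschitz) yields
\begin{equation*}
L(\tphi) \;\le\; \frac{2P\sqrt{|\cC|}}{\gamma}.
\end{equation*}

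Now I can invoke the machinery of Theorem \ref{thm:genbound} essentially mechanically. Applying \eqref{eq:genbound} gives the desired inequality up to an additive Rademacher term $R_n(\tilde\phi \circ H)$, which I bound via the Gaussian complexity route: \eqref{eq:rad_to_gaus} loses only a constant factor, and the multi-class decomposition \eqref{eq:multi_gaus} yields
\begin{equation*}
\hG_n(\tilde{\phi}\circ H) \;\le\; 2 L(\tphi) \sum_{k=1}^{Pm} \hG_n(H_k),
\end{equation*}
where each $H_k$ is a single linear scalar class $\{x \mapsto \theta_p^\top \bft_c(x, y_c)\}$ with $\|\theta_p\|_2 \le B/P$ and $\|\bft_c(x,y_c)\|_2 \le 1$. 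By \eqref{eq:lingaus}, $\hG_n(H_k) \le 2(B/P)/\sqrt{n}$. Multiplying everything together: $2 \cdot (2P\sqrt{|\cC|}/\gamma) \cdot (Pm) \cdot (2(B/P)/\sqrt{n}) = 8 m B P \sqrt{|\cC|}/(\gamma\sqrt{n})$, which absorbs into the constant $c$ of the theorem.

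The main obstacle I anticipate is the Lipschitz bookkeeping in the middle step. In particular, one must be careful that the max over completions ($\pmmarg{y_c}$) and the averaging step do not inflate the Lipschitz constant beyond $\sqrt{|\cC|}$, and that the cross-model aggregation contributes only the advertised factor of $P$. The rest of the argument is a direct transcription of the single-model proof with $m$ replaced by $Pm$ and the per-coordinate norm bound $B$ replaced by $B/P$, and so should go through without additional difficulty.
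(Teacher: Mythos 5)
Your proposal does not prove the statement in question. The statement is Theorem~\ref{thm:genbound} itself --- the package of results imported from \citet{bartlettM02}: the Rademacher-complexity generalization bound \eqref{eq:genbound}, the equivalence of Rademacher and Gaussian complexities \eqref{eq:rad_to_gaus}, the Lipschitz contraction bound for vector-valued classes \eqref{eq:multi_gaus}, and the Gaussian complexity of a norm-bounded linear class \eqref{eq:lingaus}. Your write-up opens with ``apply Theorem~\ref{thm:genbound} to the ensemble setting,'' i.e., it takes the statement to be proved as a black box and uses it to derive a \emph{different} result, namely the ensemble generalization bound of Theorem~\ref{thm:gencascade-ensemble}. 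None of the ingredients actually needed for Theorem~\ref{thm:genbound} appear: no symmetrization or bounded-differences concentration argument for \eqref{eq:genbound}, no comparison of Rademacher and Gaussian averages for \eqref{eq:rad_to_gaus}, no Slepian-type contraction argument for \eqref{eq:multi_gaus}, and no Cauchy--Schwarz/Jensen computation for \eqref{eq:lingaus}. As a proof of the stated theorem, the argument is circular and entirely missing. For what it is worth, the paper does not prove this theorem either --- it is explicitly recalled from Bartlett and Mendelson (2002) as background --- so there is no internal proof to match against; but that does not make your derivation a proof of it.

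As a side remark, the content you did write is essentially the paper's own proof of Theorem~\ref{thm:gencascade-ensemble}: stack the $P$ sub-models into an $mP$-dimensional score vector, bound the Lipschitz constant of the joint margin loss by a factor of $P$ times the single-model constant, and combine \eqref{eq:rad_to_gaus}, \eqref{eq:multi_gaus}, and \eqref{eq:lingaus} with per-model norm bound $B/P$. That bookkeeping is sound (your step $\sum_p \|\bh_p^{(1)}-\bh_p^{(2)}\|_2 \le P\,\|\bh^{(1)}-\bh^{(2)}\|_2$ is valid, and Cauchy--Schwarz would even give $\sqrt{P}$), but it answers a different question than the one posed.
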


\subsection{Proof of Theorem~\ref{thm:gencascade}}
We will express our loss functions $\cL_{e}$ and $\cL_{f}$
and dominating loss functions $\cL_{e}^{\gamma}$ and $\cL_{f}^{\gamma}$, in the
in terms of the framework above.  We reproduce the definitions side-by-side in a slightly modified form below, where $m=\sum_{c\in\cC}|\cY_{c}|$ and the $\gamma$-margin step-function dominates the step-function $r_{\gamma}(z) \ge \ind{z\le 0}$ by construction:
 \begin{eqnarray}
    \cL_f(x,y;\theta,\alpha) &=& \ind{\score{y}- \thr \le 0},\\
     \cL_f^{\gamma}(x,y;\theta,\alpha) &=& r_{\gamma}(\score{y}- \thr),\\
    \cL_e(x,y;\theta,\alpha) &=&  \frac{1}{m} \sum_{c\in C, y_c \in \cY_c} \ind{\thr-\mmarg{y_c}\le 0},\\
    \cL_e^{\gamma}(x,y;\theta,\alpha) &=&  \frac{1}{m} \sum_{c\in C, y_c \in \cY_c} r_{\gamma}( \thr-\mmarg{y_c}).
 \end{eqnarray}

We ``vectorize'' our scoring function $\theta$ and assignments $y$  by defining vector-valued functions, where the vectors are indexed by clique assignments, $y_{c}$, with total dimension 
$m$.

\begin{defn}[Vectorization]
 \begin{eqnarray}
 \bh_{y_{c}} (x) &\triangleq& \theta^{\top}\bft_{c}(x,y_{c})\\
 \bv_{y_{c}}(y') &\triangleq& \ind{y'_{c}=y_{c}} \\
  \score{y} &=& \bh(x)^{\top}\bv(y)
 \end{eqnarray}
\end{defn}
Clearly, the $m$-dimensional vector $\bh(x)$ contains all the information needed to compute the max-marginals and threshold for a given example $x$ (we assume $\alpha$ is fixed).  Hence we can define the losses in the form of Theorem~\ref{thm:genbound}:
 \begin{eqnarray}
    \Phi_{f}(y,\bh(x)) &=&  \cL_f(x,y;\theta,\alpha)\\ 
    \phi_{f}(y,\bh(x)) &=&  \cL_f^{\gamma}(x,y;\theta,\alpha) \\
    \Phi_{e}(y,\bh(x)) &=&  \cL_e(x,y;\theta,\alpha) \\   
    \phi_{e}(y,\bh(x)) &=&     \cL_e^{\gamma}(x,y;\theta,\alpha) 
 \end{eqnarray}
What remains is to calculate the Lipschitz constants of $\phi_{f}$ and  $\phi_{e}$.

\begin{thm}
  \label{thm:lipschitz}
  $\phi_f(y,\cdot)$ and $\phi_e(y,\cdot)$ are Lipschitz (with respect
  to Euclidean distance on $\reals^m$) with constant
  $\sqrt{2\ncliques|}/\gamma$ for all $y \in \cY$.
\end{thm}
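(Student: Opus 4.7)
The plan is to reduce the Lipschitz computation to bounding the Euclidean norm of a single vector $w(y)\in\reals^{m}$ such that the relevant scalar function can be written as the inner product $\bh^{\top}w(y)$ on each linear piece. The key observation is that the scoring function, the maximum score, and every max-marginal are linear in $\bh$ in terms of characteristic vectors of clique assignments. Define $v(y)\in\{0,1\}^{m}$ by $v_{y_c}(y) = \ind{y'_c = y_c \text{ in } y}$; then $\theta(x,y) = \bh^{\top}v(y)$ and $\|v(y)\|_{2}^{2} = \ncliques$. Similarly, $\max_{y'}\score{y'} = \bh^{\top}v(y^{\star})$ for the current argmax $y^{\star}$, and each max-marginal satisfies $\mmarg{y_c} = \bh^{\top}v(\wit{y_c})$.

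Using this, I would write the threshold as $\thr = \bh^{\top}u$, where
\begin{equation*}
u \;=\; \alpha\,v(y^{\star}) \;+\; \frac{1-\alpha}{m}\sum_{c\in\cC,\,y_c\in\cY_c} v(\wit{y_c}).
\end{equation*}
Since $u$ is a convex combination of nonnegative vectors each of squared norm $\ncliques$, the triangle inequality gives $\|u\|_{2} \le \sqrt{\ncliques}$. Consequently, on any linear piece (i.e., where the argmax and all witnesses are held fixed), $\score{y}-\thr = \bh^{\top}(v(y)-u)$. The main step is then the tighter bound
\begin{equation*}
\|v(y)-u\|_{2}^{2} \;=\; \|v(y)\|_{2}^{2} \;-\; 2\,v(y)^{\top}u \;+\; \|u\|_{2}^{2} \;\le\; \|v(y)\|_{2}^{2} + \|u\|_{2}^{2} \;\le\; 2\ncliques,
\end{equation*}
which is valid because $v(y)$ and $u$ have nonnegative entries, so $v(y)^{\top}u\ge 0$. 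This is the nontrivial estimate: a naive triangle inequality would only give $2\sqrt{\ncliques}$, which is worse by a factor of $\sqrt{2}$. Since $r_{\gamma}$ is $1/\gamma$-Lipschitz, composing yields that $\phi_{f}(y,\cdot)$ is $\sqrt{2\ncliques}/\gamma$-Lipschitz on each linear piece, and by taking suprema over pieces (standard for piecewise-linear functions whose subgradients are convex combinations of gradients on adjacent pieces) the same constant controls the global Lipschitz constant.

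For $\phi_{e}$, the identical computation applied to $\thr - \mmarg{y_c} = \bh^{\top}(u - v(\wit{y_c}))$ shows that each summand in the definition of $\cL_{e}^{\gamma}$ is $\sqrt{2\ncliques}/\gamma$-Lipschitz in $\bh$. Averaging $m$ functions sharing a common Lipschitz constant $L$ yields a function that is at most $L$-Lipschitz (the gradient of the average is the average of the gradients, and Jensen applied to $\|\cdot\|_{2}$ preserves the bound), so $\phi_{e}(y,\cdot)$ is also $\sqrt{2\ncliques}/\gamma$-Lipschitz. The main obstacle I anticipate is handling the non-smoothness carefully: the argmax and witnesses change with $\bh$, and one must justify that a bound on the slope over each linear piece transfers to a global Lipschitz bound. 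This is standard for finite maxima of affine functions and for sums thereof, but should be stated explicitly to make the inner-product representation rigorous.
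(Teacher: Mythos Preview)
Your proposal is correct and arrives at the same constant $\sqrt{2\ncliques}/\gamma$ via the same underlying observation: every relevant scalar is an inner product $\bh^\top v(\cdot)$ with a nonnegative indicator-type vector of squared norm $\ncliques$, and nonnegativity of the cross term gives $\|v - u\|_2^2 \le 2\ncliques$ rather than the looser $4\ncliques$ from the triangle inequality.

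The packaging differs from the paper's. You consolidate the threshold into a single convex-combination vector $u$ and argue via piecewise linearity: bound the gradient on each affine region, then pass to a global Lipschitz bound. The paper instead proves a sequence of lemmas (one for $\score{y}-\max_{y'}\score{y'}$, one for $\score{y}-\tfrac{1}{m}\sum\mmarg{y'_c}$, then the convex combination, then the analogous max-marginal version), each time bounding $\phi(\bz_1)-\phi(\bz_2)$ directly with a ``swap'' argument that exploits $\bz_1^\top \bv_2 \le \bz_1^\top \bv_1$ for the respective argmaxes. That swap step is essentially the supergradient inequality for a concave piecewise-linear function, so at bottom the two arguments coincide; yours is a bit more compact because you never have to separate $\phi_1$ and $\phi_2$, while the paper's version avoids having to justify the ``local Lipschitz on pieces implies global Lipschitz'' step that you flagged as the main obstacle. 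Either presentation is fine; your observation that the bound $\|v(y)-u\|_2^2\le 2\ncliques$ holds even when $u$ is a convex combination (not an indicator) is a slight generalization of the paper's $\|\bv_2-\bv\|_2\le\sqrt{2\ncliques}$ for two indicators, but it follows from the same nonnegativity trick.
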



To prove Theorem \ref{thm:lipschitz}, we bound Lipschitz constants of constituent
functions of  $\phi_{f}$ and $\phi_{e}$.

\begin{lem}
  \label{lem:trick}
  Fix any $y \in \cY$ and let $\phi_{1} : \reals^{m}\mapsto \reals$ be defined as 
  $$\phi_{1}(\bz) = \bz^{\top}\bv(y) - \max_{y'\in\cY} \bz^{\top}\bv(y').$$ 
  Then $\phi_{1}(\bz_{1}) - \phi_{1}(\bz_{2}) \le \sqrt{2\ncliques} ||\bz_{1} -\bz_{2}||_2$ for any $\bz_{1},\bz_{2}\in\reals^{m}$.
\end{lem}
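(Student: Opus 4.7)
The plan is to bound $\phi_1(\bz_1)-\phi_1(\bz_2)$ by a standard ``max-vs-max'' argument, reduce to an inner product with a fixed difference of indicator vectors, and then exploit the combinatorial structure of $\bv(\cdot)$ to bound that difference in Euclidean norm.

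First, I would split the difference cleanly:
\begin{equation*}
\phi_1(\bz_1)-\phi_1(\bz_2) \;=\; (\bz_1-\bz_2)^\top\bv(y) \;-\; \Big(\max_{y'\in\cY}\bz_1^\top\bv(y') - \max_{y'\in\cY}\bz_2^\top\bv(y')\Big).
\end{equation*}
Let $y^{\star}_2 \in \argmax_{y'}\bz_2^\top\bv(y')$. Then by the maximality of $y^{\star}_1 \in \argmax_{y'}\bz_1^\top\bv(y')$ for $\bz_1$, I have $\bz_1^\top\bv(y^{\star}_1)\ge \bz_1^\top\bv(y^{\star}_2)$, so
\begin{equation*}
-\max_{y'}\bz_1^\top\bv(y') + \max_{y'}\bz_2^\top\bv(y') \;\le\; (\bz_2-\bz_1)^\top\bv(y^{\star}_2).
\end{equation*}
Combining and factoring gives the single-vector bound
\begin{equation*}
\phi_1(\bz_1)-\phi_1(\bz_2) \;\le\; (\bz_1-\bz_2)^\top\bigl(\bv(y)-\bv(y^{\star}_2)\bigr).
\end{equation*}

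Next I would apply Cauchy--Schwarz, so the task reduces to bounding $\|\bv(y)-\bv(y^{\star}_2)\|_2$. Here the key observation is structural: by the definition $\bv_{y_c}(y')=\ind{y'_c=y_c}$, the vector $\bv(y')$ has exactly one nonzero (equal to $1$) per clique $c\in\cC$, irrespective of $y'$. Hence $\|\bv(y')\|_2^2=\ncliques$ for every $y'$, and by the triangle inequality applied to the squared norm (equivalently, $(a-b)^2\le 2a^2+2b^2$ componentwise, or just expanding and dropping the nonnegative inner product),
\begin{equation*}
\|\bv(y)-\bv(y^{\star}_2)\|_2^2 \;\le\; \|\bv(y)\|_2^2+\|\bv(y^{\star}_2)\|_2^2 \;=\; 2\ncliques,
\end{equation*}
once one notes that $\bv(y)^\top\bv(y^{\star}_2)\ge 0$. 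Chaining these pieces yields $\phi_1(\bz_1)-\phi_1(\bz_2)\le\sqrt{2\ncliques}\,\|\bz_1-\bz_2\|_2$, as claimed.

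There is no serious obstacle: the only subtlety is making sure the ``swap one maximizer'' step goes in the correct direction (evaluating the harder-to-control max at the other argument's optimizer), and recognizing that the two indicator vectors $\bv(y),\bv(y^{\star}_2)$ live on a common support pattern of $\ncliques$ ones so the Euclidean difference is controlled by $\sqrt{2\ncliques}$ rather than by anything that scales with $m=\sum_c|\cY_c|$. Once Lemma~\ref{lem:trick} is in hand, Theorem~\ref{thm:lipschitz} follows by writing $\phi_f$ and $\phi_e$ as compositions of the $1/\gamma$-Lipschitz ramp $r_\gamma$ with sums/averages of maps of the form in Lemma~\ref{lem:trick} (with $y$ replaced by the witness clique-indicator in the efficiency case), and Theorem~\ref{thm:gencascade} then follows by plugging the Lipschitz constant into \eqref{eq:multi_gaus}, the linear-class bound \eqref{eq:lingaus}, and the Rademacher-to-Gaussian conversion \eqref{eq:rad_to_gaus} inside Theorem~\ref{thm:genbound}.
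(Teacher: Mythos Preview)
Your proposal is correct and follows essentially the same route as the paper: you swap the maximizer at $\bz_1$ for the one at $\bz_2$ to reduce to $(\bz_1-\bz_2)^\top(\bv(y)-\bv(y^{\star}_2))$, apply Cauchy--Schwarz, and bound the indicator-vector difference using the one-nonzero-per-clique structure. The paper's algebra is organized slightly differently (it writes out $\bz_1^\top(\bv_2-\bv_1)\le 0$ explicitly rather than phrasing it as ``evaluate the max at the other optimizer''), but the argument and the resulting bound are identical.
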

\begin{proof}
    For brevity of notation in the proof below, we define 
    $\bv = \bv(y), \bv_{1} = \bv( \argmax_{y'} \bz_{1}^{\top}\bv(y'))$ and 
    $\bv_{2} = \bv( \argmax_{y'} \bz_{2}^{\top}\bv(y'))$, with ties broken arbitrarily but deterministically.  Then,
  \begin{align*}
    \phi_{1}(\bz_{1}) - \phi_{1}(\bz_{2}) & = \bz_{1}^{\top}\bv - \bz_{1}^{\top}\bv_{1} - 
    \bz_{2}^{\top}\bv + \bz_{2}^{\top}\bv_{2}\\
    & = (\bz_{2}-\bz_{1})^{\top}(\bv_{2} - \bv) + \bz_{1}^{\top}(\bv_{2} - \bv_{1})\\
    & \le (\bz_{2}-\bz_{1})^{\top}(\bv_{2} - \bv)  \\
    & \le ||\bz_{2}-\bz_{1}||_{2} ||\bv_{2} - \bv||_{2} \\
    & \le \sqrt{2\ncliques} ||\bz_{1}-\bz_{2}||_2.
  \end{align*}
  The last three steps follow (1) from  the fact that $\bv_{1}$ maximizes
  $\bz_{1}^{\top}\bv(y')$ (so that $\bz_{1}^{\top} (\bv_{2} - \bv_{1})$ is negative), (2) from
  the Cauchy-Schwarz inequality, and (3) from the fact that there are $\ncliques$
  cliques, each of which can contribute at most a single non-zero entry
  in $\bv$ or $\bv_{2}$.
\end{proof}

\begin{lem}
  \label{lem:mmargdiff}
Fix any $y \in \cY$ and let $\phi_{2} : \reals^{m}\mapsto \reals$ be defined as 
$$\phi_{2}(\bz) = \bz^{\top}\bv(y) - \frac{1}{m} \sum_{c\in\cC,y'_{c}\in\cY_{c}} \max_{y'':y''_{c}=y'_{c}} \bz^{\top}\bv(y'').$$ 
  Then $\phi_{2}(\bz_{1}) - \phi_{2}(\bz_{2}) \le \sqrt{2\ncliques} ||\bz_{1} -\bz_{2}||_2$ for any $\bz_{1},\bz_{2}\in\reals^{m}$.
\end{lem}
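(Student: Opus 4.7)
My plan is to reduce Lemma~\ref{lem:mmargdiff} to Lemma~\ref{lem:trick} by exhibiting $\phi_2$ as an average of $m$ functions of exactly the same form as $\phi_1$, so that Lipschitz continuity propagates through the average.

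The first step is to define, for each clique/assignment pair $(c, y'_c)$, the per-term function
$$\phi_2^{c,y'_c}(\bz) \;\triangleq\; \bz^\top \bv(y) - \max_{y'' : y''_c = y'_c} \bz^\top \bv(y'').$$
Then I will verify by a one-line rearrangement that
$$\phi_2(\bz) = \bz^\top\bv(y) - \frac{1}{m}\sum_{c,y'_c}\bigl(\bz^\top\bv(y) - \phi_2^{c,y'_c}(\bz)\bigr) = \frac{1}{m}\sum_{c,y'_c}\phi_2^{c,y'_c}(\bz),$$
which reduces the lemma to showing the Lipschitz bound holds for each $\phi_2^{c,y'_c}$.

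Next, I will observe that each $\phi_2^{c,y'_c}$ is structurally identical to the $\phi_1$ of Lemma~\ref{lem:trick}, except that the maximum is taken over the subset $\{y'' : y''_c = y'_c\}$ of $\cY$ rather than all of $\cY$. The proof of Lemma~\ref{lem:trick} only used two properties of the maximizers: that they are elements of $\cY$, so that $\bv(\cdot)$ evaluated at them has at most $|\cC|$ nonzero entries (one per clique), and that the argmax over a fixed objective is chosen deterministically for tie-breaking. Both properties continue to hold for the restricted maximization, so the same Cauchy--Schwarz bound gives
$$\phi_2^{c,y'_c}(\bz_1) - \phi_2^{c,y'_c}(\bz_2) \le \sqrt{2\ncliques}\,\|\bz_1 - \bz_2\|_2$$
for every $(c, y'_c)$.

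Finally, averaging these $m$ inequalities immediately yields
$$\phi_2(\bz_1) - \phi_2(\bz_2) = \frac{1}{m}\sum_{c,y'_c}\bigl[\phi_2^{c,y'_c}(\bz_1) - \phi_2^{c,y'_c}(\bz_2)\bigr] \le \sqrt{2\ncliques}\,\|\bz_1-\bz_2\|_2,$$
completing the proof. I do not anticipate a real obstacle here: the main subtlety is the bookkeeping in the algebraic identity that collapses the $\bz^\top\bv(y)$ terms, and checking that restricting the max to a subset does not break the $\sqrt{2\ncliques}$ bound on $\|\bv_1 - \bv_2\|_2$ from Lemma~\ref{lem:trick}.
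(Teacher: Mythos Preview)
Your proposal is correct and is essentially the same argument as the paper's proof. The paper also writes $\phi_2(\bz_1)-\phi_2(\bz_2)$ as an average over the $m$ clique-assignment terms and then applies the add-and-subtract/Cauchy--Schwarz trick from Lemma~\ref{lem:trick} to each term separately; you have simply made the per-term decomposition explicit by naming $\phi_2^{c,y'_c}$ and invoking Lemma~\ref{lem:trick} as a black box over the restricted domain $\{y'':y''_c=y'_c\}$, which is a clean way to package the same computation.
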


\begin{proof}
  Let $\bv = \bv(y),  \bv_{1y'_{c}} = \bv(\argmax_{y'': y''=y'_{c}} \bz_{1}^{\top}\bv(y''))$ and
  $\bv_{2y'_{c}} = \bv(\argmax_{y'': y''=y'_{c}} \bz_{2}^{\top}\bv(y''))$.
  \begin{align*}
    \phi_{2}(\bz_{1})-\phi_{2}(\bz_{2}) & = \frac{1}{m} \sum_{c\in\cC,y'_{c}\in\cY_{c}} 
    \bz_{1}^{\top}\bv - \bz_{1}^{\top}\bv_{1y'_{c}} - 
    \bz_{2}^{\top}\bv + \bz_{2}^{\top}\bv_{2y'_{c}}\\
   & = \frac{1}{m} \sum_{c\in\cC,y'_{c}\in\cY_{c}}  (\bz_{2}-\bz_{1})^{\top}(\bv_{2y'_{c}} - \bv) + \bz_{1}^{\top}(\bv_{2y'_{c}} - \bv_{1y'_{c}})\\
    & \le   \frac{1}{m} \sum_{c\in\cC,y'_{c}\in\cY_{c}}  (\bz_{2}-\bz_{1})^{\top}(\bv_{2y'_{c}} - \bv) \\
    & \le \frac{1}{m}   \sum_{c\in\cC,y'_{c}\in\cY_{c}}  \sqrt{2\ncliques} ||\bz_{1}-\bz_{2}||_2 =  \sqrt{2\ncliques} ||\bz_{1}-\bz_{2}||_2 .
  \end{align*}
The inequalities follow using a similar argument to previous lemma, but made separately for each $y'_{c}$.
\end{proof}

\begin{lem}
  \label{lem:phidiff}
Fix any $y \in \cY$ and let  
$$\phi_{3}(\bz) = \alpha\phi_{1}(\bz) + (1-\alpha)\phi_{2}(\bz) = 
 \overbrace{\bz^{\top}\bv(y)}^{\score{y}} - \overbrace{\left(\alpha \max_{y'} \bz^{\top}\bv(y') + \frac{1-\alpha}{m} \sum_{y'_{c}} \max_{y'':y''_{c}=y'_{c}} \bz^{\top}\bv(y'')\right)}^{\thr},$$
where the over-braces show the relationship to the score of the correct label sequence and the threshold, assuming $\bz = \bh(x)$.  
Then $\phi_{3}(\bz_{1}) - \phi_{3}(\bz_{2}) \le  \sqrt{2\ncliques}||\bz_{1}-\bz_{2}||_2$ for any $\bz_{1},\bz_{2}\in\reals^{m}$
and the Lipschitz constant of $\phi_{f}= r_{\gamma} \circ \phi_{3}$ is bounded by $ \sqrt{2\ncliques}/\gamma$.
\end{lem}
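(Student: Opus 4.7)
The plan is to combine the two previous Lemmas with the observation that Lipschitz continuity is preserved under convex combination and composition. First I would invoke Lemma~\ref{lem:trick} to conclude that $\phi_1$ is Lipschitz with constant $\sqrt{2\ncliques}$ and Lemma~\ref{lem:mmargdiff} to conclude the same bound for $\phi_2$. These are precisely the two ingredients in the definition $\phi_3 = \alpha\phi_1 + (1-\alpha)\phi_2$, so I would then bound
\[
\phi_3(\bz_1) - \phi_3(\bz_2) = \alpha\bigl(\phi_1(\bz_1)-\phi_1(\bz_2)\bigr) + (1-\alpha)\bigl(\phi_2(\bz_1)-\phi_2(\bz_2)\bigr)
\]
by $\alpha\sqrt{2\ncliques}\|\bz_1-\bz_2\|_2 + (1-\alpha)\sqrt{2\ncliques}\|\bz_1-\bz_2\|_2 = \sqrt{2\ncliques}\|\bz_1-\bz_2\|_2$, since $\alpha \in [0,1]$ and the two constants are equal. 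This establishes the first claim of the lemma directly from the earlier lemmas.

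For the second claim about $\phi_f = r_\gamma \circ \phi_3$, I would note that the ramp function $r_\gamma$ defined in the margin-augmented losses is piecewise linear with slopes in $\{0, -1/\gamma\}$, and is therefore globally Lipschitz with constant $1/\gamma$ on $\reals$. Composition of Lipschitz functions multiplies their constants, so the Lipschitz constant of $\phi_f$ is bounded by $(1/\gamma)\cdot\sqrt{2\ncliques} = \sqrt{2\ncliques}/\gamma$, as claimed.

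There is no real obstacle here; the work has all been done in the two preceding lemmas, and this lemma is essentially a packaging step. The only subtle point worth highlighting explicitly is verifying the identification shown in the over-braces, namely that $\phi_3(\bh(x))$ equals $\score{y} - \thr$ when $\bz = \bh(x)$. This follows immediately from the vectorization: $\bh(x)^\top \bv(y) = \score{y}$, $\max_{y'}\bh(x)^\top\bv(y') = \max_{y'}\score{y'}$, and $\max_{y'':y''_c=y'_c}\bh(x)^\top\bv(y'') = \mmarg{y'_c}$, so the definition of $\thr$ from Eq.~\eqref{eq:meanmax} is reproduced. Once this identification is noted, the lemma combined with Theorem~\ref{thm:genbound} and the linear class bound in Eq.~\eqref{eq:lingaus} will give the filtering bound in Theorem~\ref{thm:gencascade}; an analogous combination with Lemma~\ref{lem:mmargdiff} applied term-by-term will give the efficiency bound.
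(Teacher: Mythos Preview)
Your proposal is correct and matches the paper's proof essentially line for line: the paper also writes $\phi_3(\bz_1)-\phi_3(\bz_2)=\alpha(\phi_1(\bz_1)-\phi_1(\bz_2))+(1-\alpha)(\phi_2(\bz_1)-\phi_2(\bz_2))$, applies Lemmas~\ref{lem:trick} and~\ref{lem:mmargdiff}, and then invokes $L(\phi_f)\le L(r_\gamma)\cdot L(\phi_3)$ for the composition. Your extra remarks on the over-brace identification and the downstream use in Theorem~\ref{thm:gencascade} are accurate but go beyond what the paper includes in this proof.
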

\begin{proof}
 Combining two previous  lemmas we have that 
  $$\phi_{3}(\bz_{1}) - \phi_{3}(\bz_{2}) = \alpha(\phi_{1}(\bz_{1}) - \phi_{1}(\bz_{2})) +  (1-\alpha)(\phi_{2}(\bz_{1}) - \phi_{2}(\bz_{2}) ) \le \sqrt{2\ncliques}||\bz_{1}-\bz_{2}||_2.$$
To show that $\phi_f$  is Lipschitz
continuous with constant $\sqrt{2\ncliques}/\gamma$, we note that
$\phi_{f} = r_{\gamma} \circ \phi_{3}$ so  $L(\phi_f) = L(r_\gamma)\cdot L(\phi_{3}) \le
\sqrt{2\ncliques}/\gamma.$ 
\end{proof}

Next, we show that $\phi_e$ is Lipschitz
continuous with the same constant. 
\begin{lem}
Fix $c \in \cC$ and  $y_{c}\in\cY$  and let $\phi_{[y_{c}]} : \reals^{m}\mapsto \reals$ be defined as 
$$\phi_{{[y_{c}]}}(\bz) =  \overbrace{\left(\max_{y' : y'_{c} = y_{c}} \bz^{\top}\bv(y' )\right)}^{\mmarg{y_{c}}} - \overbrace{\left(\alpha \max_{y'} \bz^{\top}\bv(y') + \frac{1-\alpha}{m} \sum_{y'_{c'}} \max_{y'':y''_{c'}=y'_{c'}} \bz^{\top}\bv(y'')\right)}^{\thr},$$ 
where the over-braces show the relationship to max-marginal of $y_{c}$ and 
the threshold and, assuming $\bz = \bh(x)$.
  Then $\phi_{{[y_{c}]}}(\bz_{1}) - \phi_{[y_{c}]}(\bz_{2}) \le \sqrt{2\ncliques} ||\bz_{1} -\bz_{2}||_2$ for any $\bz_{1},\bz_{2}\in\reals^{m}$.
\end{lem}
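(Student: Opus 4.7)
The plan is to mimic the proofs of Lemmas~\ref{lem:trick}--\ref{lem:phidiff}, but pairing the restricted max-marginal term with each component of the threshold so that convex combinations never inflate the Lipschitz constant beyond $\sqrt{2\ncliques}$. First I would rewrite
\begin{equation*}
  \phi_{[y_c]}(\bz) \;=\; \alpha\,\bigl(A(\bz) - M(\bz)\bigr) + \frac{1-\alpha}{m}\sum_{c'\in\cC,\, y'_{c'}\in\cY_{c'}} \bigl(A(\bz) - M_{c',y'_{c'}}(\bz)\bigr),
\end{equation*}
where $A(\bz) = \max_{y': y'_c = y_c} \bz^\top \bv(y')$ is the constrained max defining $\mmarg{y_c}$, $M(\bz) = \max_{y'} \bz^\top\bv(y')$ is the unconstrained max, and $M_{c',y'_{c'}}(\bz) = \max_{y'': y''_{c'} = y'_{c'}} \bz^\top\bv(y'')$ is a clique-restricted max. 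This is an exact algebraic identity since $\alpha + (1-\alpha) = 1$ and the sum over $(c',y'_{c'})$ has exactly $m$ terms.

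Next I would establish the core technical fact: for any two sets $S,T \subseteq \cY$, the function $F(\bz) - G(\bz)$ with $F(\bz) = \max_{y'\in S} \bz^\top\bv(y')$ and $G(\bz) = \max_{y''\in T}\bz^\top\bv(y'')$ is $\sqrt{2\ncliques}$-Lipschitz. The argument is the same trick used in Lemma~\ref{lem:trick}: fix an argmax $\bv_{F,1}$ of $F$ at $\bz_1$ and an argmax $\bv_{G,2}$ of $G$ at $\bz_2$ (breaking ties deterministically). Then $F(\bz_2) \ge \bz_2^\top \bv_{F,1}$ and $G(\bz_1) \ge \bz_1^\top\bv_{G,2}$, so
\begin{align*}
(F-G)(\bz_1) - (F-G)(\bz_2) &= \bigl(F(\bz_1)-F(\bz_2)\bigr) - \bigl(G(\bz_1) - G(\bz_2)\bigr) \\
&\le (\bz_1-\bz_2)^\top \bv_{F,1} + (\bz_2-\bz_1)^\top \bv_{G,2} \\
&= (\bz_1-\bz_2)^\top(\bv_{F,1} - \bv_{G,2}) \\
&\le \|\bz_1-\bz_2\|_2\,\|\bv_{F,1}-\bv_{G,2}\|_2 \le \sqrt{2\ncliques}\,\|\bz_1-\bz_2\|_2,
\end{align*}
where the last inequality uses that each $\bv(\cdot)$ has exactly $\ncliques$ unit entries, so $\|\bv_{F,1}-\bv_{G,2}\|_2^2 \le 2\ncliques$.

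Applying this pairing bound to each summand $A-M$ and $A-M_{c',y'_{c'}}$ in the decomposition above and using the triangle inequality weighted by the convex-combination coefficients yields
\begin{equation*}
\phi_{[y_c]}(\bz_1) - \phi_{[y_c]}(\bz_2) \;\le\; \alpha\,\sqrt{2\ncliques}\,\|\bz_1-\bz_2\|_2 + \frac{1-\alpha}{m}\cdot m \cdot \sqrt{2\ncliques}\,\|\bz_1-\bz_2\|_2 \;=\; \sqrt{2\ncliques}\,\|\bz_1-\bz_2\|_2,
\end{equation*}
as desired. The main obstacle — and the reason a naive triangle-inequality bound on $A - B$ would give $2\sqrt{\ncliques}$ rather than $\sqrt{2\ncliques}$ — is resisting the temptation to bound $A$ and $B$ separately; the improvement comes from pairing the argmaxes of $F$ and $G$ and exploiting the shared sparsity pattern of $\bv_{F,1}-\bv_{G,2}$ (at most $2\ncliques$ nonzero entries of absolute value $1$). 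Everything else is routine: once this Lipschitz bound is in hand, composing with the ramp $r_\gamma$ gives Lipschitz constant $\sqrt{2\ncliques}/\gamma$ for $r_\gamma \circ \phi_{[y_c]}$, and averaging over $(c,y_c)$ preserves it, which combined with Theorem~\ref{thm:genbound} and the linear-class Gaussian complexity bound~\eqref{eq:lingaus} closes out Theorem~\ref{thm:gencascade} for the efficiency loss.
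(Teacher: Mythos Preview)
Your proposal is correct and uses essentially the same idea as the paper: apply the argmax trick to each maximization term, combine the resulting vectors \emph{before} invoking Cauchy--Schwarz, and then use that any $\bv(y')-\bv(y'')$ has at most $2\ncliques$ nonzero $\pm 1$ entries. The only organizational difference is that you first decompose $\phi_{[y_c]}$ as a convex combination of pairwise differences $A-M$ and $A-M_{c',y'_{c'}}$ and bound each by $\sqrt{2\ncliques}$, whereas the paper keeps all three max terms together and bounds the single combined vector $\bv_{1y_c}-\alpha\bv_2-(1-\alpha)\bv_{2y'_{c'}}$ directly; the arithmetic is the same either way.
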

\begin{proof}
  We once again apply the trick from the proof of Lemma~\ref{lem:trick}.  Let 
  \begin{eqnarray*}
  \bv_{1} &=& \bv( \argmax_{y'} \bz_{1}^{\top}\bv(y')),\;\;\;\;\;\;\;\;\;
  \bv_{2} = \bv( \argmax_{y'} \bz_{2}^{\top}\bv(y')),\\
  \bv_{1y'_{c'}} &=& \bv(\argmax_{y'': y''=y'_{c'}} \bz_{1}^{\top}\bv(y'')),\;\;\;\;
  \bv_{2y'_{c'}} = \bv(\argmax_{y'': y''=y'_{c'}} \bz_{2}^{\top}\bv(y'')).
  \end{eqnarray*}
Then, we have:
  \begin{align*} 
    \phi_{{[y_{c}]}}(\bz_{1}) - \phi_{{[y_{c}]}}(\bz_{2})  
    = \; & 
    (\bz_{1}^{\top}\bv_{1y_{c}}  - \bz_{2}^{\top}\bv_{2y_{c}}) + 
    \alpha (\bz_{2}^{\top}\bv_{2} - \bz_{1}^{\top}\bv_{1}) +
    \frac{1-\alpha}{m}\sum_{y'_{c'}} 
    (\bz_{2}^{\top}\bv_{2y'_{c'}}  - \bz_{1}^{\top}\bv_{1y'_{c'}}) \\
    \le & \; 
    (\bz_{1}-\bz_{2})^{\top}\bv_{1y_{c}}  + 
    \alpha  (\bz_{2}-\bz_{1})^{\top} \bv_{2} +
    \frac{1-\alpha}{m}\sum_{y'_{c'}} 
    (\bz_{2}-\bz_{1})^{\top}\bv_{2y'_{c'}} \\
    = &\;       \frac{1}{m}\sum_{y'_{c'}} 
(\bz_{1}-\bz_{2})^{\top}  \left(\bv_{1y_{c}}  - \alpha \bv_{2} -  (1-\alpha)\bv_{2y'_{c'}} \right)\\    
    \le & \; \frac{1}{m}\sum_j \sqrt{2\ncliques}||\bz_{1}-\bz_{2}||_2  = \sqrt{2\ncliques}||\bz_{1}-\bz_{2} ||_2.
  \end{align*}
Here once again we have condensed the argument similar to Lemma \ref{lem:trick}.

\end{proof}
Finally, we note that $\phi_e(\bz) = 1/m\sum_i
r_\gamma(\phi_{[y_{c}]}(\bz))$. Therefore $L(\phi_e) = 1/m \sum_i
\sqrt{2\ncliques}/\gamma = \sqrt{2\ncliques}/\gamma$,
 thus completing the proof of Theorem \ref{thm:lipschitz}.
Now turning back to Theorem~\ref{thm:gencascade},  
we note that the class of functions $H$ we are working with is 
the direct sum of $m$ linear classes each bounded by norm $B$.
Hence we complete the proof of Theorem~\ref{thm:gencascade}, by using Theorem~\ref{thm:genbound}, with $R_{n}(\tphi_{f}\circ H) = R_{n}(\tphi_{e}\circ H) \le \frac{cmB\sqrt{\ncliques}}{\gamma\sqrt{n}}$ for some constant $c$.

\subsection{Proof of Theorem \ref{thm:gencascade-ensemble}}

We define  
\begin{eqnarray*} 
\Phi_{joint}(y,\bh(x))  &\triangleq &  \cL_{joint}(x,y;\theta,\alpha) = \ind{\left(\sum_{p} \pscore{y}- \pthr\right) \le 0}, \\
    \phi_{joint}(y,\bh(x)) & \triangleq & \cL_{joint}^{\gamma}(x,y;\theta,\alpha) = r_{\gamma}\left(\sum_{p} \pscore{y}- \pthr\right)
\end{eqnarray*} 
     
Once again, we fix any $y \in \cY$ and for each $p$, let  (similar to Lemma~\ref{lem:phidiff})
$$\phi_{3}(\bz_{p}) =  
 \overbrace{\bz_{p}^{\top}\bv(y)}^{\pscore{y}} - \overbrace{\left(\alpha \max_{y'} \bz_{p}^{\top}\bv(y') + \frac{1-\alpha}{m} \sum_{y'_{c}} \max_{y'':y''_{c}=y'_{c}} \bz_{p}^{\top}\bv(y'')\right)}^{\pthr},$$
where the over-braces show the relationship to the score of the correct label sequence under model $p$ and the threshold for model $p$, assuming $\bz_{p} = \bh_{p}(x)$ of model $p$.  

Then $\phi_{joint}(y,\bh(x)) = r_\gamma(\sum_p \phi_{3}(\sum_{p} \bz_{p}))$ has Lipschitz constant  $\sqrt{2\ncliques}P/\gamma$, since we can apply Lemma~\ref{lem:phidiff} for each $p$, and $\phi_{joint}(y,\bh(x)) = r_\gamma\left(\sum_p \pscore{y} -\pthr \right)$ has Lipschitz constant at most $\sqrt{2\ncliques}P/\gamma$ because if composition with $r_{\gamma}$ and the sum of $P$ identical terms.  In  Theorem~\ref{thm:gencascade-ensemble}, our function
class $H$ is the direct sum of $m*P$ linear classes each bounded by norm $B/P$, hence
 $R_{n}(\tphi_{joint}\circ H) \le \frac{cmPB\sqrt{\ncliques}}{\gamma\sqrt{n}}$ for some constant $c$.

\out{

\subsection{Vectorization}

In order to analyze the loss functions using Gaussian complexity, we
first vectorize our loss functions as follows. We fix an arbitrary
ordering over cliques and assignments to cliques, such that for every
clique $c$ and assignment $y_c$ of that clique there is a unique index
$i$. We define $c(i)$ to be the clique of the $i$'th index and $a(i)$
to be the corresponding assignment. Let $m = \sum_{c \in \cC}|\cY_c|$
be the total number of assignments to all cliques. We then define the
vectorized scoring vector $\ssp \in \reals^m$ as follows:
\begin{equation}
  \label{eq:ssp_def}
  (\ssp)_i = \theta^\top \bft_{c(i)}(x, a(i)).
\end{equation}
For example, if all cliques were individual variables (i.e., $c_j =
y_j$), and $y_j \in \{1, \dots, k\}$, then we would have $c(i) =
\ceil{i/k}$ and $a(i) = i \bmod k$. Next, we define an indicator
vector $v(y): \cY \mapsto \{0,1\}^m$ to map output vectors $y$ to the
linear space:
\begin{equation}
  v(y)_i = \begin{cases} 
    1 & \textrm{ if } y_{c(i)} = a(i) \\
    0 & \textrm { otherwise. } 
    \end{cases}
\end{equation}
Thus, $\score{y} = \ssp^\top v(y)$, and we can rewrite our definitions
of max marginals and the threshold function as follows (treating
$\alpha$ as a fixed constant):
\begin{equation}
  \label{eq:vectorized_threshold}  
  y^\star_i = \argmax_{y: y_{c(i)} = a(i)} \ssp^\top v(y), 
  \qquad t(\ssp) = \alpha \max_y \ssp^\top v(y) + \frac{1-\alpha}{m} \sum_{i=1}^m \ssp^\top v(y^\star_i).
\end{equation}
Finally can then rewrite our loss functions $\cL_f$ and $\cL_e$ as functions of
$\cY \times \reals^m$:
\begin{equation}
  \label{eq:vectorized_loss}
  \cL_f(y, \ssp) = \ind{ \ssp^\top v(y) \le t(\ssp)}, \qquad 
  \cL_e(y, \ssp) = \frac{1}{m}\sum_{i=1}^m \ind{ \ssp^\top(y^\star_i) > t(\ssp)},
\end{equation}
which was the purpose of this vectorization step.

\subsection{Proof of Theorem \ref{thm:gencascade}}

We first define the dominating cost functions for our two loss functions:
\begin{equation}
  \label{eq:dominating-cost}
  \phi_f(y, \ssp) = r_\gamma(\ssp^\top v(y) - t(\ssp)), 
  \qquad \phi_e(y, \ssp) = 1/m \sum_i r_\gamma(t(\ssp) - \ssp^\top v(y^\star_i)),
\end{equation}
where the ramp function $r_\gamma(x)$ is defined as follows:
$$r_\gamma(x) =
\begin{cases}
  1 & \textrm{ if } x \le 0 \\
  1-\gamma x & \textrm { otherwise. }
\end{cases}
$$
Clearly, we have $\phi_f \ge \cL_f$ and $\phi_e \ge \cL_e$. Assume
Lemma \ref{lem:lipschitz} holds (that $\phi_f$ and $\phi_e$ are
Lipschitz with constant $\sqrt{2\ell}/\gamma$) as shown in the next
section. We now reproduce the following four facts from
\citet{bartlettM02} that we require to prove the theorem.
\begin{thm}[Bartlett and Mendelson, 2002]
  \label{thm:genbound}
  Consider a loss function $\mathcal{L}$ and a dominating cost
  function $\phi$ such that $\cL(y,x) \le \phi(y,x)$. Let $F:
  \mathcal{X} \mapsto \mathcal{A}$ be a class of functions. Then for
  any integer $n$ and any $0 < \delta < 1$, with probability $1 -
  \delta$ over samples of length $n$, every $f$ in $F$ satisfies
  \begin{equation}
    \label{eq:genbound}
    \bbE\cL(Y,f(X)) \le  \hat{\bbE}_n \phi(Y,f(X)) + 
    R_n(\tilde{\phi} \;\circ\; F) + \sqrt{ \frac{8 \ln(2/\delta)}{n} },
  \end{equation}
  where $\tilde{\phi} \circ F$ is a centered composition of $\phi$
  with $f \in F$, $\tilde{\phi} \circ f = \phi(y,f(X)) - \phi(y,0)$.

  Furthermore, there are absolute constants $c$ and $C$ such that for
  every class $F$ and every integer $n$,
  \begin{equation}
    \label{eq:rad_to_gaus}
    cR_n(F) \le G_n(F) \le C \ln  n R_n(F).
  \end{equation}
  Let $\cA = \reals^m$ and $F: \cX \to \cA$ be a class of functions
  that is the direct sum of real-valued classes $F_1, \dots,
  F_m$. Then, for every integer $n$ and every sample $(X_1,Y_1), \dots,
  (X_n,Y_n)$,
  \begin{equation}
    \label{eq:multi_gaus}
    \hG_n(\tphi \circ F) \le 2L \sum_{i=1}^m \hG_n(F_i).
  \end{equation}
  Let $H = \{x \mapsto \bw^\top\bft(x,\cdot) \mid ||\bw||_2 \le B,
  ||\bft(x,\cdot)||_2 \le 1\}$. Then,
  \begin{equation}
    \label{eq:lingaus}
    \hat{G}_n(H) \le \frac{2B}{\sqrt{n}}.
  \end{equation}
\end{thm}

\begin{proof}[Proof of Theorem \ref{thm:gencascade}]
  Let $\cA = \reals^m$ and $F = \Theta_X$.  By \eqref{eq:rad_to_gaus},
  we have that 
  \begin{equation}
    R_n(\tphi_f \circ F) = O(G_n(\tphi_f \circ F)).
  \end{equation}
  Since $\tphi$ passes through the origin, then by \eqref{eq:multi_gaus}, 
  \begin{equation}
    G_n(\tphi_f \circ F) = O\left(L(\tphi_f)\sum_{i=1}^m G_n(F_i)\right).
  \end{equation}
  Next, we note that, for $F_i = (\ssp)_i$, each $F_i$ comes from the
  same class of linear functions $H$, where $H$ is defined above.
  Therefore, $G_n(\tphi \circ F) = O(m\gamma^{-1}\sqrt{\ell}G_n(H))$.

  Finally, with Lemma \ref{lem:lipschitz} and $L(\tphi_f) =
  L(\phi_f)$, and applying \eqref{eq:lingaus}, we then have that
  \begin{equation}
    \label{eq:gausresult}
    R_n(\tphi_f \circ F) = O(m\gamma^{-1}\sqrt{\ell}G_n(H)) = O\left(m\gamma^{-1}\sqrt{\ell}Bn^{-1/2}\right).
  \end{equation}
  Plugging \eqref{eq:gausresult} into \eqref{eq:genbound} yields the theorem.

  Finally, because $L(\phi_f) = L(\phi_e)$, the
  same results applies to $\cL_e$ as well.
\end{proof}

\begin{lem}
  \label{lem:lipschitz}
  $\phi_f(y,\cdot)$ and $\phi_e(y,\cdot)$ are Lipschitz (with respect
  to Euclidean distance on $\reals^m$) with constant
  $\sqrt{2\ell}/\gamma$.
\end{lem}

To simplify notation, we will assume that $v(y) = y$. To prove Lemma
\ref{lem:lipschitz}, we first bound the slope of the difference
$\phi(y,\ssp)$.
\begin{lem}
  \label{lem:trick}
  Let $f(\ssp) = \ang{y,\ssp} - \max_{y'} \ang{y',\ssp}$. Then $f(u) -
  f(v) \le \sqrt{2\ell} ||u -v||_2.$
\end{lem}
\begin{proof}
  Let $y_u = \argmax_{y'} \ang{y',u}$ and $y_v = \argmax_{y'} \ang{y',
    v}$. Then we have,
  \begin{align*}
    f(u) - f(v) & = \ang{y,u} - \ang{y_u,u} + \ang{y_v,v} - \ang{y,v} \\
    & = \ang{y_v - y,v}  + \ang{y - u_u,u} + \ang{y_v,u} - \ang{y_v,u} \\
    & = \ang{y_v - y, v - u} + \ang{u, y_v - y_u} \\
    & \le \ang{y_v - y, v - u} \\
    & \le ||y_v - y||_2 ||u - v||_2 \le \sqrt{2\ell} ||u-v||_2.
  \end{align*}
  The last two steps follow from the fact that $y_u$ maximizes
  $\ang{y_u,u}$ (so $\ang{u,y_v-y_u}$ is negative), application of
  Cauchy-Schwarz, and from the fact that there are at most $\ell$
  cliques, each of which can contribute a single non-zero entry
  in $y$ or $y_v$.
\end{proof}
\begin{lem}
  \label{lem:mmargdiff}
  Let $f'(\ssp) = \ang{y,\ssp} - \frac{1}{m}\sum_{i=1}^m
  \max_{y': y'_{c(i)} =a(i)} \ang{y', \ssp}$. Then $f(u) - f(v) \le
  \sqrt{2\ell}||u-v||_2$.
\end{lem}
\begin{proof}
  Let $y^\star_{ui} = \argmax_{y: y_{c(i)} = a(i)}\ang{y,u}$, and
  $y^\star_{vi}$ the same for $v$. Then we have,
  \begin{align*}
    f'(u)-f'(v) & = \frac{1}{m} \sum_{i=1}^m \ang{y,u} -
    \ang{y^\star_{ui},u} + \ang{y^\star_{vi},v} - \ang{y,v} \\
    & \le \frac{1}{m}\sum_{i=1}^m \ang{y^\star_{vi}-y, v-u} \\
    & \le \frac{1}{m}\sum_{i=1}^m \sqrt{2\ell}||u-v||_2 = \sqrt{2\ell}||u-v||_2.
  \end{align*}
  Here we have condensed the same argument used to prove the previous lemma.
\end{proof}
\begin{lem}
  \label{lem:phidiff}
  Let $g(\ssp) = \ang{y,\ssp} - t(\ssp).$ Then $g(u) - g(v) \le
  \sqrt{2\ell}||u-v||_2$.
\end{lem}
\begin{proof}
  Plugging in the definition of $t(\ssp)$, we see that $g(\ssp) =
  \alpha f(\ssp) + (1-\alpha)f'(\ssp)$. Therefore from the previous
  two lemmas we have that $g(u) - g(v) = \alpha( f(u) - f(v) ) +
  (1-\alpha)( f'(u) -f'(v)) \le \sqrt{2\ell}||u-v||_2.$
\end{proof}
From Lemma \ref{lem:phidiff}, we see that $g(\cdot)$ is Lipschitz with
constant $\sqrt{2\ell}$. To show that $\phi_f$ and is Lipschitz
continuous with constant $\sqrt{2\ell}/\gamma$, we note that
$L(\phi_f) = L(r_\gamma)\cdot L(g(\cdot)) \le
\sqrt{2\ell}/\gamma.$ Next, we show that $\phi_e$ is Lipschitz
continuous with the same constant. To do, we need to analyize
$L(\phi(y^\star_i(\cdot), \cdot))$.
\begin{lem}
  Let $h_i(\ssp) = t(\ssp) - \ang{y_{i}^\star, \ssp}$. Then $h$ is
  Lipschitz continuous with constant $\sqrt{2\ell}$.
\end{lem}
\begin{proof}
  We once again apply the trick from the proof of Lemma
  \ref{lem:mmargdiff} three times in succession.
  \begin{align*} 
    h(u) - h(v)  = & \; \frac{1}{m}\sum_j \ang {y_{ui}^\star,u} - \alpha \ang{y_u, u} -
    (1-\alpha)\ang{y_{uj}^\star, u} \\
    & - \ang{y_{vi}^\star, v} + \alpha \ang{y_v, v} + (1-\alpha)\ang{y_{vj}^\star, v} \\
    \le & \; \frac{1}{m}\sum_j \ang {y_{ui}^\star,u} - \alpha \ang{y_u, u} - \ang{y_{vi}^\star, v} + \alpha \ang{y_v, v} 
    + (1-\alpha)\ang{y_{vj}^\star, v-u} \\    
    \le & \; \frac{1}{m}\sum_j \ang {y_{ui}^\star,u} - \ang{y_{vi}^\star, v} + \alpha\ang{y_v, v-u} + (1-\alpha)\ang{y_{vj}^\star, v-u} \\
    \le & \; \frac{1}{m}\sum_j \ang {y_{ui}^\star,u - v} + \alpha\ang{y_v, v-u} + (1-\alpha)\ang{y_{vj}^\star, v-u} \\
    = & \; \frac{1}{m}\sum_j \ang{\alpha y_v + (1-\alpha)y_{vj}^\star - y_{ui}^\star, v-u} \\
    \le & \; \frac{1}{m}\sum_j \sqrt{2\ell}||u-v||_2  = \sqrt{2\ell}||u-v||_2.
  \end{align*}
  Here once again we have condensed the argument. We note that we can
  apply the argument to $y_{ui}^\star$ and $y_{vi}^\star$ because both
  are the result of maximization over the same domain.
\end{proof}
Finally, we note that $\phi_e(\cdot) = 1/m\sum_i
r_\gamma(h(\cdot))$. Therefore $L(\phi_e) = 1/m \sum_i
\sqrt{2\ell}/\gamma = \sqrt{2\ell}/\gamma$, thus completing the proof of Lemma \ref{lem:lipschitz}.
\subsection{Proof of Theorem \ref{thm:gencascade-ensemble}}

Proving the theorem reduces to analyzing the Lipschitz constant of the
dominating cost function,
$$\phi(y,\ssp) = r_\gamma\left(\frac{1}{P}\sum_p \pscore{y} -\pthr\right).$$
Let $\phi_p(y,\theta_p) = \pscore{y} - \pthr{x}$. If we let $\ssp^p$ be a
$m$-dimensional vector whose elements correspond to the scores of
every possible clique assignment given $\theta_p$, then we can rewrite
$\phi_p$ as the following:
$$\phi_p(y,\ssp^p) = \ang{y,\ssp^p} - \thr,$$
where we consider $y$ to be a binary $m$-dimensional vector that
selects the active clique assignments in the output $y$. We now make
use of Lemma \ref{lem:phidiff} again to prove the following:
\begin{lem}
  \label{lem:combined}
  $\phi(y,\cdot)$ is Lipschitz with constant $\sqrt{2\ell}$.
\end{lem}
\begin{proof}
  By Lemma \ref{lem:phidiff}, we see that $\phi_p$ is Lipschitz with
  constant $\sqrt{2\ell}$. Since $\phi$ is simply the average of $P$
  such functions, the Lipschitz of $\phi$ must be $\sqrt{2\ell}$ as
  well.
\end{proof}
Finally, to prove the theorem, we note that the loss function can be
represented as a function in $\reals^{mP}$ space by concatenating the
clique scoring vectors of each of the $P$ individual
sub-models. Substituting $mP$ for the dimensionality of the loss
function ($m$ in Theorem \ref{thm:genbound}) yields the desired
result.


}
\label{app:theorem}







\vskip 0.2in
\bibliography{../shared/refs}

\end{document}